\newtheorem{theorem}{Theorem}[section]
\newtheorem{proposition}[theorem]{Proposition}
\newtheorem{lemma}[theorem]{Lemma}
\newtheorem{corollary}[theorem]{Corollary}
\newtheorem{assumption}[theorem]{Assumption}
\newtheorem{remark}[theorem]{Remark}
\newcommand{\reals}{\mathbb{R}}
\DeclareMathOperator*{\E}{\mathbb{E}}
\newcommand{\cd}{\mathcal{D}}
\newcommand{\be}{\mathbf{e}}
\newcommand{\bx}{\mathbf{x}}
\newcommand{\bw}{\mathbf{w}}
\newcommand{\bu}{\mathbf{u}}
\newcommand{\bv}{\mathbf{v}}
\newcommand{\by}{\mathbf{y}}
\newcommand{\Bcal}{\mathcal{B}}
\newcommand{\Dcal}{\mathcal{D}}
\newcommand{\Ncal}{\mathcal{N}}
\newcommand{\norm}[1]{\|#1\|}
\newcommand{\inner}[1]{\langle#1\rangle}
\newcommand{\secref}[1]{Section~\ref{#1}}
\newcommand{\subsecref}[1]{Subsection~\ref{#1}}
\newcommand{\figref}[1]{Figure~\ref{#1}}
\renewcommand{\eqref}[1]{Eq.~(\ref{#1})}
\newcommand{\lemref}[1]{Lemma~\ref{#1}}
\newcommand{\corollaryref}[1]{Corollary~\ref{#1}}
\newcommand{\thmref}[1]{Theorem~\ref{#1}}
\newcommand{\propref}[1]{Proposition~\ref{#1}}
\newcommand{\appref}[1]{Appendix~\ref{#1}}
\newcommand{\zero}{{\mathbf{0}}}
\newcommand{\onefunc}{\mathbbm{1}}
\newcommand{\printfnsymbol}[1]{%
  \textsuperscript{\@fnsymbol{#1}}%
}
\title{Learning a Single Neuron with Bias Using Gradient Descent}
\author[ ]{Gal Vardi\thanks{equal contribution}}
\author[ ]{Gilad Yehudai\printfnsymbol{1}}
\author[ ]{Ohad Shamir}
\affil[ ]{Weizmann Institute of Science}
\affil[ ]{\textit {\{gal.vardi,gilad.yehudai,ohad.shamir\}@weizmann.ac.il}}
\date{}
\date{}
\begin{document}

\maketitle

\begin{abstract}
% We consider the problem of learning a single neuron with a bias term $\bx\mapsto \sigma(\inner{\bw,\bx} + b)$ in the realizable setting with the ReLU activation, using gradient descent. We show several negative results which indicate that in certain scenarios, the single neuron problem with a bias term is more difficult than without a bias term. Next, we characterize (under certain assumptions) all the critical points of the objective, showing that there is a manifold of critical points which are not the global minimum, in contrast to the case of a neuron without bias. We give two convergence results under different assumptions, and show how gradient descent can converge to the global minimum w.h.p. over random 
% initializations. We emphasize that previous results on learning a single neuron either use non-standard algorithms (which are not gradient methods), or have certain assumptions on the data distribution which do not allow 
% a bias term. To prove our results, we develop some tools which may be of independent interest, and can help extending previous results to more general data distributions, and improving the convergence probability of previous results over standard initialization schemes.
We theoretically study the fundamental problem of learning a single neuron with a bias term ($\bx\mapsto \sigma(\inner{\bw,\bx} + b)$) in the realizable setting with the ReLU activation, using gradient descent. Perhaps surprisingly, we show that this is a significantly different and more challenging problem than the bias-less case (which was the focus of previous works on single neurons), both in terms of the optimization geometry as well as the ability of gradient methods to succeed in some scenarios. We provide a detailed study of this problem, characterizing  the critical points of the objective, demonstrating failure cases, and providing positive convergence guarantees under different sets of assumptions. To prove our results, we develop some tools which may be of independent interest, and improve previous results on learning single neurons. 
\end{abstract}

\section{Introduction}

Learning a single ReLU neuron with gradient descent is a fundamental primitive in the theory of deep learning, and has been extensively studied in recent years. Indeed, in order to understand the success of gradient descent on complicated neural networks, it seems reasonable to expect a satisfying analysis of convergence on a single neuron. Although many previous works studied the problem of learning a single neuron with gradient descent, none of them considered this problem with an explicit bias term.

In this work, we study the common setting of learning a single neuron with respect to the squared loss, using gradient descent. We focus on the realizable setting, where the inputs are drawn from a distribution $\Dcal$ on $\reals^{d+1}$, and are labeled by a single target neuron of the form $\bx\mapsto \sigma(\inner{\bv,\bx})$, where $\sigma: \reals \to \reals$ is some non-linear activation function. To capture the bias term, we assume that the distribution $\Dcal$ is such that its first $d$ components are drawn from some distribution $\tilde{\Dcal}$ on $\reals^d$, and the last component is a constant $1$. Thus, the input $\bx$ can be decomposed as $(\tilde{\bx},1)$ with $\tilde{\bx}\sim\tilde{\Dcal}$, the vector $\bv$ can be decomposed as $(\tilde{\bv},b_\bv)$, where $\tilde{\bv}\in \reals^d$ and $b_{\bv}\in \reals$, and the target neuron computes a function of the form $\bx\mapsto \sigma(\inner{\tilde{\bv},\tilde{\bx}}+b_{\bv})$. Similarly, we can define the learned neuron as $\bx\mapsto \sigma(\inner{\tilde{\bw},\tilde{\bx}}+b_{\bw})$, where $\bw=(\tilde{\bw},b_{\bw})$. Overall, we can write the objective function we wish to optimize as follows:
\begin{align}
F(\bw) ~&:=~ 
\E_{\bx\sim\Dcal}\left[\frac{1}{2}\left(\sigma(\bw^\top\bx)-\sigma(\bv^\top\bx)\right)^2\right]\label{eq:single neuron}\\
&~=~\E_{\tilde{\bx}\sim\tilde{\Dcal}}\left[\frac{1}{2}\left(\sigma(\tilde{\bw}^\top\tilde{\bx} + b_\bw)-\sigma(\tilde{\bv}^\top\tilde{\bx} + b_\bv)\right)^2\right].\label{eq:single neuron with bias}
\end{align}
Throughout the paper we consider the commonly used ReLU activation function: $\sigma(x)=\max\{0,x\}$.

Although the problem of learning a single neuron is well studied  (e.g. \cite{soltanolkotabi2017learning, yehudai2020learning, frei2020agnostic, du2017convolutional, kalan2019fitting, tan2019online, mei2016landscape, oymak2018overparameterized}), none of the previous works considered the problem with an additional bias term. Moreover, previous works on learning a single neuron with gradient methods have certain assumptions on the input distribution $\Dcal$, which do not apply when dealing with a bias term (for example, a certain "spread" in all directions, which does not apply when $\Dcal$ is supported on $\{1\}$ in the last coordinate).

Since neural networks with bias terms are the common practice, it is natural to ask how adding a bias term affects the optimization landscape and the convergence of gradient descent. Although one might conjecture that this is just a small modification to the problem, we in fact show that the effect of adding a bias term is very significant, both in terms of the optimization landscape and in terms of which gradient descent strategies can or cannot work. Our main contributions are as follows:
\begin{itemize}[leftmargin=*]
\item We start in Section~\ref{sec:negative results} with some negative results, which demonstrate how adding a bias term makes the problem more difficult. In particular, we show that with a bias term, gradient descent or gradient flow\footnote{I.e., gradient descent with infinitesimal step size.} can sometimes fail with probability close to half over the initialization, even when the input distribution is uniform over a ball. In contrast,  \cite{yehudai2020learning} show that without a bias term, for the same input distribution, gradient flow converges to the global minimum with probability $1$.

% Previous results have shown that if the data distribution is symmetric and satisfy some "spreadness" assumption, then gradient descent converges with probability $1$ over the initialization to the global minimum \cite{yehudai2020learning}. In Section~\ref{sec:negative results} we show that when we add a bias term, gradient descent with an infinitesimal step size (i.e., gradient flow) fails with positive probability over the initialization even when the input distribution is uniform over a ball, and in some cases it fails with probability at least $\frac{1}{2}-o_d(1)$.

\item In Section~\ref{sec:characterization} we give a full characterization of the critical points of the loss function. We show that adding a bias term changes the optimization landscape significantly: In previous works (cf.  \cite{yehudai2020learning}) it has been shown that under mild assumptions on the input distribution, the only critical points are $\bw=\bv$ (i.e., the global minimum) and $\bw=\zero$. We prove that when we have a bias term, the set of critical points has a positive measure, and that there is a cone of local minima where the loss function is flat.

\item 
%We analyze the convergence of gradient descent under assumptions on the input distribution and on the initialization.
%	\begin{itemize}
%	\item In Section~\ref{sec:init better than trivial} we show that if the input distribution has a bounded support and its marginal density is bounded in all directions, and if at the initial point $\bw_0$ the loss is smaller than the loss at $\zero$, then gradient descent converges to the global minimum at a linear rate. 
%	\end{itemize}
	In Sections~\ref{sec:init better than trivial} and~\ref{sec: symmetric dist} we show that gradient descent converges to the global minimum at a linear rate, under some assumptions on the input distribution and on the initialization. We give two positive convergence results, where each result is under different assumptions,  
and thus the results complement each other. 
We also use different techniques for proving each of the results: The analysis in Section~\ref{sec: symmetric dist} follows from some geometric arguments and extends the technique from  \cite{yehudai2020learning, frei2020agnostic}. The analysis in Section~\ref{sec:init better than trivial} introduces a novel technique, not used in previous works on learning a single neuron, and has a more algebraic nature. Moreover, that analysis implies that under mild assumptions, gradient descent with random initialization converges to the global minimum with probability $1-e^{\Omega(d)}$.

\item The best known result for learning a single neuron without bias using gradient descent for an input distribution that is not spherically symmetric, establishes convergence to the global minimum with probability 
%at least $\frac{1}{2}-o_d(1)$ 
close to $\frac{1}{2}$
over the random initialization \citep{yehudai2020learning, frei2020agnostic}. With our novel proof technique presented in \secref{sec:init better than trivial} this result can be improved to probability at least $1-e^{-\Omega(d)}$ (see Remark~\ref{rem:improve no bias}). 
%Moreover, we believe that the technique might be applied also in other settings (e.g., in the agnostic setting) and thus it is of independent interest.

\end{itemize}

\section*{Related work}

Although there are no previous works on learning a single neuron with an explicit bias term, there are many works that consider the problem of a single neuron under different settings and assumptions.

Several papers showed that the problem of learning a single neuron can be solved under minimal assumptions using algorithms which are not gradient-based (such as gradient descent or SGD). These algorithms include the Isotron proposed by \cite{kalai2009isotron} and the GLMtron proposed by \cite{kakade2011efficient}. The GLMtron algorithm is also analyzed in \cite{diakonikolas2020approximation}. These algorithms allow learning a single neuron with bias. We note that these are non-standard algorithms, whereas we focus on the standard gradient descent algorithm.
An efficient algorithm for learning a single neuron with error parameter $\epsilon = \Omega(1/\log(d))$ was also obtained in \cite{goel2017reliably}.

In \cite{mei2016landscape} the authors study the empirical risk of the single neuron problem. However, their analysis does not include the ReLU activation, or adding a bias term. A related analysis is also given in \cite{oymak2018overparameterized}, where the ReLU activation is not considered.

Several papers showed convergence guarantees for the single neuron problem with ReLU activation under certain distributional assumptions, although none of these assumptions allows for a bias term. Notably, \cite{tian2017analytical,soltanolkotabi2017learning,kalan2019fitting,brutzkus2017globally} showed convergence guarantees for gradient methods when the inputs have a standard Gaussian distribution, without a bias term. \cite{du2017convolutional} showed that under a certain subspace eigenvalue assumption a single neuron can be learned with SGD, although this assumption does not allow adding a bias term. \cite{yehudai2020learning,frei2020agnostic} use an assumption about the input distribution being sufficiently "spread" in all directions, which does not allow for a bias term (since that requires an input distribution supported on $\{1\}$ in the last coordinate). \cite{yehudai2020learning} showed a convergence result under the realizable setting, while \cite{frei2020agnostic} considered the agnostic and noisy settings. In \cite{tan2019online} convergence guarantees are given for the absolute value activation, and a specific distribution which does not allow a bias term.

Less directly related, \cite{vardi2020implicit} studied the problem of implicit regularization in the single neuron setting. In  \cite{yehudai2019power,kamath2020approximate} it is shown that approximating a single neuron using random features (or kernel methods) is not tractable in high dimensions. We note that these results explicitly require that the single neuron which is being approximated will have a bias term. Thus, our work complements these works by showing that the problem of learning a single neuron with bias is also learnable using gradient descent (under certain assumptions). Agnostically learning a single neuron with non gradient-based algorithms and hardness of a agnostically learning a single neuron were studied in \cite{diakonikolas2020approximation,goel2019time,goel2020tight}.

\section{Preliminaries}

\paragraph{Notations.}
We use bold-faced letters to denote vectors, e.g., $\bx=(x_1,\ldots,x_d)$. For $\bu \in \reals^d$ we denote by $\norm{\bu}$ the Euclidean norm. We denote $\bar{\bu} = \frac{\bu}{\norm{\bu}}$, namely, the unit vector in the direction of $\bu$. For $1 \leq i \leq j \leq d$ we denote $\bu_{i:j}=(u_i,\ldots,u_j) \in \reals^{j-i+1}$.
We denote by $\onefunc(\cdot)$ the indicator function, for example $\onefunc(t \geq 5)$ equals $1$ if $t \geq 5$ and $0$ otherwise.
We denote by $U([-r,r])$ the uniform distribution over the interval $[-r,r]$ in $\reals$, and by $\mathcal{N}(\zero,\Sigma)$ the multivariate normal distribution with mean $\zero$ and covariance matrix $\Sigma$.
Given two vectors $\bw,\bv$ we let $\theta(\bw,\bv) = \arccos \left( \frac{\inner{\bw,\bv}}{\norm{\bw} \norm{\bv}} \right) =  \arccos(\inner{\bar{\bw},\bar{\bv}}) \in [0,\pi]$. For a vector $\bu \in \reals^{d+1}$ we often denote by $\tilde{\bu} \in \reals^d$ the first $d$ components of $\bu$, and denote by $b_\bu \in \reals$ its last component.

\paragraph{Gradient methods.}
In this paper we focus on the following two standard gradient methods for optimizing our objective $F(\bw)$ from \eqref{eq:single neuron with bias}:
\begin{itemize}[leftmargin=*]
    \item \textbf{Gradient descent:} We initialize at some $\bw_0 \in \reals^{d+1}$, and set a fixed learning rate $\eta >0$. At each iteration $t \geq 0$ we 
    %do a single step in the negative direction of the gradient: 
    have:
    $\bw_{t+1} = \bw_t - \eta \nabla F(\bw_t)$.
    
    \item \textbf{Gradient Flow:} We initialize at some $\bw(0) \in \reals^{d+1}$, and for every time $t \geq 0$, we set $\bw(t)$ to be the solution of the differential equation $\Dot{\bw} = -\nabla F(\bw(t))$. This can be thought of as a continuous form of gradient descent, where the learning rate is infinitesimally small.
\end{itemize}
The gradient of the objective in \eqref{eq:single neuron} is:
\begin{equation}\label{eq:gradient of a single neuron}
\nabla F(\bw) = 
\E_{\bx\sim\Dcal}\left[\left(\sigma(\bw^\top\bx)-\sigma(\bv^\top\bx)\right)
\cdot\sigma'(\bw^\top\bx)\bx\right].
\end{equation}
Since $\sigma$ is the ReLU function, it is differentiable everywhere except for $0$. Practical implementations of gradient methods define $\sigma'(0)$ to be some constant in $[0,1]$. Following this convention, the gradient used by these methods still correspond to \eqref{eq:gradient of a single neuron}. We note that the exact value of $\sigma'(0)$ has no effect on our results.

\section{Negative results}
\label{sec:negative results}

In this section we demonstrate that adding bias to the problem of learning a single neuron with gradient descent can make the problem significantly harder. 

First, on an intuitive level, previous results (e.g., \cite{yehudai2020learning, frei2020agnostic, soltanolkotabi2017learning, du2017convolutional, tan2019online}) considered assumptions on the input distribution, which 
%consist of 
require
enough "spread" in all directions (for example, a strictly positive density in some neighborhood around the origin). 
%On the other hand, in \cite{yehudai2020learning} it is shown that without such "spread" assumptions it is possible to find distributions for which gradient descent will not be able to reach a global minimum when learning a single neuron.
Adding a bias term, even if the first $d$ coordinates of the distribution satisfy a "spread" assumption, will give rise to a direction without "spread", since in this direction the distribution is concentrated on $1$, hence the previous results do not apply.

Next, we show two negative results where the input distribution is uniform on a ball around the origin. We note that due to Theorem 6.4 from \cite{yehudai2020learning}, we know that gradient flow on a single neuron without bias will converge to the global minimum with probability $1$ over the random initialization. The only case where 
%there will not be convergence 
it will fail to converge 
is when $\bw_0$ is initialized in the exact direction $-\bv$, which happens with probability $0$ with standard random initializations.

\subsection{Initialization in a flat region}

If we initialize the bias term in the same manner as the other coordinates, then we can show that gradient descent will fail with probability close to half, even if the input distribution is uniform over a (certain) origin-centered ball:
\begin{theorem}\label{thm:negative example 1}
Suppose we initialize each coordinate of $\bw_0$ (including the bias) according to $U([-1,1])$. Let $\epsilon >0$ and 
%consider the uniform distribution 
let $\tilde{\cd}$ be the uniform distribution supported 
on a ball around the origin in $\reals^d$ of radius $\epsilon$. Then,  w.p $> 1/2 - \epsilon\sqrt{d}$, gradient descent on the objective in 
%\eqref{eq:single neuron} 
\eqref{eq:single neuron with bias}
satisfies $\bw_t=\bw_0$ for all $t$ (namely, it gets stuck at its initial point $\bw_0$). 
\end{theorem}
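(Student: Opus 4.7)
\textbf{Proof proposal for Theorem~\ref{thm:negative example 1}.}
The plan is to identify a simple sufficient condition on $\bw_0$ that makes the ReLU neuron ``dead'' on the entire support of $\Dcal$, thereby killing the gradient at $\bw_0$, and then show that this condition holds for the random initialization with the claimed probability. Concretely, I would first argue that if $\bw_0^\top \bx \leq 0$ for every $\bx$ in the support of $\Dcal$, then inspecting \eqref{eq:gradient of a single neuron} shows that both $\sigma(\bw_0^\top \bx) = 0$ and $\sigma'(\bw_0^\top \bx) = 0$ at every point where $\bw_0^\top \bx < 0$. The only place the integrand might fail to vanish is on the hyperplane $\{\bx : \bw_0^\top \bx = 0\}$, but its intersection with the support of $\Dcal$ has Lebesgue measure zero on the $d$-dimensional ball $\{(\tilde{\bx},1) : \|\tilde{\bx}\|\le\epsilon\}$, so it contributes nothing to the expectation. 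Hence $\nabla F(\bw_0) = 0$, and a trivial induction gives $\bw_{t+1} = \bw_t - \eta \nabla F(\bw_t) = \bw_0$ for all $t$.

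Next I would translate this deadness condition on $\bw_0$ explicitly. Writing $\bw_0 = (\tilde{\bw}_0, b_{\bw_0})$ and $\bx = (\tilde{\bx},1)$ with $\|\tilde{\bx}\|\le\epsilon$, the condition $\bw_0^\top \bx \leq 0$ for all such $\bx$ becomes
\[
\max_{\|\tilde{\bx}\|\le\epsilon}\bigl(\tilde{\bw}_0^\top \tilde{\bx} + b_{\bw_0}\bigr) \;=\; \epsilon \|\tilde{\bw}_0\| + b_{\bw_0} \;\leq\; 0,
\]
i.e.\ $b_{\bw_0} \leq -\epsilon \|\tilde{\bw}_0\|$. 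So the theorem reduces to lower-bounding the probability of this event under the product initialization.

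For the probability computation, $b_{\bw_0}\sim U([-1,1])$ is independent of $\tilde{\bw}_0$, so conditioning on $\tilde{\bw}_0$ gives
\[
\Pr[b_{\bw_0} \leq -\epsilon \|\tilde{\bw}_0\| \mid \tilde{\bw}_0]
\;=\; \max\!\left(0,\; \tfrac{1 - \epsilon \|\tilde{\bw}_0\|}{2}\right).
\]
Since each coordinate of $\tilde{\bw}_0$ lies in $[-1,1]$, one has the deterministic bound $\|\tilde{\bw}_0\| \leq \sqrt{d}$; assuming $\epsilon\sqrt{d}\leq 1$ (otherwise the claim is vacuous, as the asserted lower bound is non-positive), this yields probability at least $\frac{1}{2}-\frac{\epsilon\sqrt{d}}{2} > \frac{1}{2} - \epsilon\sqrt{d}$, as required. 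Taking expectation over $\tilde{\bw}_0$ preserves the bound.

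There is no serious obstacle: the proof is almost entirely conceptual, with the content being the observation that a uniformly random bias has a constant probability of being so negative that the neuron is ``killed'' on the whole (small) support of the data. The only minor care needed is handling the convention $\sigma'(0) \in [0,1]$, which is dealt with by the measure-zero argument above; the assumption that the support's radius $\epsilon$ is small is what makes $-\epsilon\|\tilde{\bw}_0\|$ close to $0$ and hence lets the lower bound on the probability approach $1/2$.
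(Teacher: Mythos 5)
Your proof is correct and takes essentially the same route as the paper: both arguments observe that $\|\tilde{\bw}_0\|\le\sqrt{d}$ forces $|\tilde{\bw}_0^\top\tilde{\bx}|\le\epsilon\sqrt{d}$ on the support, so a bias below $-\epsilon\sqrt{d}$ (which happens with probability $(1-\epsilon\sqrt{d})/2 > 1/2-\epsilon\sqrt{d}$) makes the ReLU dead and the gradient vanish identically. Your version is slightly sharper by conditioning on $\tilde{\bw}_0$ and using $b_{\bw_0}\le-\epsilon\|\tilde{\bw}_0\|$ rather than the cruder $b_{\bw_0}<-\epsilon\sqrt{d}$, but the underlying idea and the resulting bound are the same.
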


Note that by Theorem 6.4 in \cite{yehudai2020learning}, if there is no bias term in the objective, then gradient descent will converge to the global minimum w.p $1$ using this random initialization scheme and this input distribution.
The intuition for the proof is that with constant probability over the initialization, $b_\bw$ is small enough so that $\sigma(\tilde{\bw}^\top \tilde{\bx} + b_\bw) = 0 $ almost surely. If this happens, then the gradient will %get stuck at its initial value. 
be $\zero$ and gradient descent will never move.
The full proof can be found in 
%\appref{appen:proofs from neg results}.
\appref{appen:proofs from neg result1}.
We note that \thmref{thm:negative example 1} is applicable when $\epsilon$ is sufficiently small, e.g. $\epsilon \ll 1/\sqrt{d}$.

% (TODO: move the following sentence to Section 4) In terms of the optimization landscape of the problem, there is a positive probability to initialize in a flat region. This is in contrast to the non-bias case, where for "spread" distributions there have no flat regions with positive measure.

%The above argument can also be readily extended to other distributions, such as uniform in $[-\tau,\tau]$ or $\mathcal{N}(0,\tau I)$ for some $\tau > 0$, where the probability to get stuck will now depend on $\tau$.

\subsection{Targets with negative bias}
\label{sec:negative2}

\thmref{thm:negative example 1} shows a difference between learning with and without the bias term. A main caveat of this example is the requirement that the bias is initialized in the same manner as the other parameters. Standard deep learning libraries (e.g. Pytorch \citep{paszke2019pytorch}) often initialize the bias term to zero by default, while using random initialization schemes for the other parameters. 
%The previous example have shown that this initialization is indeed required in order to avoid initializing in flat regions.

Alas, we now show that even if we initialize the bias term to be exactly zero, and the input distribution is uniform over an arbitrary origin-centered ball, we might fail to converge to the global minimum for certain target neurons:

%\begin{proposition}
%\label{prop:negative2}
%	Let $\tilde{\cd}$ be the uniform distribution on $\Bcal = \{\tilde{\bx} \in \reals^{d}: \norm{\tilde{\bx}} \leq \sqrt{d}\}$.
%	%Let $\bv = \left(1,0,\ldots,0,- \left(\sqrt{d} - \frac{1}{2d\sqrt{d}}\right)\right) \in \reals^{d+1}$.
%	Let $\bv \in \reals^{d+1}$ such that $\tilde{\bv} = (1,0,\ldots,0)$ and $b_\bv = - \left(\sqrt{d} - \frac{1}{2d\sqrt{d}}\right)$.
%	Let $\bw_0 \in \reals^{d+1}$  such that $b_{\bw_0} = 0$ and $\tilde{\bw}_0 \sim \Ncal(\zero,I_{d})$.
%	Then, with probability at least $\frac{1}{2} - o_d(1)$ over the choice of $\bw_0$, gradient flow does not converge to the global minimum.
%\end{proposition}

\begin{theorem}
\label{thm:negative2}
	Let $\tilde{\cd}$ be the uniform distribution on $\Bcal = \{\tilde{\bx} \in \reals^{d}: \norm{\tilde{\bx}} \leq r\}$ for some $r>0$.
	Let $\bv \in \reals^{d+1}$ such that $\tilde{\bv} = (1,0,\ldots,0)^\top$ and $b_\bv = - \left(r - \frac{r}{2d^2}\right)$.
	Let $\bw_0 \in \reals^{d+1}$  such that $b_{\bw_0} = 0$ and $\tilde{\bw}_0$ is drawn from the uniform distribution on a sphere of radius $\rho>0$.
	Then, with probability at least $\frac{1}{2} - o_d(1)$ over the choice of $\bw_0$, gradient flow does not converge to the global minimum.
\end{theorem}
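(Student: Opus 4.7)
\textbf{Proof proposal for Theorem~\ref{thm:negative2}.}

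The intuition is that the chosen bias $b_\bv = -(r-r/(2d^2))$ makes the target neuron $\sigma(\bv^\top\bx) = \sigma(\tilde{x}_1 - r + r/(2d^2))$ active only on the thin spherical cap $A_\bv := \{\tilde{\bx}\in\Bcal : \tilde{x}_1 > r - r/(2d^2)\}$, on which its values are at most $r/(2d^2)$; so the target is essentially negligible in $L^2$. On the ``dead zone'' $\Dcal_{\mathrm{dead}} := \{\bw : b_\bw + r\|\tilde{\bw}\| \leq 0\}$ we have $\sigma(\bw^\top\bx)=0$ a.s., so $\nabla F(\bw)=\zero$ and $F(\bw) = \tfrac{1}{2}\E[\sigma(\bv^\top\bx)^2] > 0$, making $\Dcal_{\mathrm{dead}}$ a whole set of non-optimal critical points. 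The plan is to show that with probability $\geq \tfrac{1}{2} - o_d(1)$ over the initialization, gradient flow is trapped into (or asymptotically approaches) $\Dcal_{\mathrm{dead}}$, and hence does not converge to the global minimum.

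By the rotational symmetry of $\tilde{\Dcal}$ around $\tilde{\bv} = \be_1$, the gradient flow preserves the plane spanned by $\tilde{\bw}(t)$ and $\be_1$, and the dynamics reduce to a three-parameter ODE in $(a,c,b) := (\tilde{w}_1,\|\tilde{\bw}_{2:d}\|, b_\bw)$. Define the bad event $E := \{a_0 \leq -2\rho/d\}$. By spherical symmetry of $\tilde{\bw}_0$ together with the standard $1/\sqrt{d}$-concentration of the first coordinate of a uniform point on the sphere of radius $\rho$ in $\reals^d$, $\Pr(E) \geq \tfrac{1}{2} - O(1/\sqrt{d}) = \tfrac{1}{2} - o_d(1)$.

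Condition on $E$. Since $\tilde{x}_1^2 + \|\tilde{\bx}_{2:d}\|^2 \leq r^2$ forces $\|\tilde{\bx}_{2:d}\| \leq r/d$ on $A_\bv$, and since $a_0 < 0$, for every $\tilde{\bx}\in A_\bv$ we get
\[
\tilde{\bw}_0^\top\tilde{\bx} \,\leq\, a_0 (r - r/(2d^2)) + c_0 \cdot r/d \,\leq\, -\rho r/(2d) \,<\, 0,
\]
so $A_{\bw_0}\cap A_\bv = \emptyset$. Whenever the active regions of the learned and target neurons are disjoint, $\nabla F(\bw) = \E[\sigma(\bw^\top\bx)\bx] = \nabla \hat{F}(\bw)$, where $\hat{F}(\bw) := \tfrac{1}{2}\E[\sigma(\bw^\top\bx)^2]$ is the ``trivial-target'' objective. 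Short computations give $\dot{b}_\bw = -\E[\sigma(\bw^\top\bx)] \leq 0$, and, whenever $b_\bw \leq 0$, $\tfrac{d}{dt}\|\tilde{\bw}\|^2 = -4\hat{F}(\bw) + 2b_\bw\E[\sigma(\bw^\top\bx)] \leq 0$. In particular $\|\tilde{\bw}(t)\|\leq\rho$ and $b_\bw(t)\leq 0$ for all $t$ along the trivial flow; a similar computation shows $b_\bw + r\|\tilde{\bw}\|$ is monotone non-increasing along it, so $\bw(t) \to \Dcal_{\mathrm{dead}}$. The remaining step is to extend this from the trivial flow to the true flow by showing that the disjointness $A_{\bw(t)}\cap A_\bv = \emptyset$ is preserved for all $t$.

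The main obstacle is exactly this preservation. One strategy is a Gronwall-style comparison: the target's contribution to $\nabla F$ is at most $\mathrm{vol}(A_\bv)\cdot\max_{A_\bv}|\sigma(\bv^\top\bx)| = e^{-\Omega(d)}\cdot O(r/d^2)$, an exponentially small perturbation of the trivial flow. A second strategy is a direct boundary argument: at any $\bw$ where $\sup_{\tilde\bx\in A_\bv}\bw^\top\bx$ is about to reach $0$, show that the gradient of $F$ points back into the disjointness region. Once this invariant is in place the conclusion is immediate: $\bw(t)$ approaches $\Dcal_{\mathrm{dead}}$ along the true flow, hence $F(\bw(t)) \to \tfrac{1}{2}\E[\sigma(\bv^\top\bx)^2] > 0$, and gradient flow does not converge to the global minimum.
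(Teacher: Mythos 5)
Your proposal sets up the right skeleton and even correctly identifies the main difficulty, but then explicitly leaves it open: "\emph{The main obstacle is exactly this preservation.}" You sketch two strategies (Gronwall, boundary argument) without carrying either out, so as written this is not a proof.

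Moreover, the Gronwall framing is off on a conceptual level, which suggests the way to close the gap would not have been found along that route. Once $A_{\bw}\cap A_{\bv}=\emptyset$, the $\bv$-dependent term in $\nabla F(\bw)$ is not a small perturbation of the trivial flow — it is \emph{exactly zero}. There is nothing to compare; the true flow and the trivial flow coincide on this region. The question is purely whether the region is forward-invariant, and a Gronwall-type estimate (which degrades over time) is the wrong tool for an infinite-horizon invariance statement.

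The paper's proof (Lemmas \ref{lem:conditions for init}--\ref{lem:negative2 dynamics}) closes the gap with one observation that is strictly stronger than your "plane is preserved" remark: once disjointness holds, $\nabla_{\tilde{\bw}}F(\bw)=\E_{\tilde{\bx}}[\sigma(\tilde{\bw}^\top\tilde{\bx}+b_\bw)\tilde{\bx}]$ depends on $\tilde{\bw}$ only through $\|\tilde{\bw}\|$ and $b_\bw$, so by spherical symmetry it is \emph{radial}, i.e.\ a non-negative multiple of $\tilde{\bw}$ itself. Thus $\tilde{\bw}(t)=\gamma_t\tilde{\bw}_0$ with $\gamma_t>0$, and $\dot b_\bw=-\E_\bx[\sigma(\bw^\top\bx)]\le 0$ keeps $b_\bw(t)\le 0$. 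On the set $\{(\gamma\tilde{\bw}_0,b):\gamma>0,\ b\le 0\}$ disjointness holds by the elementary implication $\gamma\tilde{\bw}_0^\top\tilde{\bx}+b\ge 0 \Rightarrow \tilde{\bw}_0^\top\tilde{\bx}\ge 0$, so this set is forward-invariant and disjointness propagates deterministically. That radial invariance — not just preservation of the plane $\mathrm{span}\{\tilde{\bw}_0,\be_1\}$, which holds even without disjointness — is the missing ingredient in your argument. You also do not need your claim that $\bw(t)\to\Dcal_{\mathrm{dead}}$: once disjointness holds for all $t$, $F(\bw_t)=F(\zero)+\tfrac12\E[\sigma(\bw_t^\top\bx)^2]\ge F(\zero)>0=F(\bv)$ already prevents convergence to the global minimum.

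Two minor points: your initialization event $E=\{a_0\le -2\rho/d\}$ with the crude bound $c_0\le\rho$ is a slightly different (and arguably simpler) route than the paper's, which bounds $\tilde w_1$ and $\|\tilde{\bw}_{2:d}\|$ via a Gaussian proxy; both give $\tfrac12-o_d(1)$ and your version is fine. And your identity $\tfrac{d}{dt}\|\tilde{\bw}\|^2=-4\hat F(\bw)+2b_\bw\E[\sigma(\bw^\top\bx)]$ is correct, but once you have the radial invariance, that computation is no longer needed for the conclusion.
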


We prove the theorem in Appendix~\ref{appen:proofs from neg result2}. The intuition behind the proof is the following: The target neuron has a large negative bias, so that only a small (but positive) measure of input points are labelled as non-zero. By randomly initializing $\tilde{\bw}$, with probability close to $\frac{1}{2}$ there are no inputs that both $\bv$ and $\bw$ label positively. 
Since the gradient is affected only by inputs that $\bw$ labels positively, then during the optimization process the gradient will be independent of the direction of $\bv$, and $\bw$ will not converge to the global minimum. 
%This example indicates that to ensure convergence to the global minimum, we need to restrict the bias term of $\bv$ from being too negative.

%\begin{remark}\label{rem:negative bias}
%	Proposition~\ref{prop:negative2} shows that gradient flow is not guaranteed to converge to a global minimum when $b_\bv$ is negative. However, for $\bv = \left(1,0,\ldots,0,- \left(\sqrt{d} - \frac{1}{2d\sqrt{d}}\right)\right)$ considered in the proposition, gradient flow converges to a local minimum with loss $F(\zero)$, and we have
%	\begin{align*}
%		F(\zero) 
%		&= \frac{1}{2} \cdot \E_\bx \left[ \left( \sigma(\bv) \right)^2\right]
%		= \frac{1}{2} \cdot \E_\bx \left[ \onefunc(\bv^\top \bx \geq 0) \left(\bv^\top \bx \right)^2 \right]
%		\\
%		&= \frac{1}{2} \cdot \E_\bx \left[ \onefunc \left(x_1 \geq \sqrt{d} - \frac{1}{2d\sqrt{d}} \right) \left(x_1 -  \left(\sqrt{d} - \frac{1}{2d\sqrt{d}}  \right)\right)^2 \right]
%		\\
%		&\leq \frac{1}{2} \cdot \Pr_\bx \left[ x_1 \geq  \sqrt{d} - \frac{1}{2d\sqrt{d}} \right]
%		\leq e^{-\Omega(d)}~.
%	\end{align*}
%	Thus, a local minimum with loss $F(\zero)$ is almost as good as the global minimum.
%\end{remark}

\begin{remark}\label{rem:negative bias}
	\thmref{thm:negative2} shows that gradient flow is not guaranteed to converge to a global minimum when $b_\bv$ is negative, instead it converges to a local minimum with a loss of $F(\bm{0})$. However, the loss $F(\bm{0})$ is determined by the input distribution. Take $\bv = \left(1,0,\ldots,0,- \left(r - \frac{r}{2d^2}\right)\right)^\top$ considered in the theorem. On one hand, for a uniform distribution on a ball of radius $r$ as in the theorem we have:
	\begin{align*}
		F(\zero) 
		&= \frac{1}{2} \cdot \E_\bx \left[ \left( \sigma(\bv^\top \bx) \right)^2\right]
		= \frac{1}{2} \cdot \E_\bx \left[ \onefunc(\bv^\top \bx \geq 0) \left(\bv^\top \bx \right)^2 \right]
		\\
		&= \frac{1}{2} \cdot \E_\bx \left[ \onefunc \left(x_1 \geq r - \frac{r}{2d^2} \right) \left(x_1 -  \left(r - \frac{r}{2d^2}  \right)\right)^2 \right]
		\\
		&\leq \frac{1}{2} \cdot \frac{r^2}{4d^4} \cdot \Pr_\bx \left[ x_1 \geq  r \left(1 - \frac{1}{2d^2} \right) \right]
		\leq r^2 e^{-\Omega(d)}~.
	\end{align*}
	Thus, for any reasonable $r$, a local minimum with loss $F(\zero)$ is almost as good as the global minimum. On the other hand, take a distribution $\tilde{\Dcal}$ with a support bounded in a ball of radius $r$, such that half of its mass is uniformly distributed in $A:=\left\{\tilde{\bx}\in\reals^d:x_1 >r- \frac{r}{4d^2}\right\}$, and the other half is uniformly distributed in $\Bcal \setminus A$. In this case, it is not hard to see that the same proof as in \thmref{thm:negative2} works, and gradient flow will converge to a local minimum with loss $F(\bm{0}) = \Omega\left(\frac{r}{d^2}\right)$, which is arbitrarily large if $r$ is large enough.
\end{remark}

Although in the example given in  \thmref{thm:negative2} the objective at $\bw=\bm{0}$ is almost as good as the objective at $\bw=\bv$, we emphasize that w.p almost $\frac{1}{2}$ gradient flow cannot reach the global minimum even asymptotically. On the other hand, in the bias-less case by Theorem 6.4 in \cite{yehudai2020learning} gradient flow on the same input distribution will reach the global minimum w.p $1$. Also note that the scale of the initialization of $\bw_0$ has no effect on the result.

\section{Characterization of the critical points}\label{sec:characterization}

In the previous section we have shown two examples where gradient methods on the problem of a single neuron with bias will either get stuck in a flat region, or converge to a local minimum. In this section we delve deeper into the 
%problems 
examples
presented in the previous section, and give a full characterization of the critical points of the objective. We will use the following assumption on the input distribution:

\begin{assumption}\label{assum:characterization}
    %Let $p(\tilde{\bx})$ be the pdf of the 
    The
    distribution $\tilde{\Dcal}$ on $\reals^d$
    has a density function $p(\tilde{\bx})$, and
    %. Then 
    there are $\beta,c >0$, such that $\tilde{\Dcal}$ is supported on $\{\tilde{\bx}:\norm{\tilde{\bx}} \leq c\}$, and
    for every $\tilde{\bx}$ in the support we have $p(\tilde{\bx}) \geq \beta$.
%     There are $\alpha,\beta > 0$ s.t the distribution $\tilde{\Dcal}$ on $\reals^d$ satisfies the following: For any vector $\tilde{\bw}\neq \tilde{\bv}$, let $\tilde{\Dcal}_{\tilde{\bw},\tilde{\bv}}$ 
% 	denote the marginal 
% 	distribution of $\tilde{\bx}$ on the subspace spanned by $\tilde{\bw},\tilde{\bv}$ (as a 
% 	distribution 
% 	over $\reals^2$). Then any such distribution has a density function 
% 	$p_{\tilde{\bw},\tilde{\bv}}(\tilde{\bx})$ such that $\inf_{\tilde{\bx}:\norm{\tilde{\bx}}\leq 
% 		\alpha}p_{\tilde{\bw},\tilde{\bv}}(\tilde{\bx})\geq \beta$. 
% 	\item The distribution $\tilde{\Dcal}$ is supported on $\{\tilde{\bx}\in\reals^d:\norm{\tilde{\bx}}\leq \alpha\}$. 
\end{assumption}

The assumption essentially states that the distribution over the first $d$ coordinates (\emph{without} the bias term) has enough "spread" in all directions, and covers standard distributions such as uniform over a ball of radius $c$. Other similar assumptions are made in previous works (e.g. \cite{yehudai2020learning,frei2020agnostic}). We note that in \cite{yehudai2020learning} it is shown that without any assumption on the distribution, it is impossible to ensure convergence, hence we must have some kind of assumption for this problem to be learnable with gradient methods. Under this assumption we can characterize the critical points of the objective. 

% The second assumption restricting to bounded distributions is made for simplicity. It can be relaxed to having a distribution with exponentially bounded tail, e.g. standard Gaussian. In this case, some of the critical points (where the gradient is zero), instead will have a gradient which is exponentially small. We emphasize that when running optimization algorithms on finite-precision machines, which are used in practice, this "almost" critical points behaves essentially like critical points since the gradient is extremely small. 

% We emphasize that with a bias term, there is a non-zero measure manifold of critical points. On the other hand, without a bias term the only critical point besides the global minima (under mild assumptions on the data distribution) is at the origin $\bw = 0$. We can show the following characterization of the optimization landscape of the objective:

\begin{theorem}\label{thm:characterization}
Consider the objective in \eqref{eq:single neuron with bias} with $\bv\neq \bm{0}$, and assume that the distribution $\tilde{\Dcal}$ on the first $d$ coordinates satisfies Assumption \ref{assum:characterization}.
% and that it is supported on $\{\tilde{\bx}\in\reals^d:\norm{\tilde{\bx}}\leq c\}$, where $c$ is from Assumption \ref{assum:characterization}
Then $\bw\neq \mathbf{0}$ is a critical point of $F$ (i.e., is a root of \eqref{eq:gradient of a single neuron}) if and only if it satisfies one of the following:
\begin{itemize}[leftmargin=*]
    \item $\bw=\bv$, in which case $\bw$ is a global minimum.
    \item $\bw=(\tilde{\bw},b_{\bw})$ where $\tilde{\bw} = \zero$ and $b_\bw <0$.
    \item $\tilde{\bw}\neq 0$ and $-\frac{b_\bw}{\norm{\tilde{\bw}}} \geq c$.
\end{itemize}
In the latter two cases, $F(\bw)=F(\bm{0})$. Hence, if $\mathbf{0}$ is not a global minimum, then $\bw$ is not a global minimum. 
\end{theorem}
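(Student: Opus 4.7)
My plan is to verify the ``if'' direction by a direct computation, and for the ``only if'' direction to show that any nontrivial critical point at which the preactivation $\bw^\top\bx$ is positive on a set of positive $\Dcal$-measure must equal $\bv$. For the ``if'' direction, case (i) is immediate since $\bv$ is the global minimizer, and in cases (ii)--(iii), Cauchy--Schwarz gives $\bw^\top\bx=\tilde{\bw}^\top\tilde{\bx}+b_\bw\le\|\tilde{\bw}\|c+b_\bw\le 0$ for every $\tilde{\bx}$ with $\|\tilde{\bx}\|\le c$ (using $b_\bw<0$ in (ii) and $-b_\bw/\|\tilde{\bw}\|\ge c$ in (iii)). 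Under Assumption~\ref{assum:characterization} the support of $\tilde{\Dcal}$ lies in this ball, so both $\sigma(\bw^\top\bx)$ and $\sigma'(\bw^\top\bx)$ vanish almost surely, making the integrand in \eqref{eq:gradient of a single neuron} vanish and yielding $F(\bw)=\frac{1}{2}\E[\sigma(\bv^\top\bx)^{2}]=F(\zero)$.

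For the ``only if'' direction, I take $\bw\neq\zero$ with $\nabla F(\bw)=\zero$ and define $H=\{\bx:\bw^\top\bx>0\}$ and $K=\{\bx:\bv^\top\bx>0\}$. If $\Dcal(H)=0$, then $\bw^\top\bx\le 0$ on the support, which under the density lower bound $p\ge\beta$ on the ball of radius $c$ is equivalent to $\|\tilde{\bw}\|c+b_\bw\le 0$; combined with $\bw\neq\zero$ this yields case (ii) when $\tilde{\bw}=\zero$ (forcing $b_\bw<0$) and case (iii) when $\tilde{\bw}\neq\zero$. So I reduce to the case $\Dcal(H)>0$, where substituting $\sigma'(\bw^\top\bx)\sigma(\bw^\top\bx)=\mathbf{1}_H\cdot\bw^\top\bx$ and $\sigma'(\bw^\top\bx)\sigma(\bv^\top\bx)=\mathbf{1}_{H\cap K}\cdot\bv^\top\bx$ in \eqref{eq:gradient of a single neuron} rewrites the critical-point equation as
\begin{equation*}
\E_{\bx\sim\Dcal}\!\bigl[\mathbf{1}_{H}\,(\bw^\top\bx)\,\bx\bigr] \;=\; \E_{\bx\sim\Dcal}\!\bigl[\mathbf{1}_{H\cap K}\,(\bv^\top\bx)\,\bx\bigr].
\end{equation*}

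The crux of the proof is then to pair both sides with the direction $\bw-\bv$ and split $\mathbf{1}_H=\mathbf{1}_{H\cap K}+\mathbf{1}_{H\setminus K}$, which after rearranging gives
\begin{equation*}
\E\!\bigl[\mathbf{1}_{H\cap K}\bigl((\bw-\bv)^\top\bx\bigr)^{2}\bigr] \;+\; \E\!\bigl[\mathbf{1}_{H\setminus K}\,(\bw^\top\bx)\,(\bw-\bv)^\top\bx\bigr] \;=\; 0.
\end{equation*}
Both terms are nonnegative: the first is an expected square, and on $H\setminus K$ we have $\bw^\top\bx>0$ and $-\bv^\top\bx\ge 0$, so $(\bw-\bv)^\top\bx\ge\bw^\top\bx>0$. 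Together with $p\ge\beta>0$ on the support, this forces $\Dcal(H\setminus K)=0$ and $(\bw-\bv)^\top\bx=0$ almost everywhere on $H\cap K$. Since $\Dcal(H)>0$ and $\Dcal(H\setminus K)=0$, the set $H\cap K$, viewed in the $\tilde{\bx}$-coordinates as an open half-plane slab, has positive Lebesgue measure inside the ball; hence the affine map $\tilde{\bx}\mapsto(\tilde{\bw}-\tilde{\bv})^\top\tilde{\bx}+(b_\bw-b_\bv)$ vanishes on a nonempty open subset of $\reals^{d}$, so both its linear and constant parts must vanish, giving $\bw=\bv$. The main obstacle is finding the right test direction: inner products with $\bw$ or $\bv$ alone fail to produce a sign-definite identity, whereas pairing with $\bw-\bv$ collapses the $H\cap K$ contribution into a perfect square while turning the $H\setminus K$ contribution into a product of two strictly positive factors, after which nonnegativity closes the argument.
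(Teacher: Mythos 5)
Your proof is correct and uses essentially the same approach as the paper: both test the gradient against the direction $\bw-\bv$, split the indicator into $\{\bw^\top\bx>0,\bv^\top\bx>0\}$ and $\{\bw^\top\bx>0,\bv^\top\bx\le0\}$, and observe that the two resulting terms are nonnegative. You phrase the argument in the contrapositive (assume $\nabla F(\bw)=\zero$ and derive the classification) while the paper argues directly that $\inner{\nabla F(\bw),\bw-\bv}>0$ whenever $\bw$ lies outside the listed set, but the key decomposition and the density-lower-bound step that turns positive $\Dcal$-measure into positive Lebesgue measure are the same.
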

We note that $F(\mathbf{0})=\frac{1}{2}\E_{\bx}[\sigma(\bv^\top \bx)^2]$, so $\mathbf{0}$ is a global minimum only if the target neuron returns $0$ with probability $1$.
%Let $\bw\in\reals^{d+1}$. If $\tilde{\bw} =0$ and $b_\bw <0$, or $\tilde{\bw}\neq 0$ and $-\frac{b_\bw}{\norm{\tilde{\bw}}} \geq c$, then $\bw$ is a critical point of $F(\bw)$ with $F(\bw) = F(\bm{0})>0$. Moreover, the only critical points of the objective in $\reals^{d+1} \setminus \{\zero\}$ are those defined above, and the global minimum $\bw=\bv$.

% assume the data distribution $\mathcal{D'}$ is supported on $\{\tilde{\bx}\in\reals^d: \norm{\tilde{\bx}} \leq \alpha\}$, and that assumption ??? holds (\note{the spread assumption}). Then:
% \begin{enumerate}
%     \item Let $\bw\in\mathbb{R}^{d+1}$ with $\tilde{\bw} = 0$ and  $b_\bw < 0$, then $\bw$ is a local minimum.
%     \item Let $\bw\in\mathbb{R}^{d+1}$ with $\tilde{\bw} \neq 0$ and  $-\frac{b_\bw}{\norm{\tilde{\bw}}} > \alpha $, then $\bw$ is a local minimum.
%     \item  Assume the target $\bv$ satisfies that $\tilde{\bv} \neq 0$ and $-\frac{b_\bv}{\norm{\tilde{\bv}}} <  \alpha$. Then, the only global minima is at $\bw = \bv$, the local minima from items (1) and (2) are not global, and these are the only local minima.
%     \item Assume that the target $\bv$ satisfies that $\tilde{\bv} \neq 0$ and $-\frac{b_\bv}{\norm{\tilde{\bv}}} >  \alpha$. Then, the local minima from items (1) and (2) are global minima, these are the only global minima, and there are no local minima which are not global.
% \end{enumerate}
%\end{proposition}

\begin{remark}[The case $\bw=\mathbf{0}$] 
We intentionally avoided characterizing the point $\bw=\bm{0}$, since the objective is not differentiable there (this is the only point of non-differentiability), and the gradient there is determined by the value of the ReLU activation at $0$. For $\sigma'(0)=0$ the gradient at $\bw=\bm{0}$ is zero, and this is a non-differentiable saddle point. For $\sigma'(0)=1$ (or any other positive value), the gradient at $\bw=\bm{0}$ is non-zero, and it will point at a direction which depends on the distribution. We note that in \cite{soltanolkotabi2017learning} the authors define $\sigma'(0)=1$, and use a symmetric distribution, in which case the gradient at $\bw=\bm{0}$ points exactly at the direction of the target $\bv$. This is a crucial part of their convergence analysis.
\end{remark}

We emphasize that with a bias term, there is a non-zero measure manifold of critical points (corresponding to the third bullet in the theorem). On the other hand, without a bias term the only critical point besides the global minimum (under mild assumptions on the input distribution) is at the origin $\bw = \zero$ (cf. \cite{yehudai2020learning}). The full proof is in \appref{appen:proofs from charac}.

The assumption on the support of $\tilde{\cd}$ is made for simplicity. It can be relaxed to having a distribution with exponentially bounded tail, e.g. standard Gaussian. In this case, some of the critical points will instead have a non-zero gradient which is exponentially small. We emphasize that when running optimization algorithms on finite-precision machines, which are used in practice, these "almost" critical points behave essentially like critical points since the gradient is extremely small.

% \begin{remark}
%     We have two main assumption for \propref{prop:characterization}: the "spread" assumption on the first $d$ coordinates of $\tilde{\Dcal}$, and that the data distribution is bounded.
%     \begin{itemize}
%         \item The "spread" distribution is standard and used in many previous works (e.g. \cite{yehudai2020learning,soltanolkotabi2017learning,frei2020agnostic}). In \cite{yehudai2020learning} it is shown that without any assumption on the distribution, it may be impossible to ensure convergence, hence we must have some kind of assumption.
%         \item The bounded distribution assumption can be relaxed to having a distribution with exponentially bounded tail, e.g. standard Gaussian. In this case, item (1) of the proposition characterize all the local minima. The cone of local minima defined in item (2) will now be an "almost" flat region, where the norm of the gradient is exponentially small. When running optimization algorithm on finite-precision machines which are used in practice, this "almost" flat region behaves essentially like a flat region since the gradient is exponentially small.
%     \end{itemize}
% \end{remark}

Revisiting the negative examples from \secref{sec:negative results}, the first example (\thmref{thm:negative example 1}) shows that if we do not initialize the bias of $\bw$ to zero, then there is a positive probability to initialize at a critical point which is not the global minimum. The second example (\thmref{thm:negative2}) shows that even if we initialize the bias of $\bw$ to be zero, there is still a positive probability to converge to a critical point which is not the global minimum. Hence, in order to guarantee convergence we need to have more assumptions on either the input distribution, the target $\bv$ or the initialization. In the next section, we show that adding such assumptions are indeed sufficient to get positive convergence guarantees. 

%\section{Convergence when initializing with loss slightly better than trivial}
\section{Convergence for initialization with loss slightly better than trivial}
\label{sec:init better than trivial}

In this section, we show that under some assumptions on the input distribution, if gradient descent is initialized such that  $F(\bw_0) < F(\zero)$ then it is guaranteed to converge to the global minimum. In \subsecref{subsec:random init}, we study under what conditions this is likely to occur with standard random initialization. 

\subsection{Convergence if $F(\bw_0)<F(\mathbf{0})$} \label{subsec:init better}

To state our results, we need the following assumption:
\begin{assumption}
\label{ass:init better than trivial}
	\;
	\begin{enumerate}[leftmargin=*]
%		\item The distribution $\tilde{\cd}$ over the first $d$ coordinates is spherically symmetric.
		\item The distribution $\cd$ is supported on $\{\bx \in \reals^{d+1}: \norm{\bx} \leq c\}$ for some $c \geq 1$. 
		\item  The distribution $\tilde{\cd}$ over the first $d$ coordinates is bounded in all directions: there is $c' > 0$ such that for every $\tilde{\bu}$ with $\norm{\tilde{\bu}}=1$ and every $a \in \reals$ and $b \geq 0$, we have $\Pr_{\tilde{\bx} \sim \tilde{\cd}}\left[\tilde{\bu}^\top \tilde{\bx} \in [a,a+b] \right] \leq b \cdot c'$.
		%Namely, the distribution is not too concentrated in a short interval.
		\item We assume w.l.o.g. that $\norm{\bv} = 1$ and $c' \geq 1$.
	\end{enumerate}
\end{assumption}

%The assumption that $\norm{\bv} = 1$ and $c' \geq 1$
Assumption (3)  
helps simplifying some expressions in our convergence result, and is not necessary.
Assumption (2) requires that the distribution is not too concentrated in a short interval. For example, if $\tilde{\cd}$ is spherically symmetric then the marginal density of the first (or any other) coordinate is bounded by $c'$.
Note that we do not assume that $\tilde{\cd}$ is spherically symmetric. 

\begin{theorem}
\label{thm:smooth linear rate}
	Under Assumption~\ref{ass:init better than trivial} we have the following.
	Let $\delta>0$ and let $\bw_0 \in \reals^{d+1}$ such that $F(\bw_0) \leq F(\zero) - \delta$. 
	Let $\gamma =  \frac{\delta^3}{3 \cdot 12^2(\norm{\bw_0}+2)^3c^8c'^2}$. 
	Assume that gradient descent runs starting from $\bw_0$ with step size $\eta \leq \frac{\gamma}{c^4}$. Then, for every $t$ we have 
	\[
		\norm{\bw_t - \bv}^2 \leq \norm{\bw_0 - \bv}^2 \left(1-\gamma \eta \right)^t~.
	\]
\end{theorem}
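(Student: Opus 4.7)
The proof is an induction on $t$, maintaining the invariants $F(\bw_t) \le F(\bw_0) \le F(\zero) - \delta$ and $\|\bw_t\| \le \|\bw_0\| + 2$ (the latter follows from the contraction being inductively shown, together with $\|\bv\|=1$ and the triangle inequality). Expanding the GD update gives
\[
\|\bw_{t+1} - \bv\|^2 \;=\; \|\bw_t - \bv\|^2 \;-\; 2\eta \langle \nabla F(\bw_t), \bw_t - \bv\rangle \;+\; \eta^2 \|\nabla F(\bw_t)\|^2,
\]
so the entire proof reduces to two pointwise estimates at any $\bw$ satisfying the invariants: (A) $\langle \nabla F(\bw), \bw - \bv\rangle \ge \gamma \|\bw - \bv\|^2$, and (B) $\|\nabla F(\bw)\|^2 \le c^4 \|\bw - \bv\|^2$. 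With $\eta \le \gamma/c^4$ these immediately yield $\|\bw_{t+1} - \bv\|^2 \le (1 - \gamma\eta)\|\bw_t - \bv\|^2$. Bound (B) is routine: $|\sigma'|\le 1$, $\|\bx\|\le c$, Jensen, and $|\sigma(\bw^\top\bx) - \sigma(\bv^\top\bx)| \le |(\bw-\bv)^\top\bx|$ yield $\|\nabla F(\bw)\|^2 \le 2c^2 F(\bw) \le c^4 \|\bw-\bv\|^2$.

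The bulk of the work is (A). Using $\sigma'(z)\,z = \sigma(z)$ and a case analysis on the signs of $\bw^\top\bx$ and $\bv^\top\bx$, the contributions from outside the overlap region $A\cap B := \{\bw^\top\bx\ge 0,\, \bv^\top\bx\ge 0\}$ are each nonnegative, so
\[
\langle \nabla F(\bw), \bw-\bv\rangle \;\ge\; \E\bigl[\onefunc_{A\cap B}\,((\bw-\bv)^\top\bx)^2\bigr].
\]
The loss slack converts to a mass bound on $A\cap B$: direct expansion gives $F(\zero) - F(\bw) \le \E[\sigma(\bw^\top\bx)\sigma(\bv^\top\bx)]$, and combining with the pointwise bound $\sigma(\bw^\top\bx)\sigma(\bv^\top\bx)\le c^2\|\bw\|$ on the support yields $\Pr[A\cap B] \ge \delta/(c^2\|\bw\|)$. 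Assumption~\ref{ass:init better than trivial}(2) applied in the direction $\tilde{\bw}-\tilde{\bv}$ gives the anticoncentration $\Pr[|(\bw-\bv)^\top\bx|\le t]\le 2tc'/\|\tilde{\bw}-\tilde{\bv}\|$, since $(\bw-\bv)^\top\bx = (\tilde{\bw}-\tilde{\bv})^\top\tilde{\bx} + (b_\bw - b_\bv)$ and only $\tilde{\bx}$ is random.

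A case split completes (A). When $\|\tilde{\bw}-\tilde{\bv}\| \ge \|\bw-\bv\|/(4c)$, pick $t$ of order $\delta\|\tilde{\bw}-\tilde{\bv}\|/(c^2 c'\|\bw\|)$ so the anticoncentration mass is at most $\tfrac{1}{2}\Pr[A\cap B]$; this gives $\E[\onefunc_{A\cap B}((\bw-\bv)^\top\bx)^2] \ge t^2\,\Pr[A\cap B]/2$, of order $\delta^3\|\bw-\bv\|^2/((\|\bw_0\|+2)^3 c^8 c'^2)$ after using $\|\bw\| \le \|\bw_0\|+2$. In the complementary regime $\|\tilde{\bw}-\tilde{\bv}\| < \|\bw-\bv\|/(4c)$ the bias gap dominates and, since $\|\tilde{\bx}\|\le c$, the random fluctuation cannot cancel it, giving $|(\bw-\bv)^\top\bx|\ge \|\bw-\bv\|/2$ almost surely on the support; then $\E[\onefunc_{A\cap B}((\bw-\bv)^\top\bx)^2] \ge \Pr[A\cap B]\,\|\bw-\bv\|^2/4$, which also dominates $\gamma\|\bw-\bv\|^2$. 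The $F$-invariant is preserved by a standard descent lemma ($F$ is $c^2$-smooth in the weak sense afforded by Assumption~\ref{ass:init better than trivial}(2), so $\eta\le\gamma/c^4\le 1/c^2$ suffices).

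The main obstacle is turning the scalar advantage $F(\bw)\le F(\zero)-\delta$ into the pointwise, direction-sensitive estimate (A). The cube of $\delta$ tracks three distinct uses of the loss slack: once in the mass lower bound on $A\cap B$, and twice more in the balance between anticoncentration and overlap mass used to extract a uniform quadratic-form lower bound on $((\bw-\bv)^\top\bx)^2$ restricted to $A\cap B$.
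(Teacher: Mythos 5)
Your proposal matches the paper's overall strategy — expand the GD step, bound $\|\nabla F(\bw)\|^2 \le c^4\|\bw-\bv\|^2$, lower-bound $\langle \nabla F(\bw), \bw-\bv\rangle$ via the mass of $A\cap B = \{\bw^\top\bx\ge 0,\ \bv^\top\bx\ge 0\}$ together with the anticoncentration of $(\bw-\bv)^\top\bx$, and case-split on whether $\|\tilde{\bw}-\tilde{\bv}\|$ is small relative to $\|\bw-\bv\|$. These are exactly the paper's Lemmas~C.1 and~C.2 (with $\bu$ normalized rather than not, a cosmetic difference). Bounds (A) and (B) are fine.

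The gap is in the last step, the preservation of the invariant $F(\bw_t) \le F(\zero)-\delta$. You assert that ``$F$ is $c^2$-smooth in the weak sense afforded by Assumption~\ref{ass:init better than trivial}(2), so $\eta \le \gamma/c^4 \le 1/c^2$ suffices.'' That claim is false. The map $\bw \mapsto \nabla F(\bw)$ has unbounded Lipschitz constant near $\bw=\zero$: the indicator $\onefunc(\bw^\top\bx>0)$ inside the gradient changes as a function of the \emph{direction} $\bar\bw$, so the gradient has different directional limits at the origin and is discontinuous there. Having $\|\nabla F(\bw)\| \le c\sqrt{2F(\bw)}$ (which is what your bound (B) really gives) bounds the gradient's size, not its modulus of continuity, and does not imply smoothness. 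The paper makes this point explicitly and devotes Lemmas~C.3--C.6 to it: from $F(\bw_t)\le F(\zero)-\delta$ one gets $\|\bw_t\|\ge \delta/c^2$ (Lemma~C.1), hence the whole GD segment stays at distance at least $M = \delta/(2c^2)$ from the origin; on that region the local smoothness constant is $L = c^2\bigl(1 + 16(B+1)c'c^4/\delta\bigr)$ (Lemma~C.4), which blows up as $\delta\to 0$; and one must then check $\gamma/c^4 \le 1/L$, not $\gamma/c^4 \le 1/c^2$. Without this, nothing prevents a GD step from overshooting into the flat region near the cone of bad critical points (where $F=F(\zero)$), after which the advantage $\Pr[A\cap B]\ge\delta/(c^2\|\bw\|)$ and hence bound (A) evaporate. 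To close the gap you need to carry out the local-smoothness estimate (or some substitute) and verify the theorem's step-size bound against it; the inequality $\gamma/c^4 \le 1/L$ does hold, but it is not automatic and is exactly what the paper's final computation establishes.
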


The formal proof appears in Appendix~\ref{appen:proofs from init better than trivial}, but we provide the main ideas below.
First, note that
\begin{align*}
	\norm{\bw_{t+1}-\bv}^2 
	&= \norm{\bw_t - \eta \nabla F(\bw_t) -\bv}^2 
	\\
	&=\norm{\bw_t - \bv}^2 - 2 \eta \inner{\nabla F(\bw_t), \bw_t - \bv} + \eta^2 \norm{\nabla F(\bw_t)}^2~.
\end{align*}
Hence, in order to show that $\norm{\bw_{t+1}-\bv}^2 \leq \norm{\bw_t - \bv}^2 \left(1-\gamma \eta \right)$ we need to obtain an upper bound for $\norm{\nabla F(\bw_t)}$ and a lower bound for $\inner{\nabla F(\bw_t), \bw_t - \bv}$. Achieving the lower bound for $\inner{\nabla F(\bw_t), \bw_t - \bv}$ is the challenging part, and we show that in order to establish such a bound it suffices to obtain a lower bound for $\Pr_\bx \left[ \bw_t^\top \bx \geq 0, \bv^\top \bx \geq 0 \right]$. We prove that if $F(\bw_t) \leq F(\zero)-\delta$ then $\Pr_\bx \left[ \bw_t^\top \bx \geq 0, \bv^\top \bx \geq 0 \right] \geq \frac{\delta}{c^2 \norm{\bw_t}}$.
Hence, if $F(\bw_t)$ remains at most $F(\zero)-\delta$ for every $t$, then a lower bound for  $\inner{\nabla F(\bw_t), \bw_t - \bv}$ can be achieved, which completes the proof. However, it is not obvious that $F(\bw_t)$ remains at most $F(\zero)-\delta$ throughout the training process. When running gradient descent on a smooth loss function we can choose a sufficiently small step size such that the loss decreases in each step, but here the function $F(\bw)$ is highly non-smooth around $\bw=\zero$. That is, the Lipschitz constant of $\nabla F(\bw)$ is unbounded. We show that if $F(\bw_t) \leq F(\zero)-\delta$ then $\bw_t$ is sufficiently far from $\zero$, and hence the smoothness of $F$ around $\bw_t$ can be bounded, which allows us to choose a small step size that ensures that $F(\bw_{t+1}) \leq F(\bw_t) \leq F(\zero)-\delta$. Hence, it follows that $F(\bw_t)$ remains at most $F(\zero)-\delta$ for every $t$.

As an aside, recall that in Section~\ref{sec:characterization} we showed that other than $\bw=\bv$ all critical points of $F(\bw)$ are in a flat region where $F(\bw)=F(\zero)$. Hence, the fact that $F(\bw_t)$ remains at most $F(\zero)-\delta$ for every $t$ implies that $\bw_t$ does not reach the region of bad critical points, which explains the asymptotic convergence to the global minimum.

We also note that although we assume that the distribution has a bounded support, this assumption is mainly made for simplicity, and can be relaxed to have sub-Gaussian distributions with bounded moments. These distributions include, e.g. Gaussian distributions. 
% To show this, we need to take $c$ large enough, such that the probability to sample $\bx$ with $\norm{\bx} \geq c$ is exponentially small (since the distribution is sub-Gaussian). 

\subsection{Convergence for Random Initialization} \label{subsec:random init}

In Theorem~\ref{thm:smooth linear rate} we showed that if $F(\bw_0) < F(\zero)$ then gradient descent converges to the global minimum. We now show that under mild assumptions on the input distribution, a random initialization of $\bw_0$ near zero satisfies this requirement. We will need the following assumption, also used in \cite{yehudai2020learning, frei2020agnostic}:

\begin{assumption}\label{assum:spread alpha beta}
There are $\alpha,\beta > 0$ s.t the distribution $\tilde{\Dcal}$ satisfies the following: For any vector $\tilde{\bw} \neq \tilde{\bv}$, let $\tilde{\Dcal}_{\tilde{\bw},\tilde{\bv}}$ 
	denote the marginal 
	distribution of 
	%$\tilde{\bx}$ 
	$\tilde{\cd}$
	on the subspace spanned by $\tilde{\bw},\tilde{\bv}$ (as a 
	distribution 
	over $\reals^2$). Then any such distribution has a density function 
	$p_{\tilde{\bw},\tilde{\bv}}(\hat{\bx})$ over $\reals^2$ such that $\inf_{\hat{\bx}:\norm{\hat{\bx}}\leq 
		\alpha}p_{\tilde{\bw},\tilde{\bv}}(\hat{\bx})\geq \beta$. 
\end{assumption}

The main technical tool for proving convergence under random initialization is the following:
\begin{theorem}
\label{thm:random init}
	Assume that the input distribution $\cd$ is supported on $\{\bx \in \reals^{d+1}: \norm{\bx} \leq c\}$ for some $c \geq 1$, and Assumption \ref{assum:spread alpha beta} holds.
	Let $\bv \in \reals^{d+1}$ such that $\norm{\bv}=1$ and 
	%either $b_\bv \geq 0$ or 
	$-\frac{b_\bv}{\norm{\tilde{\bv}}} \leq \alpha \cdot \frac{\sin\left(\frac{\pi}{8}\right)}{4}$.
	Let $M = \frac{\alpha^4 \beta \sin^3 \left(\frac{\pi}{8}\right)}{256c}$.
	Let $\bw \in \reals^{d+1}$ such that $b_\bw = 0$, $\theta(\tilde{\bw},\tilde{\bv}) \leq \frac{3\pi}{4}$ and $\norm{\bw} < \frac{2M}{c^2}$. 
	Then, $F(\bw) \leq F(\zero) +  \norm{\bw}^2 \cdot \frac{c^2}{2}  -  \norm{\bw} \cdot M < F(\zero)$.
\end{theorem}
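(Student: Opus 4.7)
The plan is to expand
\[
F(\bw) - F(\zero) \;=\; \tfrac{1}{2}\E_\bx\bigl[\sigma(\bw^\top\bx)^2\bigr] \;-\; \E_\bx\bigl[\sigma(\bw^\top\bx)\sigma(\bv^\top\bx)\bigr]
\]
using $F(\zero)=\tfrac{1}{2}\E_\bx[\sigma(\bv^\top\bx)^2]$ (since $\sigma(0)=0$), then bound each term separately. The first term is at most $\tfrac{c^2}{2}\norm{\bw}^2$ because $\sigma(z)^2\leq z^2$ and $\norm{\bx}\leq c$ on the support. The entire theorem thus reduces to the lower bound $\E[\sigma(\bw^\top\bx)\sigma(\bv^\top\bx)]\geq M\norm{\bw}$: combining the two bounds gives $F(\bw)\leq F(\zero)+\tfrac{c^2}{2}\norm{\bw}^2-M\norm{\bw}$, and the hypothesis $\norm{\bw}<2M/c^2$ makes the right-hand side strictly less than $F(\zero)$.

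For the cross term, since $b_\bw=0$, $\bw^\top\bx=\tilde{\bw}^\top\tilde{\bx}$ and $\norm{\bw}=\norm{\tilde{\bw}}$, so it suffices to show $\E_{\tilde{\bx}}[\sigma(\bar{\tilde{\bw}}^\top\tilde{\bx})\sigma(\tilde{\bv}^\top\tilde{\bx}+b_\bv)]\geq M$. The integrand depends on $\tilde{\bx}$ only through its projection onto $\mathrm{span}\{\tilde{\bw},\tilde{\bv}\}$, so I would work with the 2D marginal guaranteed by Assumption~\ref{assum:spread alpha beta}, whose density is at least $\beta$ on the disk of radius $\alpha$. The plan is then to exhibit a 2D region $R$ inside this disk on which both $\bar{\tilde{\bw}}^\top\tilde{\bx}$ and $\tilde{\bv}^\top\tilde{\bx}+b_\bv$ are bounded below by explicit positive quantities, and to estimate $\beta\cdot\mathrm{Area}(R)\cdot(\text{product of the lower bounds})$.

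Geometrically, I would set up orthonormal coordinates $(s,t)$ on this 2D subspace with the first axis along the angular bisector of the wedge $\{\bar{\tilde{\bw}}^\top\hat{\bx}\geq 0\}\cap\{\bar{\tilde{\bv}}^\top\hat{\bx}\geq 0\}$. By the hypothesis $\theta(\tilde{\bw},\tilde{\bv})\leq 3\pi/4$, the directions $\bar{\tilde{\bw}}$ and $\bar{\tilde{\bv}}$ sit symmetrically about the bisector at angle at most $3\pi/8$, so $\cos(\theta/2)\geq\sin(\pi/8)$ and $\sin(\theta/2)\leq\cos(\pi/8)$. Take $R$ to be a thin rectangle of length $\Theta(\alpha)$ along the bisector and width $\Theta(\alpha\tan(\pi/8))$, chosen to lie inside the disk of radius $\alpha$ and bounded away from the origin. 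Writing $\bar{\tilde{\bw}}^\top\tilde{\bx}=s\cos(\theta/2)+t\sin(\theta/2)$ (and the analogous expression with flipped sign of $t$ for $\bar{\tilde{\bv}}$), a direct inequality shows both quantities are at least $\tfrac{\alpha}{2}\sin(\pi/8)$ on $R$. The hypothesis $-b_\bv/\norm{\tilde{\bv}}\leq\tfrac{\alpha}{4}\sin(\pi/8)$ absorbs the worst-case negative shift and yields $\tilde{\bv}^\top\tilde{\bx}+b_\bv\geq\tfrac{\norm{\tilde{\bv}}\alpha}{4}\sin(\pi/8)$. Multiplying then produces an integrand of order $\norm{\tilde{\bv}}\alpha^2\sin^2(\pi/8)$ on a region of area $\Theta(\alpha^2\tan(\pi/8))$, which after the $\beta$ density factor matches $M$ up to an absolute constant.

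The main technical obstacle is the stray $\norm{\tilde{\bv}}$ factor produced above: since $\norm{\bv}=1$, $\norm{\tilde{\bv}}$ can be small when $b_\bv$ is close to $1$. In that regime, however, $b_\bv$ itself keeps $\sigma(\tilde{\bv}^\top\tilde{\bx}+b_\bv)$ bounded below by an order-$1$ constant uniformly on $R$, which recovers a comparable bound without any $\norm{\tilde{\bv}}$ factor; the $1/c$ appearing in the definition of $M$ provides exactly the slack for a single clean estimate to handle both regimes (if needed, via a two-case split on the magnitude of $\norm{\tilde{\bv}}$). Assembling the upper bound on the quadratic term with this lower bound on the cross term yields the two displayed inequalities in the statement.
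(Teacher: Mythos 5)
Your plan is essentially the same as the paper's: expand $F(\bw)$, bound the quadratic term by $\frac{c^2}{2}\norm{\bw}^2$, and lower-bound the cross term $\E_\bx[\sigma(\bar{\bw}^\top\bx)\sigma(\bv^\top\bx)]$ by passing to the 2D marginal from Assumption~\ref{assum:spread alpha beta} and exhibiting a region away from the origin where both factors are bounded below, with a case split driven by $\norm{\tilde{\bv}}$. The paper packages the geometric step through Lemma~\ref{lem:vol of P} together with a carefully tuned threshold $\xi=\frac{\alpha}{4\sqrt c}\sin(\pi/8)$ and the asymmetric split $\bar{\bw}^\top\bx\geq 2\sqrt c\,\xi$, $\bv^\top\bx\geq \xi/(2\sqrt c)$, which both absorbs the small-$\norm{\tilde{\bv}}$ regime and produces the constant $M$ exactly; since the theorem states a specific $M$ that is reused verbatim in Corollary~\ref{cor:rand int conv}, you would need to calibrate your rectangle and thresholds to match $M$ precisely rather than ``up to an absolute constant'' as your sketch currently allows.
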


We prove the theorem in Appendix~\ref{appen:proofs from random init}. The main idea is that since
\begin{align*}
	F(\bw)
	&=\E_\bx \left[  \frac{1}{2} \left(\sigma(\bw^\top \bx) - \sigma(\bv^\top \bx)\right)^2 \right]
	\\
	&= F(\zero) +  \frac{1}{2}\E_\bx \left[ \left(\sigma(\bw^\top \bx)\right)^2 \right] - \E_\bx \left[ \sigma(\bw^\top \bx) \sigma(\bv^\top \bx) \right]
	\\
	&\leq  F(\zero) +  \norm{\bw}^2 \cdot \frac{c^2}{2} - \norm{\bw} \cdot \E_\bx \left[ \sigma(\bar{\bw}^\top \bx) \sigma(\bv^\top \bx) \right]~,
\end{align*}
then it suffices to obtain a lower bound for $\E_\bx \left[ \sigma(\bar{\bw}^\top \bx) \sigma(\bv^\top \bx) \right]$. In the proof we show that such a bound can be achieved if the conditions of the theorem hold.

Suppose that $\bw_0$ is such that $\tilde{\bw}_0$ is drawn from a spherically symmetric distribution and $b_{\bw_0}=0$. By standard concentration of measure arguments, it holds w.p. at least $1-e^{\Omega(d)}$ that  $\theta(\tilde{\bw}_0,\tilde{\bv}) \leq \frac{3\pi}{4}$ (where the notation $\Omega(d)$ hides only numerical constants, namely, it does not depend on other parameters of the problem). 
Therefore, if $\tilde{\bw}_0$ is drawn from the uniform distribution on a sphere of radius $\rho < \frac{2M}{c^2}$, then the theorem implies that w.h.p. we have $F(\bw_0)<F(\zero)$. For such initialization Theorem~\ref{thm:smooth linear rate} implies that gradient descent converges to the global minimum. 
For example, for $\rho = \frac{M}{c^2}$ we have w.h.p. that  $F(\bw_0) \leq F(\zero) + \frac{\rho^2 c^2}{2} - \rho M = F(\zero) - \frac{M^2}{2c^2}$, and thus Theorem~\ref{thm:smooth linear rate} applies with $\delta = \frac{M^2}{2c^2}$. Thus, we have the following corollary:

\begin{corollary}\label{cor:rand int conv}
Under Assumption~\ref{ass:init better than trivial} and  Assumption~\ref{assum:spread alpha beta} we have the following. 
Let $M = \frac{\alpha^4 \beta \sin^3 \left(\frac{\pi}{8}\right)}{256c}$, let $\rho = \frac{M}{c^2}$, let $\delta = \frac{M^2}{2c^2}$, and let $\gamma =  \frac{\delta^3}{3 \cdot 12^2(\rho+2)^3c^8c'^2}$. 
Suppose that 
%$b_\bv \geq 0$, 
$-\frac{b_\bv}{\norm{\tilde{\bv}}} \leq \alpha \cdot \frac{\sin\left(\frac{\pi}{8}\right)}{4}$,
and $\bw_0$ is such that $b_{\bw_0}=0$ and $\tilde{\bw}_0$ is drawn from the uniform distribution on a sphere of radius $\rho$. Consider gradient descent with step size $\eta \leq \frac{\gamma}{c^4}$. Then, with probability at least $1-e^{\Omega(d)}$ over the choice of $\bw_0$ we have for every $t$: $\norm{\bw_t - \bv}^2 \leq \norm{\bw_0 - \bv}^2 \left(1-\gamma \eta \right)^t~$.
\end{corollary}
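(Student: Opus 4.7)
The proof is essentially a composition of Theorems~\ref{thm:smooth linear rate} and~\ref{thm:random init}, together with a standard concentration-of-measure argument on the sphere. The plan is to first establish that the angular condition of Theorem~\ref{thm:random init} holds with high probability, then use Theorem~\ref{thm:random init} to deduce that $F(\bw_0) \leq F(\zero) - \delta$, and finally invoke Theorem~\ref{thm:smooth linear rate} to obtain the linear-rate convergence.

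For the first step, I would argue that since $\tilde{\bw}_0$ is drawn uniformly from the sphere of radius $\rho$ in $\reals^d$ and $\tilde{\bv} \in \reals^d$ is fixed, the inner product $\inner{\bar{\tilde{\bw}}_0,\bar{\tilde{\bv}}}$ is the first coordinate of a uniform vector on the unit sphere $S^{d-1}$ (in a rotated basis). By standard concentration of the uniform measure on $S^{d-1}$, one has $\Pr\bigl[\,|\inner{\bar{\tilde{\bw}}_0,\bar{\tilde{\bv}}}| > \tfrac{\sqrt{2}}{2}\bigr] \leq e^{-\Omega(d)}$, so with probability at least $1 - e^{-\Omega(d)}$ we have $\inner{\bar{\tilde{\bw}}_0,\bar{\tilde{\bv}}} \geq -\tfrac{\sqrt{2}}{2}$, i.e.\ $\theta(\tilde{\bw}_0,\tilde{\bv}) \leq \tfrac{3\pi}{4}$.

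Next, condition on this event. Since $b_{\bw_0} = 0$, $\theta(\tilde{\bw}_0,\tilde{\bv}) \leq \tfrac{3\pi}{4}$, and $\|\bw_0\| = \|\tilde{\bw}_0\| = \rho = M/c^2 < 2M/c^2$, all the hypotheses of Theorem~\ref{thm:random init} are satisfied (the hypothesis $-b_\bv/\|\tilde{\bv}\| \leq \alpha \sin(\pi/8)/4$ is directly assumed). Applying that theorem yields
\[
F(\bw_0) \;\leq\; F(\zero) \,+\, \tfrac{c^2}{2}\|\bw_0\|^2 \,-\, M\|\bw_0\|
\;=\; F(\zero) \,+\, \tfrac{c^2}{2}\cdot\tfrac{M^2}{c^4} \,-\, M\cdot\tfrac{M}{c^2}
\;=\; F(\zero) \,-\, \tfrac{M^2}{2c^2} \;=\; F(\zero) - \delta.
\]

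Finally, since $F(\bw_0) \leq F(\zero) - \delta$ and Assumption~\ref{ass:init better than trivial} holds, Theorem~\ref{thm:smooth linear rate} applies with this $\delta$ and with $\|\bw_0\| = \rho$, so the rate constant matches the stated $\gamma = \delta^3/\bigl(3 \cdot 12^2 (\rho+2)^3 c^8 c'^2\bigr)$. For $\eta \leq \gamma/c^4$, the theorem gives $\|\bw_t - \bv\|^2 \leq \|\bw_0 - \bv\|^2 (1-\gamma\eta)^t$ for every $t$. There is no real obstacle here: the only quantitative check is the arithmetic showing that plugging $\rho = M/c^2$ into $F(\zero) + \tfrac{c^2}{2}\rho^2 - M\rho$ reproduces exactly the value $-\delta$, which is immediate. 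Thus the corollary follows by a union bound with the single high-probability event that $\theta(\tilde{\bw}_0,\tilde{\bv}) \leq 3\pi/4$.
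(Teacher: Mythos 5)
Your proof is correct and follows essentially the same three-step route as the paper: concentration of measure on the sphere gives $\theta(\tilde{\bw}_0,\tilde{\bv}) \le \tfrac{3\pi}{4}$ with probability $1-e^{-\Omega(d)}$, Theorem~\ref{thm:random init} with $\rho = M/c^2$ then yields $F(\bw_0) \le F(\zero) - \delta$, and Theorem~\ref{thm:smooth linear rate} with $\|\bw_0\| = \rho$ gives the linear rate with the stated $\gamma$. The arithmetic verification and the check of the hypotheses of each theorem are carried out as the paper intends.
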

We note that a similar result holds also if $\tilde{\bw}_0$ is drawn from a normal distribution $\Ncal(\zero,\frac{\rho^2}{d} I)$.

%\begin{remark}[The assumption $b_\bv \geq 0$ is required]
%Note that we assume here that $b_\bv \geq 0$. In Proposition~\ref{prop:negative2} we showed an example with $b_\bv < 0$ where gradient descent with random initialization does not converge to the global minimum. 
%Hence, an assumption on $b_\bv$ is indeed required in order to establish convergence guarantees. 
%%Note that it does not contradict the negative result from Section~\ref{sec:negative2} since we assume here that $b_\bv \geq 0$.
%\end{remark}

%\begin{remark}[The assumption on $b_\bv$ is required]
%The assumption $-\frac{b_\bv}{\norm{\tilde{\bv}}} \leq \alpha \cdot \frac{\sin\left(\frac{\pi}{8}\right)}{4}$ implies that the bias term $b_\bv$ may be either positive or negative, but in case it is negative then it cannot be too large. 
%In Proposition~\ref{prop:negative2} we showed an example with $b_\bv < 0$ where gradient descent with random initialization does not converge to the global minimum even asymptotically. In the example from Proposition~\ref{prop:negative2} we have $\alpha < \sqrt{d}$ and $-\frac{b_\bv}{\norm{\tilde{\bv}}} = \sqrt{d} - \frac{1}{2d\sqrt{d}} > \alpha -  \frac{1}{2d\sqrt{d}}$. 
%Hence it does not satisfy the assumption $-\frac{b_\bv}{\norm{\tilde{\bv}}} \leq \alpha \cdot \frac{\sin\left(\frac{\pi}{8}\right)}{4}$. Thus, an assumption on $-\frac{b_\bv}{\norm{\tilde{\bv}}}$ is indeed required in order to achieve convergence guarantees.
%\end{remark}

\begin{remark}[The assumption on $b_\bv$]
The assumption $-\frac{b_\bv}{\norm{\tilde{\bv}}} \leq \alpha \cdot \frac{\sin\left(\frac{\pi}{8}\right)}{4}$ implies that the bias term $b_\bv$ may be either positive or negative, but in case it is negative then it cannot be too large. This assumption is indeed crucial for the proof, but for "well-behaved" distributions, if this assumption is not satisfied (for a large enough $\alpha$), then the loss at $F(\bm{0})$ is already good enough. For example, for a standard Gaussian distribution and for every $\epsilon >0$, we can choose $\alpha$ large enough such that for any bias term (positive or negative) we either: (1) converge to the global minimum with a loss of zero, or; (2) converge to a local minimum with a loss of $F(\bm{0})$, which is smaller then $\epsilon$. Moreover, we can show that by choosing $\alpha$ appropriately, and using the example in \thmref{thm:negative2}, if $-\frac{b_\bv}{\norm{\tilde{\bv}}} \geq 2\alpha$ then gradient flow will converge to a non-global minimum with loss of $F(\bm{0})$. This means that our bound on $\alpha$ is tight up to a constant factor. For a further discussion on the assumption on $b_\bv$, and how to choose $\alpha$ see \appref{appen:disc bv}.
\end{remark}

Previous papers have shown separation between random features (or kernel) methods and neural networks in terms of their approximation power (see \cite{yehudai2019power,kamath2020approximate}, and the discussion in \cite{malach2021quantifying}). These works show that under a standard Gaussian distribution, random features cannot even approximate a single ReLU neuron, unless the number of features is exponential in the input dimension. That analysis crucially relies on the single neuron having a non-zero bias term. In this work we complete the picture by showing that gradient descent can indeed find a near-optimal neuron with non-zero bias. Thus, we see there is indeed essentially a separation between what can be learned using random features and using gradient descent over neural networks.

\begin{remark}[Learning a neuron without bias]
\label{rem:improve no bias}
\cite{yehudai2020learning} studied the problem of learning a single ReLU neuron without bias using gradient descent on a single neuron without bias. For input distributions that are not spherically symmetric they showed that gradient descent with random initialization near zero converges to the global minimum w.p. at least $\frac{1}{2}-o_d(1)$. Their result is also under Assumption~\ref{assum:spread alpha beta}. 
An immediate corollary from 
%Remark~\ref{rem:random init} 
the discussion above
is that if we learn a single neuron without bias using gradient descent with random initialization on a single neuron with bias, then the algorithm converges to the global minimum w.p. at least $1-e^{\Omega(d)}$. 
Moreover, our proof technique can be easily adapted to the setting of learning a single neuron without bias using gradient descent on a single neuron without bias, namely, the setting studied in \cite{yehudai2020learning}. It can be shown that in this setting gradient descent converges  w.h.p to the global minimum. Thus, our technique allows us to improve the result of \cite{yehudai2020learning} from probability $\frac{1}{2}-o_d(1)$ to probability $1-e^{\Omega(d)}$.
\end{remark}

\section{Convergence for spread and symmetric distributions}
\label{sec: symmetric dist}

In this section we show that under a certain set of assumptions, different from the assumptions in Section~\ref{sec:init better than trivial}, it is possible to show linear convergence of gradient descent to the global minimum. The assumptions we make for this theorem are as follows:

\begin{assumption}\label{assum:main assum convergence}
\;
\begin{enumerate}[leftmargin=*]
    \item The target vector $\bv$ satisfies that $b_\bv \geq 0$ and $\norm{\tilde{\bv}} = 1$.
    \item The distribution $\tilde{\Dcal}$ over the first $d$ coordinates is spherically symmetric.
%     \item There are $\alpha,\beta > 0$ s.t the distribution $\tilde{\Dcal}$ satisfies the following: For any vector $\tilde{\bw} \neq \tilde{\bv}$, let $\tilde{\Dcal}_{\tilde{\bw},\tilde{\bv}}$ 
% 	denote the marginal 
% 	distribution of 
% 	%$\tilde{\bx}$ 
% 	$\tilde{\cd}$
% 	on the subspace spanned by $\tilde{\bw},\tilde{\bv}$ (as a 
% 	distribution 
% 	over $\reals^2$). Then any such distribution has a density function 
% 	$p_{\tilde{\bw},\tilde{\bv}}(\hat{\bx})$ over $\reals^2$ such that $\inf_{\hat{\bx}:\norm{\hat{\bx}}\leq 
% 		\alpha}p_{\tilde{\bw},\tilde{\bv}}(\hat{\bx})\geq \beta$.
	\item Assumption \ref{assum:spread alpha beta} holds, and denoting by $\tau := \frac{\mathbb{E}_{\tilde{\bx} \sim \tilde{\Dcal}}\left[|\tilde{x}_1 \tilde{x}_2|\right]}{\mathbb{E}_{\tilde{\bx} \sim \tilde{\Dcal}}\left[\tilde{x}_1^2\right]}$, then $\alpha \geq 2.5\sqrt{2}\cdot \max\left\{1,\frac{1}{\sqrt{\tau}}\right\}$ where $\alpha$ is from Assumption \ref{assum:spread alpha beta}.
    \item Denote by $c:=\mathbb{E}_{\tilde{\bx} \sim \tilde{\Dcal}}\left[\norm{\tilde{\bx}}^4\right]$, then $c < \infty$.
    % \item Let $x_1$ be the first coordinate of $\tilde{\bx}$ and denote: 
    % \[
    % f(z) := 2\cdot \left(\frac{\mathbb{E}_{\tilde{\bx}\sim \tilde{\Dcal}}[x_1\mathbbm{1}(x_1 > z)]}{\mathbb{E}_{\tilde{\bx}\sim \tilde{\Dcal}}[\mathbbm{1}(x_1 > z)]} - z\right) - \mathbb{E}_{\tilde{\bx}\sim \tilde{\Dcal}}[x_1\mathbbm{1}(x_1 > 0)]~.
    % \]
    % Then, $f(z)$ is monotonically decreasing, with $f(\tau) =0$ for $\tau < \alpha$.
\end{enumerate}
\end{assumption}
We note that item (3) considers $x_1,x_2$, but due to the assumption on the symmetry of the distribution (item (2)), the assumption in item (3) holds for every $x_i, x_j$.
Under these assumptions, we prove the following theorem:
\begin{theorem}\label{thm:symmetric dist}
Assume we initialize $\bw_0$ such that $\norm{{\bw}_0 - {\bv}}^2 < 1$, $b_{\bw_0} \geq 0$ and that Assumption \ref{assum:main assum convergence} 
%and \ref{assum:spread alpha beta} 
holds. Then, there is a universal constant $C$, such that using gradient descent on $F(\bw)$ with step size $\eta < C\cdot \frac{\beta}{c\alpha^2}\min\{1,\tau\}$ yields that for every $t$ we have $\norm{\bw_t - \bv}^2 \leq (1-\eta\lambda)^t\norm{\bw_0 - \bv}^2~$, for $\lambda = C\cdot\frac{\beta}{c\alpha^2}$.
\end{theorem}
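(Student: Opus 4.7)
I would set up the standard one-step recursion
\begin{equation*}
\|\bw_{t+1}-\bv\|^2 \;=\; \|\bw_t-\bv\|^2 \;-\; 2\eta\,\langle \nabla F(\bw_t), \bw_t-\bv\rangle \;+\; \eta^2\,\|\nabla F(\bw_t)\|^2,
\end{equation*}
and prove it by induction that the entire trajectory stays in the ball $B(\bv,1)$ (the base case is $\|\bw_0-\bv\|<1$). It then suffices to establish two estimates valid on this ball: a PL-type lower bound $\langle\nabla F(\bw),\bw-\bv\rangle \geq \lambda\|\bw-\bv\|^2$ with $\lambda=\Theta(\beta/(c\alpha^2))$ as in the theorem, and a matching gradient upper bound $\|\nabla F(\bw)\|^2 \leq L\|\bw-\bv\|^2$ with $L$ such that the choice $\eta<\lambda/L$ is precisely $\eta<C\beta\min\{1,\tau\}/(c\alpha^2)$. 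Combining them with $\eta\leq\lambda/L$ and iterating yields the advertised $(1-\eta\lambda)^t$ contraction.

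\textbf{PL-type lower bound.} This is the heart of the argument. Splitting the expectation in the gradient formula \eqref{eq:gradient of a single neuron} by the signs of $\bw^\top\bx$ and $\bv^\top\bx$, and using the ReLU identity $(\sigma(a)-\sigma(b))(a-b)\geq 0$, I would show that each of the four sign cells contributes non-negatively to $\langle\nabla F(\bw),\bw-\bv\rangle$, and in particular
\begin{equation*}
\langle\nabla F(\bw),\bw-\bv\rangle \;\geq\; \E\bigl[\mathbbm{1}(\bw^\top\bx\geq 0,\,\bv^\top\bx\geq 0)\,\bigl((\bw-\bv)^\top\bx\bigr)^2\bigr].
\end{equation*}
Because $b_\bv\geq 0$ by Assumption~\ref{assum:main assum convergence}(1), the region $\{\bv^\top\bx\geq 0\}$ contains $\{\tilde{\bv}^\top\tilde{\bx}\geq 0\}$; I would argue by induction that $b_{\bw_t}\geq 0$ is preserved along the iterates (using $b_{\bw_0}\geq 0$ together with the contraction of $\|\bw_t-\bv\|$ and the sign of $\partial F/\partial b_\bw$), which similarly reduces $\{\bw^\top\bx\geq 0\}$ to the cleaner half-space $\{\tilde{\bw}^\top\tilde{\bx}\geq 0\}$. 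I would then project onto the plane $\mathrm{span}(\tilde{\bw},\tilde{\bv})$ and invoke Assumption~\ref{assum:spread alpha beta} (density $\geq\beta$ on a ball of radius $\alpha$), turning the expectation into an explicit integral over a planar wedge of angle $\pi-\theta(\tilde{\bw},\tilde{\bv})$ against the same quadratic form.

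\textbf{Handling the bias cross-term.} Expanding the integrand as
\begin{equation*}
\bigl((\bw-\bv)^\top\bx\bigr)^2 = \bigl((\tilde{\bw}-\tilde{\bv})^\top\tilde{\bx}\bigr)^2 + 2(b_\bw-b_\bv)(\tilde{\bw}-\tilde{\bv})^\top\tilde{\bx} + (b_\bw-b_\bv)^2
\end{equation*}
yields two diagonal pieces (in $\|\tilde{\bw}-\tilde{\bv}\|^2$ and $(b_\bw-b_\bv)^2$) that each integrate to $\Omega(\alpha^2\beta)$ times the corresponding component of $\|\bw-\bv\|^2$ by direct wedge integration using spherical symmetry (Assumption~\ref{assum:main assum convergence}(2)). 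The mixed cross-term is sign-indefinite, and this is precisely where item~(3) of Assumption~\ref{assum:main assum convergence} enters: the ratio $\tau=\E[|\tilde{x}_1\tilde{x}_2|]/\E[\tilde{x}_1^2]$ controls the size of the off-diagonal integrals on the wedge relative to the diagonal ones, and the quantitative requirement $\alpha\geq 2.5\sqrt{2}\max\{1,1/\sqrt{\tau}\}$ lets me absorb the cross-term into the two diagonal pieces while losing only a numerical constant, producing $\lambda=\Theta(\beta/(c\alpha^2))$. Finally, the gradient upper bound is routine: the $1$-Lipschitzness of $\sigma$ gives $\|\nabla F(\bw)\|\leq \E[|(\bw-\bv)^\top\bx|\,\|\bx\|]$, and Cauchy--Schwarz with the fourth-moment bound $c$ from Assumption~\ref{assum:main assum convergence}(4) yields $\|\nabla F(\bw)\|^2 \leq O(c\alpha^2/(\beta\min\{1,\tau\}))\cdot\|\bw-\bv\|^2$ when combined with the PL bound, delivering the step-size condition in the theorem.

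\textbf{Main obstacle.} The trickiest step is the cross-term estimate together with maintaining the invariant $b_{\bw_t}\geq 0$ (or, failing that, re-doing the planar wedge integration on a shifted wedge of shift $|b_\bw|$). Without such an invariant the geometric integration loses its symmetry, and the condition $\alpha\geq 2.5\sqrt{2}\max\{1,1/\sqrt{\tau}\}$ is precisely what guarantees that, even with shifts of size at most $1$ coming from the bias differences within $B(\bv,1)$, the lower bound on the density $\beta$ still applies on a wedge comparable to the unshifted one. Once the PL and gradient bounds are in place, the one-step recursion immediately gives $\|\bw_{t+1}-\bv\|^2\leq(1-\eta\lambda)\|\bw_t-\bv\|^2$ for $\eta<\lambda/L$, which closes the induction and completes the proof.
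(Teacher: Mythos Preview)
Your high-level skeleton (one-step recursion, PL-type lower bound on $\langle\nabla F(\bw),\bw-\bv\rangle$ via wedge integration, gradient upper bound via the fourth moment) matches the paper, but two of your key steps do not go through as stated.

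\textbf{The invariant $b_{\bw_t}\geq 0$ is not maintained.} Your plan hinges on preserving $b_{\bw_t}\geq 0$ for all $t$ via ``the sign of $\partial F/\partial b_\bw$,'' but this sign is only guaranteed when $\|\tilde{\bw}\|$ is small (cf.\ Proposition~\ref{prop:bound bias grows}, which needs $\|\tilde{\bw}\|\leq 0.4$); for larger $\|\tilde{\bw}\|$ the bias can drift negative, and the paper's proof explicitly allows this (its Case~I treats $b_{\bw_t}\leq 0$). What the paper actually tracks is the ratio $b_t:=\max\{0,-b_{\bw_t}/\|\tilde{\bw}_t\|\}$, proving by a case analysis on $\|\tilde{\bw}_t\|$ and the sign of $b_{\bw_t}$ that $b_t<2.4\max\{1,1/\sqrt{\tau}\}$ for all $t$. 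This ratio is what enters the shifted-wedge lower bound (Theorem~\ref{thm:innerprod} requires the shift $b'$, which scales with $b_t$, to stay below $\alpha$), and the numerical condition $\alpha\geq 2.5\sqrt{2}\max\{1,1/\sqrt{\tau}\}$ is precisely what keeps $\alpha-\sqrt{2}b_t$ bounded away from zero.

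\textbf{The role of $\tau$ is misidentified.} You invoke $\tau$ to absorb the bias cross-term in the expansion of $((\bw-\bv)^\top\bx)^2$, but the wedge integral (Proposition~\ref{prop:integral bound}) handles that cross-term for free by choosing integration rectangles symmetric about an axis, and $\tau$ never appears there. Where $\tau$ actually enters is Proposition~\ref{prop: bound norm grows}: when $b_\bw\leq 0$ and $\|\tilde{\bw}\|\leq\tau/2$, one has $\langle\nabla F(\bw)_{1:d},\tilde{\bw}\rangle\leq 0$, so $\|\tilde{\bw}\|$ grows. This is the mechanism preventing $\|\tilde{\bw}_t\|$ from collapsing toward zero, which would send $-b_{\bw_t}/\|\tilde{\bw}_t\|$ to infinity and kill the PL bound. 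The $\min\{1,\tau\}$ in the step-size condition comes from this dynamics argument (one needs $\eta$ small enough that $\|\tilde{\bw}_t\|$ cannot jump below roughly $\tau/2$ in a single step), not from the gradient upper bound, which is simply $\|\nabla F(\bw)\|^2\leq c\,\|\bw-\bv\|^2$. Your fallback of ``re-doing the wedge integration on a shifted wedge'' is exactly Theorem~\ref{thm:innerprod}, but that only yields a useful bound once you have separately controlled the shift via the dynamics of $\|\tilde{\bw}_t\|$---and that control is the missing ingredient.
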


This result has several advantages and disadvantages compared to those of the previous section. The main disadvantage is that the assumptions are generally more stringent: We focus only on positive target biases ($b_{\bv}\geq 0$) and spherically symmetric distributions $\tilde{\Dcal}$. Also we require a certain technical assumption on the the distribution, as specified by $\tau$, which are satisfied for standard spherically symmetric distributions, but is a bit non-trivial\footnote{For example, for standard Gaussian distribution, we have that $\tau = \frac{2}{\pi} \approx 0.63$, hence we can take $\alpha = 4.5$, and $\beta = O(1)$. Since the distribution $\tilde{\Dcal}$ is symmetric, we present the assumption w.l.o.g with respect to the first $2$ coordinates.}. Finally, the assumption on the initialization ($\norm{\bw_0-\bv}^2<1$ and $b_{\bw_0} \geq 0$) is much more restrictive (although see Remark \ref{remark:randominit} below). 
In contrast, the initialization assumption in the previous section holds with probability close to $1$ with random initialization.  On the positive side, the convergence rate does not depend on the initialization, i.e., here by initializing with any $\bw_0$ such that $\norm{\bw_0-\bv}^2<1$ and $b_{\bw_0} \geq 0$, we get a convergence rate that only depends on the input distribution. On the other hand, in \thmref{thm:smooth linear rate}, the convergence rate depends on the parameter $\delta$ which depends on the initialization. Also, the distribution is not necessarily bounded -- we only require its fourth moment to be bounded. 

\begin{remark}[Random initialization]\label{remark:randominit}
For $b_\bv=0$ the initialization assumption ($\norm{\bw_0-\bv}^2<1)$ is satisfied with probability close to $1/2$ with standard initializations, see Lemma 5.1 from \cite{yehudai2020learning}). For $b_\bv >0$, a similar argument applies if $b_{\bw}$ is initialized close enough to $b_{\bv}$. 
\end{remark}

% We note that if $b_\bv=0$, using Lemma 5.1 from \cite{yehudai2020learning}, and initializing $b_{\bw_0}=0$ (which is the default in standard machine learning libraries such as Pytorch \cite{paszke2017automatic}), the initialization assumptions are satisfied w.p close to $\frac{1}{2}$ by sampling $\tilde{\bw}_0$ from a Gaussian distribution with a small enough variance. 

The proof of the theorem is quite different from the proofs in \secref{sec:init better than trivial}, and is more geometrical in nature, extending previously used techniques from \cite{yehudai2020learning,frei2020agnostic}. It contains two major parts: The first part is an extension of the methods from \cite{yehudai2020learning} to the case of adding a bias term. Specifically, we show a lower bound on $\inner{\nabla F(\bw), \bw-\bv}$, which depends on both the angle between $\tilde{\bw}$ and $\tilde{\bv}$, and the bias terms $b_\bw$ and $b_\bv$ (see \thmref{thm:innerprod}). This result implies that for suitable values of $\bw$, gradient descent will decrease the distance from $\bv$. The second part of the proof is showing that throughout the optimization process, $\bw$ will stay in an area where we can apply the result above. Specifically, the intricate part is showing that the term $-\frac{b_\bw}{\norm{\tilde{\bw}}}$ does not get too large. Note that due to \thmref{thm:characterization}, we know that keeping this term small means that $\bw$ stays away from the cone of bad critical points which are not the global minimum. The full proof can be found in \appref{appen:proof from symmetric dits}.

\section{Discussion}
In this work we studied the problem of learning a single neuron with a bias term using gradient descent. We showed several negative results, indicating that adding a bias term makes the problem more difficult than without a bias term. Next, we gave a characterization of the critical points of the problem under some assumptions on the input distribution, showing that there is a manifold of critical points which are not the global minimum. We proved two convergence results using different techniques and under different assumptions. Finally, we showed that under mild assumptions on the input distribution, reaching the global minimum can be achieved by standard random initialization.

% . The first result uses a novel technique which can be of independent interest and might be generalized to other regimes (e.g. agnostic learning \cite{frei2020agnostic}). The second is an extension of the techniques from \cite{yehudai2020learning} to the case of having a bias term.  

% We emphasize that the problem we study is nonsmooth and nonconvex. In fact, there is a manifold of critical points which are not the global minimum. 

We emphasize that previous works studying the problem of a single neuron either considered non-standard algorithms (e.g. Isotron), or required assumptions on the input distribution which do not allow a bias term. Hence, this is the first work we are aware of which gives positive and negative results on the problem of learning a single neuron with a bias term using gradient methods.

In this work we focused on the gradient descent algorithm. We believe that our results can also be extended to the commonly used SGD algorithm, using similar techniques to \cite{yehudai2020learning,shamir2015stochastic}, and leave it for future work. Another interesting future direction is analyzing other previously studied settings, but with the addition of a bias term. These settings can include convolutional networks, two layers neural networks, and agnostic learning of a single neuron.

\subsection*{Acknowledgements}
This research is supported in part by European Research Council (ERC) grant 754705.

% Topics that we leave for future work:
% \begin{itemize}
% \item Say that we believe that our results can be extended to SGD with the technique of, e.g., \cite{yehudai2020learning}, and we leave it for future work.
% \item Analyzing GD for other networks with bias: E.g., conv nets with bias, single neuron with bias in the agnostic setting, etc. 
% \end{itemize}

\setcitestyle{numbers}
\bibliographystyle{abbrvnat}
\bibliography{my_bib}

% \section*{NeurIPS Checklist}
% \begin{enumerate}
%     \item For all authors...
%     \begin{enumerate}
%         \item Do the main claims made in the abstract and introduction accurately reflect the paper's contributions and scope?
%         \textbf{Yes}
%         \item Have you read the ethics review guidelines and ensured that your paper conforms to them? \textbf{Yes}
%         \item Did you discuss any potential negative societal impacts of your work? \textbf{n/a}, the paper is theoretical in nature
%         \item Did you describe the limitations of your work? \textbf{Yes}
%     \end{enumerate}
%     \item If you are including theoretical results...
%     \begin{enumerate}
%         \item Did you state the full set of assumptions of all theoretical results? \textbf{Yes}
%         \item Did you include complete proofs of all theoretical results? \textbf{Yes}
%     \end{enumerate}
%     \item  If you ran experiments... \textbf{n/a}
%     \item If you are using existing assets (e.g., code, data, models) or curating/releasing new assets... \textbf{n/a}
%     \item If you used crowdsourcing or conducted research with human subjects... \textbf{n/a}
% \end{enumerate}

% \newpage
\appendix
\section*{Appendices}

\section{Auxiliary Results}
In this appendix we extend several key results from \cite{yehudai2020learning} for the case of adding a bias term. Specifically, we extend Theorem 4.2 from \cite{yehudai2020learning} which shows that under mild assumptions on the distribution, the gradient of the loss points in a good direction which depends on the angle between the learned vector $\bw$ and the target $\bv$. We also bound the volume of a certain set in $\reals^2$, which can be seen as an extension of Lemma B.1 from \cite{yehudai2020learning}.

\begin{lemma}\label{lem:vol of P}
Let $P=\{\by\in\reals^2: {{\bw}}^\top \by > b, {{\bv}}^\top \by > b, \norm{\by}\leq \alpha\}$ for $b \in \reals$ and ${\bw},{\bv}\in\reals^2$ with $\norm{\bw},\norm{\bv} =1$ and $\theta({\bw},{\bv}) \leq \pi - \delta$ for $\delta\in [0,\pi]$. If $b<\alpha\sin\left(\frac{\delta}{2}\right)$ then $\text{Vol}(P) \geq \frac{\left(\alpha\sin\left(\frac{\delta}{2}\right) - b\right)^2}{4\sin\left(\frac{\delta}{2}\right)}$.
\end{lemma}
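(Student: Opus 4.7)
The plan is to inscribe an explicit kite inside $P$ and show its area dominates the claimed bound. First I will exploit rotational symmetry and WLOG place $\bw = (\sin(\theta/2),\cos(\theta/2))$ and $\bv = (-\sin(\theta/2),\cos(\theta/2))$ with $\theta=\theta(\bw,\bv)\leq\pi-\delta$. Setting $s=\sin(\theta/2)$, $c=\cos(\theta/2)$, $\sigma=\sin(\delta/2)$, $\kappa=\cos(\delta/2)$, the hypothesis gives $c\geq\sigma$, $s\leq\kappa$, and $b<\alpha\sigma\leq\alpha c$. Since $P$ is the intersection of two half-planes and a disk it is convex, so any quadrilateral whose vertices lie in the closure of $P$ is contained in $P$.

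Next, I will consider the kite $K$ with vertices $A=(0,b/c)$ (apex of the wedge on the $y_2$-axis, where the two boundary lines meet), $C=(0,\alpha)$ (top of the disk on the $y_2$-axis), and the two symmetric points $B=(B_x,B_y)$ and $B'=(-B_x,B_y)$ obtained as the intersections of the wedge boundary lines $\bv^\top\by=b$ and $\bw^\top\by=b$ with the circle $\|\by\|=\alpha$; a direct calculation gives $B_x = c\sqrt{\alpha^2-b^2}-bs$ and $B_y = bc+s\sqrt{\alpha^2-b^2}$. A short check using $b<\alpha c$ shows $A_y<B_y<C_y$, so $K$ is a proper kite with perpendicular diagonals along the $y_2$-axis (from $A$ to $C$, length $\alpha-b/c$) and perpendicular to it (from $B'$ to $B$, length $2B_x$), and hence
\[
\vol(K) \;=\; B_x\,(\alpha - b/c) \;=\; \frac{(c\sqrt{\alpha^2-b^2}-bs)(\alpha c - b)}{c}.
\]

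To lower-bound this by $(\alpha\sigma - b)^2/(4\sigma)$, I will use $c\geq\sigma$ to get $\alpha c-b\geq\alpha\sigma-b>0$ and (in the main case $b\geq 0$) the bound $s/c = \tan(\theta/2)\leq \cot(\delta/2) = \kappa/\sigma$ together with $\sqrt{\alpha^2-b^2}\geq\alpha\kappa\geq b\kappa/\sigma$, to conclude $\vol(K)\geq \bigl(\sqrt{\alpha^2-b^2}-b\kappa/\sigma\bigr)(\alpha\sigma-b)$. The claim then reduces to the one-variable inequality
\[
4\sigma\sqrt{\alpha^2-b^2} \;\geq\; \alpha\sigma + b\,(4\kappa-1),
\]
which is the main obstacle. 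I will verify it by a concavity/chord argument: setting $f(b)=4\sigma\sqrt{\alpha^2-b^2}-\alpha\sigma-b(4\kappa-1)$, one computes $f(0)=3\alpha\sigma>0$ and $f(\alpha\sigma)=0$, and $f$ is concave in $b$, so the chord inequality gives $f(b)\geq 3(\alpha\sigma-b)\geq 0$ on $[0,\alpha\sigma]$. This finishes the proof in the principal regime $b\geq 0$; the case $b\leq 0$ uses the same kite, and since the factor $(c\sqrt{\alpha^2-b^2}-bs)$ only grows when $b$ becomes negative while $(\alpha c-b)/c$ is likewise nondecreasing, the analogous algebraic inequality is easier and handled by the same concavity idea.
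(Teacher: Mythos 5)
Your argument is correct in the regime $b \geq 0$, which is the only regime in which the lemma is actually applied in the paper (and the only regime in which it is true: for $b \ll -\alpha$ the set $P$ is the whole disk, with volume $\pi\alpha^2$, while the stated lower bound $(\alpha\sin(\delta/2)-b)^2/(4\sin(\delta/2))$ grows without bound as $b\to-\infty$). Your route is genuinely different from the paper's. The paper first reduces to the extremal configuration $\theta(\bw,\bv)=\pi-\delta$, rotates so the wedge's axis of symmetry is the $y_2$-axis, and inscribes two axis-aligned rectangles $P_1,P_2$ whose combined area is engineered to equal the claimed bound exactly; verifying $P_1,P_2\subseteq P$ is then a short check of the four corners. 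You keep $\theta$ general (thereby sidestepping the monotonicity-in-$\theta$ reduction altogether), inscribe a single convex kite $K$ with four explicit vertices (the wedge apex, the top of the disk, and the two line--circle intersections), compute its area $B_x(\alpha-b/c)$ in closed form, and then bound it from below via $c\geq\sigma$, $s/c\leq\kappa/\sigma$, and a concavity/chord argument. Your kite captures a larger piece of $P$ and requires no WLOG reduction, at the cost of a slightly harder final one-variable inequality; the paper's rectangles are chosen so that the final area computation is a one-line product, but the construction is more ad hoc.

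One correction: your closing sentence about $b\leq 0$ is not right as stated. The factor $c\sqrt{\alpha^2-b^2}-bs$ does \emph{not} "only grow" as $b$ decreases --- the term $\sqrt{\alpha^2-b^2}$ shrinks and ceases to be real once $b\leq-\alpha$, at which point the kite construction breaks down. This is consistent with the observation above that the lemma itself fails for sufficiently negative $b$. Note, though, that the paper's proof has the same implicit restriction: for $b<0$ one has $b'=b/\sin(\delta/2)<0$, and the squared distance from the origin to the far corner of $P_1$ equals $\alpha^2 + 2\bigl(1-\cos(\delta/4)\bigr)\,b'(b'-\alpha)$, which exceeds $\alpha^2$ when $b'<0$ and $\delta>0$, so $P_1\not\subseteq P'$. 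The gap is therefore shared, harmless in context, and really a sign that the lemma should be read with the hypothesis $0\leq b<\alpha\sin(\delta/2)$.
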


\begin{proof}
The volume of $P$ is smallest when the angle is exactly $\pi - \delta$, thus we can lower bound the volume by assuming that $\theta({\bw},{\bv}) = \pi -\delta$. Next, we can rotate to coordinates to consider without loss of generality the volume of the set
% \[
% P'=\left\{(y_1,y_2+b')\in\reals^2:\theta((y_1,y_2),\be_2) \leq \delta/2, \norm{(y_1,y_2+b')}\leq \alpha\right\}~,
% \]

\[
P'=\left\{(y_1,y_2)\in\reals^2:\theta((y_1,y_2-b'),\be_2) \leq \delta/2, \norm{(y_1,y_2)}\leq \alpha\right\}~,
\]

where $b' = \frac{b}{\sin(\delta/2)}$ and $\be_2=(0,1)$. Let $P'' = \{(x,y)\in\reals^2: x^2 + (y-b')^2 \leq (\alpha-b')^2\}$ be the disc of radius $\alpha-b'$ around the point $(0,b')$. It is enough to bound the volume of $P'\cap P''$. We define the rectangular sets:
% \begin{align*}
%     P_1 &= \left[\frac{(\alpha - b')}{4}\sin\left(\frac{\delta}{4}\right),\frac{(\alpha - b')}{2}\sin\left(\frac{\delta}{4}\right)\right] \times \left[b' + \frac{(\alpha - b')}{2}\cos\left(\frac{\delta}{4}\right),b' + (\alpha - b')\cos\left(\frac{\delta}{4}\right)\right] \\
%     P_2 &= \left[-\frac{(\alpha - b')}{2}\sin\left(\frac{\delta}{4}\right),-\frac{(\alpha - b')}{4}\sin\left(\frac{\delta}{4}\right)\right] \times \left[b' + \frac{(\alpha - b')}{2}\cos\left(\frac{\delta}{4}\right),b' + (\alpha - b')\cos\left(\frac{\delta}{4}\right)\right]
% \end{align*}

\begin{align*}
    P_1 &= \left[\frac{(\alpha - b')}{2}\sin\left(\frac{\delta}{4}\right),{(\alpha - b')}\sin\left(\frac{\delta}{4}\right)\right] \times \left[b' + \frac{(\alpha - b')}{2}\cos\left(\frac{\delta}{4}\right),b' + (\alpha - b')\cos\left(\frac{\delta}{4}\right)\right] \\
    P_2 &= \left[-{(\alpha - b')}\sin\left(\frac{\delta}{4}\right),-\frac{(\alpha - b')}{2}\sin\left(\frac{\delta}{4}\right)\right] \times \left[b' + \frac{(\alpha - b')}{2}\cos\left(\frac{\delta}{4}\right),b' + (\alpha - b')\cos\left(\frac{\delta}{4}\right)\right]
\end{align*}

See \figref{fig:P'} for an illustration. We have that $P_1,P_2\subseteq P'\cap P''$. We will show it for $P_1$, the same argument also works for $P_2$. First, $P_1\subseteq P''$ is immediate by the definition of the two sets. For $P'$, the straight line in the boundary of $P'$ is defined by $y_2 = b' + y_1\cdot \frac{\cos\left(\frac{\delta}{2}\right)}{\sin\left(\frac{\delta}{2}\right)}$. It can be seen that each vertex of the rectangle $P_1$, is above this line. Moreover, the norm of each vertex of $P_1$ is at most $\alpha$. Hence all the vertices are inside $P'$, which means that $P_1\subseteq P'$.  In  total we get:
\begin{align*}
    \text{Vol}(P) \geq \text{Vol}(P'\cap P'') &\geq \text{Vol}(P_1\cup P_2)\\
    & = \frac{(\alpha - b')^2}{2}\sin\left(\frac{\delta}{4}\right)\cos\left(\frac{\delta}{4}\right) \\
    &= \frac{\left(\alpha\sin\left(\frac{\delta}{2}\right) - b\right)^2}{4\sin\left(\frac{\delta}{2}\right)}
\end{align*}
\end{proof}

\begin{figure}
    \centering
    \includegraphics[width=3.5in]{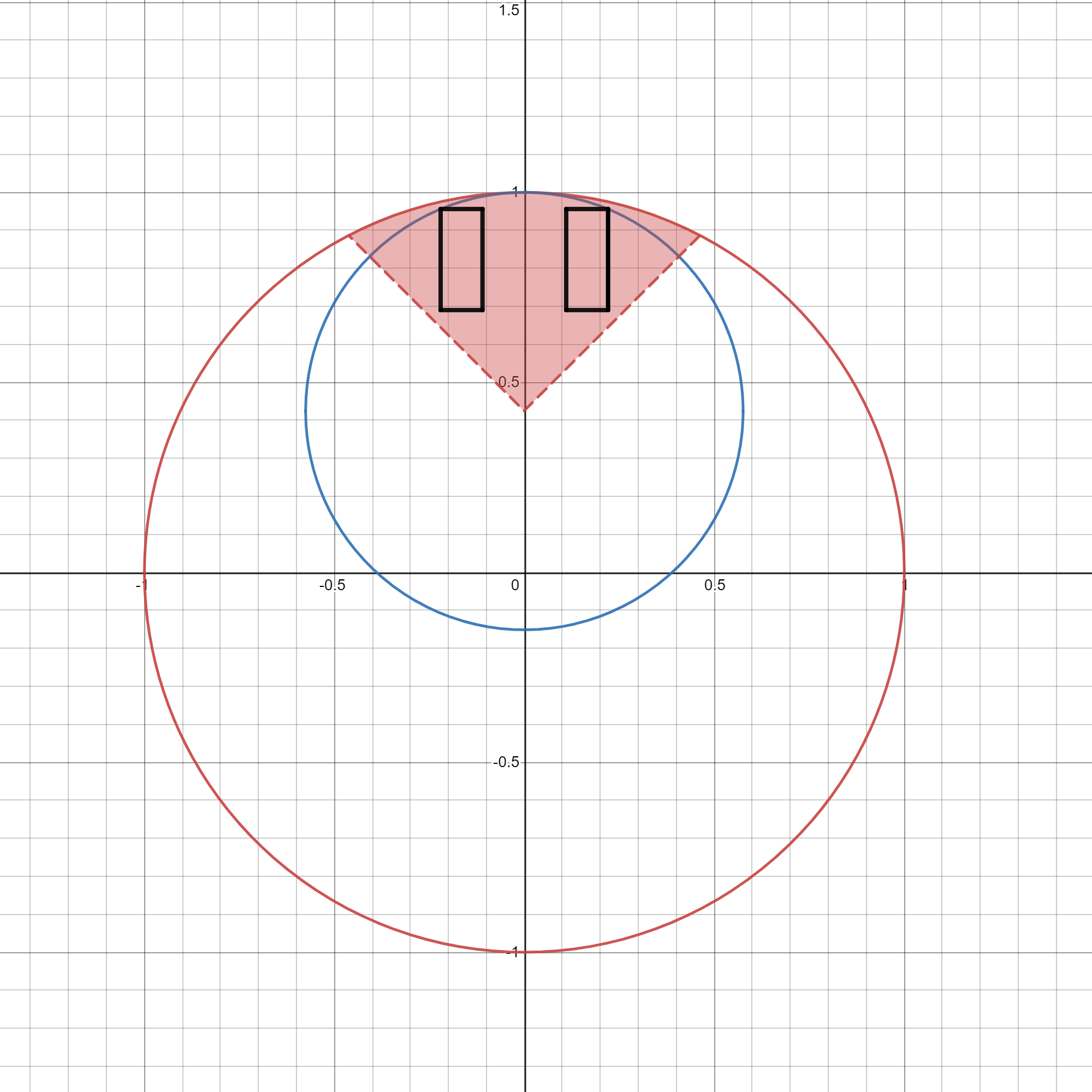}    \caption{An illustration of the set $P'$ (in red), the circle $P''$ (in blue) and the two rectangles $P_1,P_2$ (in black), for the case of $\delta = \pi/2$, $\alpha=1$ and $b=0.3$. For $b=0$, $P'$ would be a pie slice, and the blue circle $P''$ will coincide with the red circle.}
    \label{fig:P'}
\end{figure}

\begin{theorem}\label{thm:innerprod}
Let $\bw,\bv\in\reals^{d+1}$ , denote by $\tilde{\bw},\tilde{\bv}$ their first $d$ coordinates and by $b_\bw,b_\bv$ their last coordinate. Assume that $\theta(\tilde{\bw},\tilde{\bv}) \leq \pi - \delta$ for some $\delta\in[0,\pi)$, and that the distribution $\cd$ is such that its first $d$ coordinates satisfy Assumption 4.1 (1) from \cite{yehudai2020learning}, and that its last coordinate is a constant $1$. Denote $b' =  \max\{-b_\bw/\norm{\tilde{\bw}},-b_\bv/\norm{\tilde{\bv}},0\} \cdot \frac{1}{\sin\left(\frac{\delta}{2}\right)}$, and assume that  $b'<\alpha$, then:
\[
 \inner{\nabla F(\bw),\bw-\bv} \geq \frac{(\alpha-b')^4\sin\left(\frac{\delta}{4}\right)^3\beta}{8^4}\cdot\min\left\{ 1, \frac{1}{\alpha^2}\right\}\norm{\bw-\bv}^2
\]
\end{theorem}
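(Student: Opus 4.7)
The plan is to extend the approach from Theorem 4.2 of \cite{yehudai2020learning} to accommodate the bias coordinate. First, plugging the gradient formula \eqref{eq:gradient of a single neuron} into the inner product yields
\[
\inner{\nabla F(\bw),\bw-\bv} = \E_\bx\bigl[\sigma'(\bw^\top\bx)(\sigma(\bw^\top\bx)-\sigma(\bv^\top\bx))(\bw-\bv)^\top\bx\bigr].
\]
A sign-region case check shows the integrand is non-negative almost surely: on $\{\bw^\top\bx>0,\bv^\top\bx>0\}$ it equals $(\bw^\top\bx-\bv^\top\bx)^2$; on $\{\bw^\top\bx>0,\bv^\top\bx\le 0\}$ it equals $\bw^\top\bx(\bw^\top\bx-\bv^\top\bx)\ge 0$ (both factors positive); and it vanishes when $\bw^\top\bx\le 0$. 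Dropping all but the first region yields
\[
\inner{\nabla F(\bw),\bw-\bv} \ge \E_\bx\bigl[\onefunc(\bw^\top\bx>0,\bv^\top\bx>0)(\bw^\top\bx-\bv^\top\bx)^2\bigr].
\]

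Second, I would reduce to a 2D integral. Since $\tilde\bw-\tilde\bv$ lies in the 2D span of $\tilde\bw,\tilde\bv$, both the integrand $((\tilde\bw-\tilde\bv)^\top\tilde\bx+(b_\bw-b_\bv))^2$ and the activation constraints depend on $\tilde\bx$ only through its orthogonal projection $\tilde\by$ onto that plane, and by the density assumption the 2D marginal has density at least $\beta$ on the disk of radius $\alpha$. Tightening the constraints to $\bar{\tilde\bw}^\top\tilde\by>b$ and $\bar{\tilde\bv}^\top\tilde\by>b$ with $b=b'\sin(\delta/2)=\max\{-b_\bw/\norm{\tilde\bw},-b_\bv/\norm{\tilde\bv},0\}$ places us in the setup of Lemma~\ref{lem:vol of P}, which provides a region $P$ with $\vol(P)\ge(\alpha-b')^2\sin(\delta/4)\cos(\delta/4)/2$. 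Writing $\tilde\bz=\tilde\bw-\tilde\bv$ and $a=b_\bw-b_\bv$ so that $\norm{\bw-\bv}^2=\norm{\tilde\bz}^2+a^2$, the problem reduces to lower bounding $\beta\int_P(\tilde\bz^\top\tilde\by+a)^2\,d\tilde\by$ by a constant multiple of $\norm{\tilde\bz}^2+a^2$.

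Third, for the reduced 2D integral I would use the variance decomposition
\[
\int_P(\tilde\bz^\top\tilde\by+a)^2\,d\tilde\by \;=\; \vol(P)\var_P(\tilde\bz^\top\tilde\by)+\vol(P)(\tilde\bz^\top\bar{\tilde\by}+a)^2,
\]
and case-split on the relative scales of $|a|$ and $\norm{\tilde\bz}$. If $|a|\le 2\alpha\norm{\tilde\bz}$ then $\norm{\bw-\bv}^2\le(1+4\alpha^2)\norm{\tilde\bz}^2$, and it suffices to lower bound the variance term. Using the two sub-rectangles $P_1,P_2\subseteq P$ from the proof of Lemma~\ref{lem:vol of P}, symmetry across the $\tilde y_1=0$ axis makes the covariance matrix on $P_1\cup P_2$ diagonal with entries of order $(\alpha-b')^2\sin^2(\delta/4)$ and $(\alpha-b')^2\cos^2(\delta/4)$, yielding $\var_{P_1\cup P_2}(\tilde\bz^\top\tilde\by)\gtrsim(\alpha-b')^2\sin^2(\delta/4)\norm{\tilde\bz}^2$ uniformly in the direction of $\tilde\bz$; combined with $\vol(P)\var_P\ge\vol(P_1\cup P_2)\var_{P_1\cup P_2}$ this gives the required bound. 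If $|a|>2\alpha\norm{\tilde\bz}$ instead, then $\norm{\bw-\bv}^2\le(1+1/(4\alpha^2))a^2$, and since $\bar{\tilde\by}\in P$ lies in the disk of radius $\alpha$ we have $|\tilde\bz^\top\bar{\tilde\by}|\le\alpha\norm{\tilde\bz}<|a|/2$, so $(\tilde\bz^\top\bar{\tilde\by}+a)^2\ge a^2/4$ and the second summand handles this case.

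The main obstacle I foresee is establishing the direction-uniform variance lower bound $\var_P(\tilde\bz^\top\tilde\by)\gtrsim (\alpha-b')^2\sin^2(\delta/4)\norm{\tilde\bz}^2$: the two diagonal entries of the covariance on $P_1\cup P_2$ scale differently with $\delta$ (one with $\sin^2(\delta/4)$, the other with $\cos^2(\delta/4)$), and one must use $\cos(\delta/4)\ge\sqrt{2}/2$ for $\delta\le\pi$ to ensure the smaller eigenvalue is always at least a constant multiple of $(\alpha-b')^2\sin^2(\delta/4)$. After carefully tracking the trigonometric and numerical constants across both cases, and combining with the $1/(1+4\alpha^2)$ and $1/(1+1/(4\alpha^2))$ factors from the case split, the two bounds merge into the prefactor $\beta/8^4$ and the $\min\{1,1/\alpha^2\}$ factor of the claimed lower bound.
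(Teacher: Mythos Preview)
Your overall structure matches the paper: the same sign-region inequality reducing to the both-active event, the same projection to the 2D span of $\tilde\bw,\tilde\bv$ with marginal density $\geq\beta$ on the $\alpha$-disk, and the same tightening of the half-space constraints to the region $P$ of Lemma~\ref{lem:vol of P}. The only substantive difference is in how you bound the key integral $\int_P(\bu^\top\by+b_\bu)^2\,d\by$, which is the content of the paper's Proposition~\ref{prop:integral bound}. The paper expands coordinate-wise on $P_1\cup P_2$ as $(u_1y_1)^2+(u_2y_2+b_\bu)^2$ (the cross term vanishing by the $y_1\mapsto -y_1$ symmetry), case-splits on whether $u_1^2\geq\tfrac12$ or $u_2^2+b_\bu^2\geq\tfrac12$, and in the second case does a further case analysis on where the root of the affine map $y_2\mapsto u_2y_2+\sqrt{\tfrac12-u_2^2}$ falls relative to the $y_2$-interval of the rectangles. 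Your variance--mean decomposition with the case split $|a|\lessgtr 2\alpha\norm{\tilde\bz}$ is a valid and arguably cleaner alternative: the step $\vol(P)\var_P(f)\geq\vol(P_1\cup P_2)\var_{P_1\cup P_2}(f)$ does hold (via $\vol(Q)\var_Q(f)=\min_c\int_Q(f-c)^2$ and monotonicity of nonnegative integrals in $Q$), the diagonal covariance on $P_1\cup P_2$ has smallest eigenvalue $\gtrsim(\alpha-b')^2\sin^2(\delta/4)$ as you claim, and the centroid argument is correct because $P$ lies in the convex $\alpha$-disk. Both routes recover the stated constant $\beta/8^4\cdot\min\{1,\alpha^{-2}\}$ after the trigonometric bookkeeping, with your approach trading the paper's somewhat ad hoc affine-root analysis for a more structural second-moment argument.
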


% and let  $b = \max\{-b_\bw/\norm{\tilde{\bw}},-b_\bv/\norm{\tilde{\bv}},0\}$

\begin{proof}
Let $\tilde{\bx}$ be the first $d$ coordinates of $\bx$. We have that:
\begin{align}
    \inner{\nabla F(\bw),\bw-\bv} &= \mathbb{E}_{\bx\sim \cd}\left[\sigma'(\bw^\top \bx)(\sigma(\bw^\top \bx) - \sigma(\bv^\top \bx))(\bw^\top \bx - \bv^\top \bx) \right] \nonumber\\
    & \geq \mathbb{E}_{\bx\sim \cd} \left[ \mathbbm{1}(\bw^\top \bx >0, \bv^\top \bx >0) (\bw^\top \bx - \bv^\top \bx)^2 \right] \nonumber\\
    & =  \norm{\bw-\bv}^2 \cdot \mathbb{E}_{\bx\sim \cd} \left[ \mathbbm{1}(\tilde{\bw}^\top \tilde{\bx} >-b_\bw, \tilde{\bv}^\top \tilde{\bx} >-b_\bv) ((\overline{\bw-\bv})^\top\bx)^2 \right] \nonumber\\
    & \geq  \norm{\bw-\bv}^2 \cdot\inf_{\bu\in\text{span}\{\bw,\bv\},\norm{\bu}=1} \mathbb{E}_{\bx\sim \cd} \left[ \mathbbm{1}(\tilde{\bw}^\top \tilde{\bx} >-b_\bw, \tilde{\bv}^\top \tilde{\bx} >-b_\bv) (\bu^\top\bx)^2 \right] \nonumber
\end{align}

Let $b = \max\{-b_\bw/\norm{\tilde{\bw}},-b_\bv/\norm{\tilde{\bv}},0\}$, then we can bound the above equation by:
\begin{align}
    & \norm{\bw-\bv}^2 \cdot\inf_{\bu\in\text{span}\{\bw,\bv\},\norm{\bu}=1} \mathbb{E}_{\bx\sim \cd} \left[ \mathbbm{1}(\overline{\tilde{\bw}}^\top \tilde{\bx} >b, \overline{\tilde{\bv}}^\top \tilde{\bx} >b) (\bu^\top\bx)^2 \right] \nonumber\\ 
    & \geq \norm{\bw-\bv}^2 \cdot\inf_{\bu\in\text{span}\{\bw,\bv\},\norm{\bu}=1} \mathbb{E}_{\tilde{\bx}\sim \tilde{\cd}} \left[ \mathbbm{1}(\overline{\tilde{\bw}}^\top \tilde{\bx} >b, \overline{\tilde{\bv}}^\top \tilde{\bx} >b, \norm{\tilde{\bx}} \leq \alpha) (\tilde{\bu}^\top\tilde{\bx} + b_{\bu})^2 \right]\label{eq:inner prod bound}
\end{align}

Here $b_{\bu}$ is the bias term of $\bu$, $\tilde{\bu}$ are the first $d$ coordinates of $\bu$ and $\tilde{\cd}$ is the marginal distribution of $\bx$ on its first $d$ coordinates. Note that since the last coordinate represents the bias term, then the distribution on the last coordinate of $\bx$ is a constant $1$. The condition that $\norm{\bu}=1$ (equivalently $\norm{\bu}^2=1$) translates to $\norm{\tilde{\bu}}^2 + b_{\bu}^2 = 1$.

Our goal is to bound the term inside the infimum. Note that the expression inside the distribution depends just on inner products of $\tilde{\bx}$ with $\tilde{\bw}$ or $\tilde{\bv}$, hence we can consider the marginal distribution $\cd_{\tilde{\bw},\tilde{\bv}}$ of $\tilde{\bx}$ on the 2-dimensional subspace spanned by $\tilde{\bw}$ and $\tilde{\bv}$ (with density function $p_{\tilde{\bw},\tilde{\bv}}$). Let $\hat{\bw}$ and $\hat{\bv}$ be the projections of $\tilde{\bw}$ and $\tilde{\bv}$ on that subspace. Let $P=\{\by\in\reals^2: \overline{\hat{\bw}}^\top \by > b, \overline{\hat{\bv}}^\top \by > b, \norm{\by}\leq \alpha\}$, then we can bound \eqref{eq:inner prod bound} with:

\begin{align*}
    & \norm{\bw-\bv}^2 \cdot \inf_{\bu\in \reals^2, b_{\bu}\in\reals: \norm{\bu}^2 + b_{\bu}^2 = 1} \mathbb{E}_{\by\sim \cd_{\tilde{\bw},\tilde{\bv}}}\left[\mathbbm{1}(\by\in P)\cdot(\bu^\top\by + b_{\bu})^2\right] \\
    & = \norm{\bw-\bv}^2 \cdot \inf_{\bu\in \reals^2, b_{\bu}\in\reals: \norm{\bu}^2 + b_{\bu}^2 = 1} \int_{\by\in\reals^2}\mathbbm{1}(\by\in P)\cdot(\bu^\top\by + b_{\bu})^2 p_{\tilde{\bw},\tilde{\bv}}(\by)d\by \\
    & \geq \beta\norm{\bw-\bv}^2 \cdot \inf_{\bu\in \reals^2, b_{\bu}\in\reals: \norm{\bu}^2 + b_{\bu}^2 = 1} \int_{\by\in P}(\bu^\top\by + b_{\bu})^2 d\by
\end{align*}

Combining with \propref{prop:integral bound} finishes the proof
\end{proof}

\begin{proposition}\label{prop:integral bound}
Let $P=\{\by\in\reals^2: \overline{\hat{\bw}}^\top \by > b, \overline{\hat{\bv}}^\top \by > b, \norm{\by}\leq \alpha\}$ for $b \in \reals$ and $\hat{\bw},\hat{\bv}\in\reals^2$ with $\theta(\hat{\bw},\hat{\bv}) \leq \pi - \delta$ for $\delta\in [0,\pi]$. Then
\[
\inf_{\bu\in \reals^2, b_{\bu}\in\reals: \norm{\bu}^2 + b_{\bu}^2 = 1} \int_{\by\in P}(\bu^\top\by + b_{\bu})^2 d\by \geq \frac{(\alpha-b')^4\sin\left(\frac{\delta}{4}\right)^3}{8^4}\cdot\min\left\{ 1, \frac{1}{\alpha^2}\right\}
\]
for $b' = \frac{b}{\sin\left(\frac{\delta}{2}\right)}$.
\end{proposition}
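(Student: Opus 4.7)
The plan is to lower bound the integral by restricting to the two rectangles $P_1, P_2 \subseteq P$ constructed in the proof of \lemref{lem:vol of P}. Since the integrand $(\bu^\top\by+b_\bu)^2$ is non-negative, it suffices to bound $\int_{P_1\cup P_2}(\bu^\top\by+b_\bu)^2\,d\by$ from below. Writing $f(\by) := \bu^\top\by + b_\bu$ and using that $f$ is affine and each rectangle $P_i$ is symmetric about its center $\bc_i$, the decomposition $f(\by) = f(\bc_i) + \bu^\top(\by-\bc_i)$ makes the cross term vanish upon integration, so
\[
\int_{P_i} f(\by)^2\,d\by \;=\; \text{Vol}(P_i)\,f(\bc_i)^2 \;+\; \int_{P_i}\bigl(\bu^\top(\by-\bc_i)\bigr)^2 d\by,
\]
where the second (second-moment) term equals $\tfrac{\text{Vol}(P_i)}{12}(u_1^2 w_i^2 + u_2^2 h_i^2)$ for a rectangle of width $w_i$ and height $h_i$.

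Setting $R := \alpha-b'$, $\sigma := R\sin(\delta/4)$, $\tau := R\cos(\delta/4)$ and $q := b' + 3\tau/4$, the data from \lemref{lem:vol of P} give $\bc_1 = (3\sigma/4, q)$, $\bc_2 = (-3\sigma/4, q)$, dimensions $\sigma/2 \times \tau/2$ and $\text{Vol}(P_i) = \sigma\tau/4$. Summing over $i=1,2$, the antisymmetry $c_{1,1} = -c_{2,1}$ kills the $u_1 u_2$ and $u_1 b_\bu$ cross terms, and the whole expression becomes $\text{Vol}(P_1)\cdot x^\top M x$ for $x := (u_1, u_2, b_\bu)$, where
\[
M \;=\; \begin{pmatrix} 7\sigma^2/6 & 0 & 0 \\ 0 & 2q^2 + \tau^2/24 & 2q \\ 0 & 2q & 2 \end{pmatrix}.
\]
The matrix is block-diagonal: the $u_1$ coordinate is decoupled, and a $2\times 2$ block $M'$ couples $u_2$ with $b_\bu$.

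Since $\norm{x}=1$, I would use $x^\top M x \geq \lambda_{\min}(M) = \min\bigl(7\sigma^2/6,\, \lambda_{\min}(M')\bigr)$, and for the $2\times 2$ block apply the elementary bound $\lambda_{\min}(M') \geq \det(M')/\text{tr}(M')$. A direct computation yields the clean identity $\det(M') = \tau^2/12$ and $\text{tr}(M') = O(\max(1, \alpha^2))$ (using $q \leq \alpha$ and $\tau \leq R \leq \alpha$, which both follow from $b' < \alpha$). Combining with $\text{Vol}(P_1) = \sigma\tau/4 \geq R^2 \sin(\delta/4)/(4\sqrt{2})$ (using $\cos(\delta/4) \geq 1/\sqrt{2}$ for $\delta \leq \pi$), and using the crude inequality $\sin(\delta/4) \geq \sin^3(\delta/4)$ to convert the $M'$-dominated case into the desired $\sin^3(\delta/4)$ form, a numerical check of the constants produces the stated bound with the loose factor $1/8^4$.

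The main obstacle is the potential near-singularity of the $2\times 2$ block $M'$: with $u_1 = 0$ and $(u_2, b_\bu)$ on the unit circle chosen so that $q u_2 + b_\bu = 0$, both $f(\bc_1)$ and $f(\bc_2)$ vanish, so the center contributions alone yield no bound. It is the $y_2$-direction second-moment term (the $\tau^2/24$ addendum to $M'_{22}$) that prevents $\det(M')$ from collapsing to zero and supplies the clean identity $\det(M') = \tau^2/12$, which in turn is what produces the factor $\min(1, 1/\alpha^2)$ (rather than a complete loss of the lower bound) in the final estimate.
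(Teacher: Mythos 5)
Your proposal is correct and takes a genuinely different route from the paper's proof. Both arguments reduce to the two rectangles $P_1,P_2$ from \lemref{lem:vol of P}, but after that they diverge. The paper expands $(\bu^\top\by+b_\bu)^2 = (u_1y_1)^2 + (u_2y_2+b_\bu)^2 + 2u_1y_1(u_2y_2+b_\bu)$, kills the cross term using the reflection symmetry of $P_1\cup P_2$ across the $y_2$-axis, and then does a two-level case analysis: first on whether $u_1^2\geq\tfrac12$ or $u_2^2+b_\bu^2\geq\tfrac12$, and in the latter case a further split on where the root $-\sqrt{1/2-u^2}/u$ of the 1D affine function lies relative to the interval $C$. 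Your approach instead computes the integral \emph{exactly} as a quadratic form $\vol(P_1)\,x^\top M x$ (via the rectangle-center decomposition $f(\by)=f(\bc_i)+\bu^\top(\by-\bc_i)$, which kills the linear term by choice of centroid and the $u_1u_2$ cross term by rectangle symmetry, while the antisymmetry $c_{1,1}=-c_{2,1}$ kills the $u_1$-to-$(u_2,b_\bu)$ coupling), exhibits $M$ as block-diagonal, and bounds $\lambda_{\min}$ of the coupled $2\times2$ block by $\det/\mathrm{tr}$. The identity $\det(M')=\tau^2/12$ is the structural heart of the argument and makes precise, in one line, \emph{why} the bound does not degenerate along the line $qu_2+b_\bu=0$ --- the $y_2$-direction second moment of the rectangles saves the determinant --- which the paper's proof accomplishes only after its second case split. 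I verified your computations ($M_{11}=7\sigma^2/6$, $M'_{22}=2q^2+\tau^2/24$, $M'_{12}=2q$, $\det M'=\tau^2/12$) and the constants: with $\mathrm{tr}(M')\leq 5\max\{1,\alpha^2\}$, $\cos(\delta/4)\geq1/\sqrt2$, and $\sin(\delta/4)\geq\sin^3(\delta/4)$, the two cases give prefactors $\approx 7/(24\sqrt2)$ and $\approx 1/(480\sqrt2)$ respectively, both comfortably above $1/8^4$. Your approach is more systematic, eliminates the ad hoc case analysis, and makes the geometry more transparent; what it gives up is only elementarity (it invokes the spectral bound $\lambda_{\min}\geq\det/\mathrm{tr}$ rather than direct pointwise estimates), which is a fine trade. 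One small implicit assumption, shared with the paper's own proof, is $b'\geq0$ (needed so that $q,\tau\leq\alpha$); it holds in every application the paper makes of this proposition.
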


\begin{proof}
As in the proof of \lemref{lem:vol of P}, we consider the rectangular sets:
\begin{align*}
    P_1 &= \left[\frac{(\alpha - b')}{2}\sin\left(\frac{\delta}{4}\right),{(\alpha - b')}\sin\left(\frac{\delta}{4}\right)\right] \times \left[b' + \frac{(\alpha - b')}{2}\cos\left(\frac{\delta}{4}\right),b' + (\alpha - b')\cos\left(\frac{\delta}{4}\right)\right] \\
    P_2 &= \left[-{(\alpha - b')}\sin\left(\frac{\delta}{4}\right),-\frac{(\alpha - b')}{2}\sin\left(\frac{\delta}{4}\right)\right] \times \left[b' + \frac{(\alpha - b')}{2}\cos\left(\frac{\delta}{4}\right),b' + (\alpha - b')\cos\left(\frac{\delta}{4}\right)\right]
\end{align*}
with $b' = \frac{b}{\sin(\delta/2)}$. Since we have $P_1\cup P_2\subseteq P$, and the function inside the integral is positive, we can lower bound the target integral by integrating only over $P_1\cup P_2$. Now we have: 

\begin{align*}
    & \inf_{\bu\in \reals^2, b_{\bu}\in\reals: \norm{\bu}^2 + b_{\bu}^2 = 1} \int_{\by\in P}(\bu^\top\by + b_{\bu})^2 d\by  \\
    & \geq \inf_{u_1,u_2,b_{\bu}\in\reals: u_1^2 + u_2^2 + b_{\bu}^2 = 1} \int_{\by\in P_1\cup P_2}(u_1y_1+u_2y_2 + b_{\bu})^2 d\by \\
    & = \inf_{u_1,u_2,b_{\bu}\in\reals: u_1^2 + u_2^2 + b_{\bu}^2 = 1} \int_{\by\in P_1\cup P_2}(u_1y_1)^2d\by + \int_{\by\in P_1\cup P_2}(u_2y_2+b_{\bu})^2d\by + \int_{\by\in P_1\cup P_2}2u_1y_1(u_2y_2+b_{\bu})d\by   \\
    & = \inf_{u_1,u_2,b_{\bu}\in\reals: u_1^2 + u_2^2 + b_{\bu}^2 = 1} \int_{\by\in P_1\cup P_2}(u_1y_1)^2d\by + \int_{\by\in P_1\cup P_2}(u_2y_2+b_{\bu})^2d\by
\end{align*}
where in the last equality we used that $P_1\cup P_2$ are symmetric around the $y_2$ axis, i.e. $(y_1,y_2)\in P_1\cup P_2$ iff $(-y_1,y_2)\in P_1\cup P_2$. By the condition that $u_1^2+u_2^2+b_{\bu}^2=1$ we know that either $u_1^2\geq \frac{1}{2}$ or $u_2^2 + b_\bu^2  \geq \frac{1}{2}$. Using that both integrals above are positive, we can lower bound:
\begin{align*}
    &\inf_{u_1,u_2,b_{\bu}\in\reals: u_1^2 + u_2^2 + b_{\bu}^2 = 1} \int_{\by\in P_1\cup P_2}(u_1y_1)^2d\by + \int_{\by\in P_1\cup P_2}(u_2y_2+b_{\bu})^2d\by \\
    \geq & \min\left\{\frac{1}{2}\int_{\by\in P_1\cup P_2}y_1^2d\by, \inf_{u_2,u_3\in\reals:  u_2^2 + u_3^2 = \frac{1}{2}} \int_{\by\in P_1\cup P_2} (u_2y_2+u_3)^2d\by \right\}~.
\end{align*}

We will now lower bound both terms in the above equation. For the first term, note that for every $\by\in P_1\cup P_2$ we have that $|y_1| \geq \frac{(\alpha -b')}{2}\sin\left(\frac{\delta}{4}\right)$. Hence we have:
\begin{align}\label{eq:int bound on y 1}
    &\frac{1}{2}\int_{\by\in P_1\cup P_2}y_1^2d\by \geq \nonumber \\
     \geq & \frac{1}{2}\int_{\by\in P_1\cup P_2}\frac{(\alpha -b')^2}{4}\sin\left(\frac{\delta}{4}\right)^2d\by \nonumber \\
    = & \frac{(\alpha -b')^2}{8}\sin\left(\frac{\delta}{4}\right)^2 \cdot \frac{(\alpha - b')^2}{2}\sin\left(\frac{\delta}{4}\right)\cos\left(\frac{\delta}{4}\right) \nonumber \\
    \geq & \frac{(\alpha - b')^4}{16\sqrt{2}}\sin\left(\frac{\delta}{4}\right)^3
\end{align}
where in the last inequality we used that $\delta\in[0,\pi]$, hence $\delta/4\in [0,\pi/4]$.

For the second term we have:
\begin{align}
    &\inf_{u_2,u_3\in\reals:  u_2^2 + u_3^2 = \frac{1}{2}} \int_{\by\in P_1\cup P_2} (u_2y_2+u_3)^2d\by \nonumber\\
     =& \inf_{u\in \left[-\frac{1}{\sqrt{2}},\frac{1}{\sqrt{2}}\right]} \int_{\by\in P_1\cup P_2} \left(uy_2+\sqrt{\frac{1}{2}-u^2}\right)^2d\by \nonumber\\
     =& {(\alpha - b')}\sin\left(\frac{\delta}{4}\right)\inf_{u\in \left[-\frac{1}{\sqrt{2}},\frac{1}{\sqrt{2}}\right]} \int_{y_2\in C} \left(uy_2+\sqrt{\frac{1}{2}-u^2}\right)^2dy_2~.\label{eq:bound on inf integral}
\end{align}
The last equality is given by changing the order of integration into integral over $y_2$ and then over $y_1$, denoting the interval $C = \left[b' + \frac{(\alpha - b')}{2}\cos\left(\frac{\delta}{4}\right),b' + (\alpha - b')\cos\left(\frac{\delta}{4}\right)\right]$, and noting that the term inside the integral does not depend on $y_1$.

Fix some $u\in \left[-\frac{1}{\sqrt{2}},\frac{1}{\sqrt{2}}\right]$. If $u=0$, then we can bound \eqref{eq:bound on inf integral} by $\frac{(\alpha-b')^2}{4}\sin\left(\frac{\delta}{4}\right)\cos\left(\frac{\delta}{4}\right)$. Assume $u\neq 0$, we split into cases and bound the term inside the integral:

\textbf{Case I:}  $\left|\frac{\sqrt{\frac{1}{2}-u^2}}{u}\right| \geq b'+\frac{3}{4}\cdot(\alpha-b')\cos\left(\frac{\delta}{4}\right)$. In this case, solving the inequality for $u$ we have $|u| \leq \sqrt{\frac{1}{2+2\left(b'+\frac{3}{4}(\alpha - b')\cos\left(\frac{\delta}{4}\right)\right)^2}}$. Hence, we can also bound:
\[
\sqrt{\frac{1}{2}-u^2} \geq \sqrt{\frac{1}{2} - \frac{1}{2+2\left(b'+\frac{3}{4}(\alpha - b')\cos\left(\frac{\delta}{4}\right)\right)^2}} = \sqrt{\frac{\left(b'+\frac{3}{4}(\alpha - b')\cos\left(\frac{\delta}{4}\right)\right)^2}{2+2\left(b'+\frac{3}{4}(\alpha - b')\cos\left(\frac{\delta}{4}\right)\right)^2}}
\]
In particular, for every $y_2\in \left[b' + \frac{(\alpha - b')}{2}\cos\left(\frac{\delta}{4}\right),b' + \frac{5(\alpha - b')}{8}\cos\left(\frac{\delta}{4}\right)\right]$ we get that:
\begin{align*}
    &\left|u y_2 + \sqrt{\frac{1}{2} - u^2}\right| \\
    \geq & \left| \sqrt{\frac{\left(b'+\frac{3}{4}(\alpha - b')\cos\left(\frac{\delta}{4}\right)\right)^2}{2+2\left(b'+\frac{3}{4}(\alpha - b')\cos\left(\frac{\delta}{4}\right)\right)^2}} - \sqrt{\frac{\left(b'+\frac{5}{8}(\alpha - b')\cos\left(\frac{\delta}{4}\right)\right)^2}{2+2\left(b'+\frac{3}{4}(\alpha - b')\cos\left(\frac{\delta}{4}\right)\right)^2}}\right| \\
    \geq & \frac{(\alpha - b')\cos\left(\frac{\delta}{4}\right)}{8\sqrt{2+2\left(b'+(\alpha-b')\cos\left(\frac{\delta}{4}\right)  \right)^2}}
\end{align*}

\textbf{Case II:}  $\left|\frac{\sqrt{\frac{1}{2}-u^2}}{u}\right| < b'+\frac{3}{4}\cdot(\alpha-b')\cos\left(\frac{\delta}{4}\right)$. Using the same reasoning as above, we get for every $y_2\in \left[b' + \frac{7(\alpha - b')}{8}\cos\left(\frac{\delta}{4}\right),b' + (\alpha - b')\cos\left(\frac{\delta}{4}\right)\right]$ that: 
\[
\left|u y_2 + \sqrt{\frac{1}{2} - u^2}\right| \geq \frac{(\alpha - b')\cos\left(\frac{\delta}{4}\right)}{8\sqrt{2+2\left(b'+(\alpha-b')\cos\left(\frac{\delta}{4}\right)  \right)^2}}
\]

Combining the above cases with \eqref{eq:bound on inf integral} we get that:

\begin{align}\label{eq:int bound on y 2 u 2}
    &\inf_{u_2,u_3\in\reals:  u_2^2 + u_3^2 = \frac{1}{2}} \int_{\by\in P_1\cup P_2} (u_2y_2+u_3)^2d\by  \nonumber\\
    & \geq (\alpha - b')\sin\left(\frac{\delta}{4}\right)\int_{y_2\in C}  \frac{(\alpha - b')^2\cos\left(\frac{\delta}{4}\right)^2}{8^2(2+2\left(b'+(\alpha-b')\cos\left(\frac{\delta}{4}\right)  \right)^2)}dy_2 \nonumber\\
    & \geq \frac{(\alpha-b')^4\cos\left(\frac{\delta}{4}\right)^3\sin\left(\frac{\delta}{4}\right)}{2\cdot 8^2\left(2+2\left(b'+(\alpha-b')\cos\left(\frac{\delta}{4}\right)\right)\right)^2} \nonumber \\
    & \geq \frac{(\alpha-b')^4\sin\left(\frac{\delta}{4}\right)^3}{2\cdot 8^2\sqrt{2}\left(2+2\left(b'+(\alpha-b')\cos\left(\frac{\delta}{4}\right)\right)\right)^2} \nonumber \\
    &\geq \frac{(\alpha-b')^4\sin\left(\frac{\delta}{4}\right)^3}{8^4}\cdot\min\left\{ 1, \frac{1}{\alpha^2}\right\}
\end{align}
where in the second inequality we used that for $\delta\in [0,\pi]$ we have $\sin\left(\frac{\delta}{4}\right)\leq \cos\left(\frac{\delta}{4}\right)$, and in the last inequality we used that $b'\leq \alpha$, and $(\alpha-b')\cos\left(\frac{\delta}{4}\right)\leq \alpha$. Combining \eqref{eq:int bound on y 1} with \eqref{eq:int bound on y 2 u 2} finishes the proof.

\end{proof}

\section{Proofs from \secref{sec:negative results}}\label{appen:proofs from neg results}

\subsection{Proof of \thmref{thm:negative example 1}}
\label{appen:proofs from neg result1}

%\begin{proof}
Let $\epsilon > 0$, for the input distribution, we consider the uniform distribution on the ball of radius $\epsilon$. 
% Note that the distribution is spherically symmetric, hence using Theorem 6.4 from \cite{yehudai2020learning} we get that w.p $> 1-\epsilon$ gradient flow will converge to the optimum. 
Let $b_\bw$ be the last coordinate of $\bw$, and denote by $\tilde{\bw},\tilde{\bx}$ the first $d$ coordinates of $\bw$ and $\bx$. Using the assumption on the initialization of $\bw_0$ and on the boundness of the distribution $\tilde{\Dcal}$ we have:
\[
|\inner{\tilde{\bw}_0,\tilde{\bx}}| \leq \norm{\tilde{\bw}_0}\norm{\tilde{\bx}} \leq \epsilon\sqrt{d}.
\]
Since $b_{\bw_0}$ is also initialized with $U([-1,1])$, w.p $> 1/2 - \epsilon\sqrt{d}$ we have that $b_{\bw_0} <-\epsilon\sqrt{d}$. If this event happens, since the activation is ReLU we get that $\sigma'(\inner{\bw_0,\bx}) = \mathbbm{1}(\inner{\tilde{\bw}_0,\tilde{\bx}} + b_{\bw_0} > 0) = 0$ for every $\tilde{\bx}$ in the support of the distribution. Using \eqref{eq:gradient of a single neuron} we get that $\nabla F(\bw_0) =0$, hence gradient flow will get stuck at its initial value.
%\end{proof}

\subsection{Proof of \thmref{thm:negative2}}
\label{appen:proofs from neg result2}

\begin{lemma}
\label{lem:conditions for init}
	Let $\bw \in \reals^{d+1}$ such that $b_\bw = 0$, $\tilde{w}_1 < - \frac{4}{\sqrt{d}}$, and $\norm{\tilde{\bw}_{2:d}} \leq 2\sqrt{d}$. Then, 
	\[
		\Pr_{\bx \sim \cd} \left[ \bw^\top \bx \geq 0,  \bv^\top \bx \geq 0\right] = 0~.
	\]
\end{lemma}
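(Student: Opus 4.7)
The plan is to show directly that no $\bx$ in the support of $\cd$ can simultaneously satisfy $\bw^\top \bx \ge 0$ and $\bv^\top \bx \ge 0$; once this is established the probability in the statement is trivially zero.

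First I would extract what the constraint $\bv^\top \bx \ge 0$ means geometrically. Since $\tilde{\bv} = \be_1$ and $b_\bv = -(r - \tfrac{r}{2d^2})$, the condition $\bv^\top \bx \ge 0$ is equivalent to $\tilde x_1 \ge r - \tfrac{r}{2d^2}$. Combined with the support condition $\norm{\tilde\bx}\le r$, this forces
\[
\norm{\tilde\bx_{2:d}}^2 \;\le\; r^2 - \Bigl(r - \tfrac{r}{2d^2}\Bigr)^2 \;=\; \tfrac{r^2}{d^2} - \tfrac{r^2}{4d^4} \;\le\; \tfrac{r^2}{d^2},
\]
so $\norm{\tilde\bx_{2:d}} \le r/d$. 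Thus conditioning on $\bv^\top\bx\ge 0$ pins $\tilde\bx$ into a very thin slab near the pole $r\be_1$.

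Next I would bound $\bw^\top\bx$ from above on this slab using the hypotheses on $\bw$. Since $b_\bw=0$, we have $\bw^\top\bx = \tilde w_1 \tilde x_1 + \tilde\bw_{2:d}^\top \tilde\bx_{2:d}$. The assumption $\tilde w_1 < -\tfrac{4}{\sqrt d}$ together with $\tilde x_1 \ge r - \tfrac{r}{2d^2} \ge \tfrac{7r}{8}$ (valid for $d\ge 2$) gives $\tilde w_1 \tilde x_1 < -\tfrac{4}{\sqrt d}\cdot \tfrac{7r}{8} = -\tfrac{7r}{2\sqrt d}$. Cauchy--Schwarz together with $\norm{\tilde\bw_{2:d}}\le 2\sqrt d$ and the slab bound above yields $|\tilde\bw_{2:d}^\top \tilde\bx_{2:d}| \le 2\sqrt d \cdot \tfrac{r}{d} = \tfrac{2r}{\sqrt d}$. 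Adding the two estimates gives $\bw^\top\bx < -\tfrac{7r}{2\sqrt d} + \tfrac{2r}{\sqrt d} < 0$, contradicting $\bw^\top\bx\ge 0$.

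Since no point of the support satisfies both inequalities, the probability is zero, completing the proof. The only mildly delicate step is the geometric slab bound $\norm{\tilde\bx_{2:d}}\le r/d$, which hinges on carefully expanding $(r-\tfrac{r}{2d^2})^2$; the rest is bookkeeping with the triangle and Cauchy--Schwarz inequalities, so I do not anticipate a real obstacle.
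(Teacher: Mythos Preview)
Your proposal is correct and follows essentially the same approach as the paper: derive the slab bound $\norm{\tilde\bx_{2:d}}\le r/d$ from $\bv^\top\bx\ge 0$ together with $\norm{\tilde\bx}\le r$, then use the hypotheses on $\tilde w_1$ and $\norm{\tilde\bw_{2:d}}$ with Cauchy--Schwarz to force $\bw^\top\bx<0$. The only cosmetic difference is that the paper uses the cruder bound $\tilde x_1\ge r/2$ (valid for all $d\ge 1$) in place of your $\tilde x_1\ge 7r/8$ (valid for $d\ge 2$), which makes the final arithmetic land exactly at $0$ rather than strictly below it.
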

\begin{proof}
	If $\bv^\top \bx \geq 0$ then 
	%$x_1 \geq \sqrt{d} - \frac{1}{2d\sqrt{d}}$ 
	$x_1 \geq r - \frac{r}{2d^2}$
	and hence 
	%$x_1^2 \geq d - \frac{1}{d}$. 
	$x_1^2 \geq r^2 - \frac{r^2}{d^2}$.
	Since we also have 
	%$\norm{\tilde{\bx}} \leq \sqrt{d}$
	$\norm{\tilde{\bx}} \leq r$
	then  
	\[
%		\norm{\tilde{\bx}_{2:d}}^2 = \norm{\tilde{\bx}}^2 - x_1^2 \leq d - \left(d - \frac{1}{d}\right) =  \frac{1}{d}~.
		\norm{\tilde{\bx}_{2:d}}^2 = \norm{\tilde{\bx}}^2 - x_1^2 \leq r^2 - \left(r^2 - \frac{r^2}{d^2}\right) =  \frac{r^2}{d^2}~.
	\]
	Hence,
	\[
		\Pr_{\bx \sim \cd} \left[ \bw^\top \bx \geq 0,  \bv^\top \bx \geq 0\right] 
		%\leq \Pr_{\bx \sim \cd} \left[ \bw^\top \bx \geq 0, x_1 \geq \sqrt{d} - \frac{1}{2d\sqrt{d}}, \norm{\tilde{\bx}_{2:d}} \leq \frac{1}{\sqrt{d}} \right] ~.
		\leq \Pr_{\bx \sim \cd} \left[ \bw^\top \bx \geq 0,x_1 \geq r - \frac{r}{2d^2}, \norm{\tilde{\bx}_{2:d}} \leq \frac{r}{d} \right] ~.
	\]

	Since $b_\bw = 0$, $\norm{\tilde{\bw}_{2:d}} \leq 2\sqrt{d}$ and $\tilde{w}_1 < - \frac{4}{\sqrt{d}}$, then for every $\tilde{\bx} \in \Bcal$  such that 
	%$x_1 \geq \sqrt{d} - \frac{1}{2d\sqrt{d}} \geq \frac{\sqrt{d}}{2}$ and $\norm{\tilde{\bx}_{2:d}} \leq \frac{1}{\sqrt{d}}$ 
	$x_1 \geq r - \frac{r}{2d^2} \geq \frac{r}{2}$ and $\norm{\tilde{\bx}_{2:d}} \leq \frac{r}{d}$
	we have
	\[
		\bw^\top \bx
		= \tilde{\bw}^\top \tilde{\bx}
		= \tilde{w}_1 \tilde{x}_1 + \inner{\tilde{\bw}_{2:d}, \tilde{\bx}_{2:d}}
		%< - \frac{4}{\sqrt{d}} \cdot \frac{\sqrt{d}}{2} + 2\sqrt{d} \cdot \frac{1}{\sqrt{d}}
		< - \frac{4}{\sqrt{d}} \cdot \frac{r}{2} + 2\sqrt{d} \cdot \frac{r}{d}
		= 0~.
	\]
	Therefore, $\Pr_{\bx \sim \cd} \left[ \bw^\top \bx \geq 0,  \bv^\top \bx \geq 0\right] = 0$.
\end{proof}

\begin{lemma}
\label{lem:negative2 random init}
	With probability $\frac{1}{2} - o_d(1)$ over the choice of $\bw_0$, we have 
	\[
		\Pr_{\bx \sim \cd} \left[ \bw_0^\top \bx \geq 0,  \bv^\top \bx \geq 0\right] = 0~.
	\]
\end{lemma}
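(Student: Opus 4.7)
The plan is to apply \lemref{lem:conditions for init} to the random vector $\bw_0$, after showing that its three deterministic hypotheses hold simultaneously with probability at least $\frac{1}{2}-o_d(1)$. The hypothesis $b_{\bw_0}=0$ is built into the theorem, so only two events on $\tilde{\bw}_0$ need to be estimated: the first-coordinate event $\tilde{w}_{0,1}<-4/\sqrt{d}$, and the norm event $\|\tilde{\bw}_{0,2:d}\|\le 2\sqrt{d}$.

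The norm event is the easier of the two. Since $\tilde{\bw}_0$ lies on the sphere of radius $\rho$, we have $\|\tilde{\bw}_{0,2:d}\|\le \|\tilde{\bw}_0\| = \rho$, and for every $d$ large enough (relative to $\rho$) this immediately gives $\|\tilde{\bw}_{0,2:d}\|\le 2\sqrt{d}$ deterministically. So this event fails with probability $o_d(1)$.

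For the first-coordinate event I would use the standard representation $\tilde{\bw}_0 = \rho\, g/\|g\|$ with $g\sim\Ncal(\zero,I_d)$. By rotational symmetry of the uniform measure on the sphere, the marginal of $\tilde{w}_{0,1}$ is symmetric about $0$, giving $\Pr[\tilde{w}_{0,1}<0] = \tfrac{1}{2}$. It therefore suffices to prove the anti-concentration bound
\[
\Pr\!\left[-\tfrac{4}{\sqrt{d}}\le \tilde{w}_{0,1}\le 0\right] = o_d(1),
\]
and then to combine this with the norm event via a union bound.

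The main obstacle is precisely this anti-concentration estimate: one must control the density of $\tilde{w}_{0,1}$ on a shrinking interval of length $\Theta(1/\sqrt{d})$ around the origin in high dimension. This can be handled either via the explicit marginal density $f_d(x)\propto(1-(x/\rho)^2)^{(d-3)/2}$ together with Stirling-type estimates for its normalizing constant, or directly from the Gaussian representation by showing $\|g\|=\sqrt{d}\,(1+o_p(1))$ and reducing the event to an anti-concentration statement for a nearly standard Gaussian variable. Once this estimate is in hand, a union bound with the norm event yields that the hypotheses of \lemref{lem:conditions for init} hold with probability at least $\tfrac{1}{2}-o_d(1)$, which immediately gives the claimed conclusion.
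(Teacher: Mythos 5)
Your strategy — verify the three hypotheses of \lemref{lem:conditions for init} directly for $\tilde{\bw}_0$ uniform on the sphere of radius $\rho$ — runs into a genuine obstacle, and your proposed anti-concentration bound is in fact false for fixed $\rho$. Writing $\tilde{\bw}_0=\rho\,g/\norm{g}$ with $g\sim\Ncal(\zero,I_d)$ and $\norm{g}=\sqrt{d}(1+o_p(1))$, the marginal first coordinate satisfies $\sqrt{d}\,\tilde{w}_{0,1}/\rho\to\Ncal(0,1)$, i.e.\ $\tilde{w}_{0,1}$ lives on scale $\Theta(\rho/\sqrt{d})$. Equivalently, the explicit marginal density $f_d(x)\propto(1-(x/\rho)^2)^{(d-3)/2}$ has $f_d(0)=\Theta(\sqrt{d}/\rho)$, so
\[
\Pr\!\left[-\tfrac{4}{\sqrt{d}}\le \tilde{w}_{0,1}\le 0\right] \approx f_d(0)\cdot\tfrac{4}{\sqrt{d}} = \Theta\!\left(\tfrac{1}{\rho}\right),
\]
a constant, not $o_d(1)$. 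In fact $\Pr[\tilde{w}_{0,1}<-4/\sqrt{d}]\to\Phi(-4/\rho)$, which for small fixed $\rho$ is far below $1/2$. The trouble is that the hypothesis $\tilde{w}_1<-4/\sqrt{d}$ of \lemref{lem:conditions for init} is calibrated to vectors of norm $\Theta(\sqrt{d})$ (Gaussian scale), while your $\tilde{\bw}_0$ has norm $\rho$.

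The paper sidesteps this by exploiting that the \emph{conclusion} of \lemref{lem:conditions for init} is scale-invariant in $\bw$: $\Pr_{\bx\sim\cd}[\bw^\top\bx\ge 0,\ \bv^\top\bx\ge 0]$ depends only on the direction of $\bw$. So it first applies the lemma to a Gaussian surrogate $\bw$ with $\tilde{\bw}\sim\Ncal(\zero,I_d)$ and $b_\bw=0$, for which the hypotheses are easy ($\tilde{w}_1\sim\Ncal(0,1)$ gives $\Pr[\tilde{w}_1<-4/\sqrt{d}]=1/2-o_d(1)$, and $\norm{\tilde{\bw}_{2:d}}^2\sim\chi^2_{d-1}$ gives $\Pr[\norm{\tilde{\bw}_{2:d}}^2\le 4d]=1-o_d(1)$), and then observes that $\rho\,\bw/\norm{\bw}$ has the distribution of $\bw_0$ and yields the same probability-zero event. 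If you wish to keep the spirit of your argument, you could equivalently rescale: apply \lemref{lem:conditions for init} not to $\tilde{\bw}_0$ itself but to $(\sqrt{d}/\rho)\tilde{\bw}_0$ (permitted since only the direction matters), whose coordinates are $\Theta(1)$; then your anti-concentration step goes through. As written, the rescaling is missing and the proof does not close.
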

\begin{proof}
	Let $\bw \in \reals^{d+1}$ such that $b_\bw = 0$ and  $\tilde{\bw} \sim \Ncal(\zero,I_{d})$.
	Since $\tilde{w}_1$ has a standard normal distribution, then we have $\tilde{w}_1 < - \frac{4}{\sqrt{d}}$ with probability $\frac{1}{2} - o_d(1)$. Moreover, note that $\norm{\tilde{\bw}_{2:d}}^2$ has a chi-square distribution and the probability of $\norm{\tilde{\bw}_{2:d}}^2 \leq 4d$ is $1-o_d(1)$.
	Hence, by Lemma~\ref{lem:conditions for init}, with probability $\frac{1}{2} - o_d(1)$ over the choice of $\bw$, we have
	\[
		\Pr_{\bx \sim \cd} \left[ \bw^\top \bx \geq 0,  \bv^\top \bx \geq 0\right] = 0~.
	\]
	Therefore,
	\begin{align*}	
		\Pr_{\bx \sim \cd} \left[\rho \frac{\bw^\top}{\norm{\bw}} \bx \geq 0,  \bv^\top \bx \geq 0\right] 
		= \Pr_{\bx \sim \cd} \left[\bw^\top \bx \geq 0,  \bv^\top \bx \geq 0\right] 
		= 0~. 
	\end{align*}
	Since $\rho \frac{\bw^\top}{\norm{\bw}}$ has the distribution of $\bw_0$, the lemma follows. 
\end{proof}

%\begin{lemma}
%	Assume that $\bw_0$ satisfies $\Pr_{\bx \sim \cd} \left[\bw_0^\top \bx \geq 0, \bv^\top \bx \geq 0 \right] = 0$. Let $\gamma > 0$ and let $\bw \in \reals^{d+1}$ such that $\tilde{\bw} = \gamma \tilde{\bw}_0$, and $b_\bw < 0$. Then,  $\Pr_{\bx \sim \cd} \left[\bw^\top \bx \geq 0, \bv^\top \bx \geq 0 \right] = 0$.
%\end{lemma}
%\begin{proof}
%	For every $\bx$ we have: If $\bw^\top \bx = \gamma \tilde{\bw}_0^\top \tilde{\bx} + b_\bw \geq 0$ then $ \gamma \tilde{\bw}_0^\top \tilde{\bx} \geq 0$. Therefore we have  $\bw_0^\top \bx = \tilde{\bw}_0^\top \tilde{\bx} \geq 0$, which implies 
%	\[
%		\Pr_{\bx \sim \cd} \left[\bw^\top \bx \geq 0, \bv^\top \bx \geq 0 \right] 
%		\leq \Pr_{\bx \sim \cd} \left[\bw_0^\top \bx \geq 0, \bv^\top \bx \geq 0 \right] 
%		= 0~.
%	\] 
%\end{proof}

\begin{lemma}
\label{lem:negative2 dynamics}
	Assume that $\bw_0$ satisfies $\Pr_{\bx \sim \cd} \left[\bw_0^\top \bx \geq 0, \bv^\top \bx \geq 0 \right] = 0$.
	Let $\gamma > 0$ and let $\bw \in \reals^{d+1}$ such that $\tilde{\bw} = \gamma \tilde{\bw}_0$, and $b_\bw \leq 0$.
	Then,  $\Pr_{\bx \sim \cd} \left[\bw^\top \bx \geq 0, \bv^\top \bx \geq 0 \right] = 0$.
	Moreover, we have
	\begin{itemize}
		%\item	If $\frac{b_\bw}{\norm{\tilde{\bw}}} > -\sqrt{d}$, then $\frac{d \tilde{\bw}}{dt} = -s \tilde{\bw}$ for some $s > 0$, and $\frac{d b_\bw}{dt} < 0$.  
		\item	If $-\frac{b_\bw}{\norm{\tilde{\bw}}} < r$, then $\frac{d \tilde{\bw}}{dt} = -s \tilde{\bw}$ for some $s > 0$, and $\frac{d b_\bw}{dt} < 0$.  
%		\item If $\frac{b_\bw}{\norm{\tilde{\bw}}} \leq -\sqrt{d}$, then $\frac{d \tilde{\bw}}{dt} = \zero$ and $\frac{d b_\bw}{dt} = 0$.
		\item If $-\frac{b_\bw}{\norm{\tilde{\bw}}} \geq r$, then $\frac{d \tilde{\bw}}{dt} = \zero$ and $\frac{d b_\bw}{dt} = 0$.
	\end{itemize}
\end{lemma}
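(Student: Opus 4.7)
My strategy is to first reduce the gradient of $F$ at $\bw$ to a single expectation (with the $\sigma(\bv^\top\bx)$ contribution killed by the hypothesis), and then exploit spherical symmetry of the uniform distribution around the axis $\tilde{\bw}$.

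\emph{Step 1 (the containment).} Since $\tilde{\bw}=\gamma\tilde{\bw}_0$ with $\gamma>0$ and $b_\bw\le 0$ while $b_{\bw_0}=0$, any $\bx$ with $\bw^\top\bx\ge 0$ satisfies $\gamma\tilde{\bw}_0^\top\tilde{\bx}\ge -b_\bw\ge 0$, hence $\bw_0^\top\bx=\tilde{\bw}_0^\top\tilde{\bx}\ge 0$. So $\{\bw^\top\bx\ge 0,\bv^\top\bx\ge 0\}\subseteq\{\bw_0^\top\bx\ge 0,\bv^\top\bx\ge 0\}$, which by hypothesis has probability $0$. This proves the first claim.

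\emph{Step 2 (reduction of the gradient).} Plugging into \eqref{eq:gradient of a single neuron} and using Step 1 to drop the cross term,
\[
\nabla F(\bw)=\E_{\bx\sim\cd}\bigl[\onefunc(\bw^\top\bx\ge 0)\,(\bw^\top\bx)\,\bx\bigr].
\]
Write $\hat{\bw}=\tilde{\bw}/\norm{\tilde{\bw}}$ and decompose $\tilde{\bx}=t\hat{\bw}+\tilde{\by}$ with $\tilde{\by}\perp\hat{\bw}$; then $\bw^\top\bx=\norm{\tilde{\bw}}(t-t_0)$ where $t_0:=-b_\bw/\norm{\tilde{\bw}}\ge 0$. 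The first $d$ coordinates of $\nabla F(\bw)$ equal
\[
\E\bigl[\onefunc(t\ge t_0)\,\norm{\tilde{\bw}}(t-t_0)(t\hat{\bw}+\tilde{\by})\bigr].
\]
Since $\tilde{\Dcal}$ is uniform on the ball, conditional on $t$ the distribution of $\tilde{\by}$ is symmetric about the origin of the $(d-1)$-dimensional hyperplane orthogonal to $\hat{\bw}$, so the $\tilde{\by}$-contribution vanishes. Hence the first $d$ coordinates of $\nabla F(\bw)$ equal $\kappa\,\hat{\bw}$ with
\[
\kappa=\norm{\tilde{\bw}}\,\E\bigl[\onefunc(t\ge t_0)\,(t-t_0)\,t\bigr].
\]
Similarly, the last coordinate of $\nabla F(\bw)$ equals $\mu:=\norm{\tilde{\bw}}\,\E\bigl[\onefunc(t\ge t_0)(t-t_0)\bigr]$.

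\emph{Step 3 (case analysis on $t_0$ vs.\ $r$).} If $t_0\ge r$, then $\{t\ge t_0\}\cap\Bcal$ has measure zero (since $|t|\le\norm{\tilde{\bx}}\le r$), so $\kappa=\mu=0$, giving $\frac{d\tilde{\bw}}{dt}=\zero$ and $\frac{db_\bw}{dt}=0$. If $t_0<r$, then $\{t\ge t_0\}$ has positive measure; on this set $(t-t_0)\ge 0$ and $t\ge t_0\ge 0$, so both integrands are nonnegative and strictly positive on a positive-measure subset, yielding $\kappa>0$ and $\mu>0$. Consequently $\frac{d\tilde{\bw}}{dt}=-\kappa\hat{\bw}=-s\,\tilde{\bw}$ with $s=\kappa/\norm{\tilde{\bw}}^2>0$, and $\frac{db_\bw}{dt}=-\mu<0$.

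\emph{Main obstacle.} The only nontrivial point is the sign analysis: one must verify that $t_0\ge 0$ (which follows from $b_\bw\le 0$) so that the inequality $t\ge t_0$ forces $t\ge 0$, making the product $(t-t_0)\,t$ nonnegative. Without $b_\bw\le 0$ one could have cancellations in $\kappa$. Everything else is a routine symmetry/measure computation.
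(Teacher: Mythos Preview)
Your proof is correct and follows essentially the same approach as the paper: reduce the gradient by killing the $\sigma(\bv^\top\bx)$ term via the probability-zero hypothesis, then use spherical symmetry of $\tilde{\cd}$ to see that $\nabla_{\tilde{\bw}}F(\bw)$ is parallel to $\tilde{\bw}$; your explicit orthogonal decomposition $\tilde{\bx}=t\hat{\bw}+\tilde{\by}$ just spells out what the paper states in one line. One trivial arithmetic slip: since $\hat{\bw}=\tilde{\bw}/\norm{\tilde{\bw}}$, you get $-\kappa\hat{\bw}=-(\kappa/\norm{\tilde{\bw}})\tilde{\bw}$, so $s=\kappa/\norm{\tilde{\bw}}$ rather than $\kappa/\norm{\tilde{\bw}}^2$; this does not affect the argument.
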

\begin{proof}
	For every $\bx$ we have: If $\bw^\top \bx = \gamma \tilde{\bw}_0^\top \tilde{\bx} + b_\bw \geq 0$ then $ \gamma \tilde{\bw}_0^\top \tilde{\bx} \geq 0$, and therefore $\bw_0^\top \bx = \tilde{\bw}_0^\top \tilde{\bx} \geq 0$. Thus 
	\begin{equation}
	\label{eq:prob at t}
		\Pr_{\bx \sim \cd} \left[\bw^\top \bx \geq 0, \bv^\top \bx \geq 0 \right] 
		\leq \Pr_{\bx \sim \cd} \left[\bw_0^\top \bx \geq 0, \bv^\top \bx \geq 0 \right] 
		= 0~.
	\end{equation}

	We have
	\begin{align*}
		- \frac{d \tilde{\bw}}{dt}  
		&= \nabla_{\tilde{\bw}} F(\bw)
		= \E_\bx \left(\sigma(\bw^\top \bx) - \sigma(\bv^\top \bx)\right) \sigma'(\bw^\top \bx) \tilde{\bx}
		\\
		&= \E_\bx \left(\sigma(\bw^\top \bx) - \sigma(\bv^\top \bx)\right) \onefunc(\bw^\top \bx \geq 0) \tilde{\bx}
		\\
		&= \E_\bx \left(\sigma(\bw^\top \bx) - \sigma(\bv^\top \bx)\right) \onefunc(\bw^\top \bx \geq 0,  \bv^\top \bx <0) \tilde{\bx} 
		\\
		&\;\;\;\;\;+ \E_\bx \left(\sigma(\bw^\top \bx) - \sigma(\bv^\top \bx)\right) \onefunc(\bw^\top \bx \geq 0,  \bv^\top \bx \geq 0) \tilde{\bx}
		\\
		&\stackrel{(\eqref{eq:prob at t})}{=} \E_\bx \left(\sigma(\bw^\top \bx) - \sigma(\bv^\top \bx)\right) \onefunc(\bw^\top \bx \geq 0, \bv^\top \bx <0) \tilde{\bx}
		\\
		&= \E_\bx \left(\sigma(\bw^\top \bx)\right) \tilde{\bx}
		\\
		&= \E_{\tilde{\bx}} \onefunc(\tilde{\bw}^\top \tilde{\bx} > -b_\bw) (\tilde{\bw}^\top \tilde{\bx} + b_\bw )\tilde{\bx}~.
	\end{align*}
	If 
	%$\frac{b_\bw}{\norm{\tilde{\bw}}} \leq -\sqrt{d}$ 
	$-\frac{b_\bw}{\norm{\tilde{\bw}}} \geq r$
	then for every $\tilde{\bx} \in \Bcal$ we have 
	%$\tilde{\bw}^\top \tilde{\bx} \leq \norm{\tilde{\bw}} \sqrt{d} \leq -b_\bw$ 
	$\tilde{\bw}^\top \tilde{\bx} \leq \norm{\tilde{\bw}} r \leq -b_\bw$
	and hence $\frac{d \tilde{\bw}}{dt}  = \zero$. 
	Note that if 
	%$\frac{b_\bw}{\norm{\tilde{\bw}}} > -\sqrt{d}$, 
	$-\frac{b_\bw}{\norm{\tilde{\bw}}} < r$,
	i.e., 
	%$\norm{\tilde{\bw}} \sqrt{d} > -b_\bw$, 
	$\norm{\tilde{\bw}} r > -b_\bw$,
	then $\Pr_{\tilde{\bx}} \left[ \tilde{\bw}^\top \tilde{\bx} > -b_\bw \right] > 0$. Since $\tilde{\cd}$ is spherically symmetric, then we obtain $\frac{d \tilde{\bw}}{dt} = -s \tilde{\bw}$ for some $s > 0$.
	
	Next, we have
	\begin{align*}
		- \frac{d b_\bw}{dt}
		%&= \frac{\partial F(\bw)}{\partial b_\bw}
		&= \nabla_{b_\bw}F(\bw)
		= \E_\bx  \left(\sigma(\bw^\top \bx) - \sigma(\bv^\top \bx)\right) \sigma'(\bw^\top \bx) \cdot 1
		\\
		&= \E_\bx \left(\sigma(\bw^\top \bx) - \sigma(\bv^\top \bx)\right) \onefunc(\bw^\top \bx \geq 0) 
		\\
		&\stackrel{(\eqref{eq:prob at t})}{=}  \E_\bx \left(\sigma(\bw^\top \bx) - \sigma(\bv^\top \bx)\right) \onefunc(\bw^\top \bx \geq 0, \bv^\top \bx <0) 
		\\
		&= \E_\bx \left(\sigma(\bw^\top \bx)\right) 
		\\
		&= \E_{\tilde{\bx}} \onefunc(\tilde{\bw}^\top \tilde{\bx} > -b_\bw) (\tilde{\bw}^\top \tilde{\bx} + b_\bw )~.
	\end{align*}
	If 
	%$\frac{b_\bw}{\norm{\tilde{\bw}}} \leq -\sqrt{d}$ 
	$-\frac{b_\bw}{\norm{\tilde{\bw}}} \geq r$
	then for every $\tilde{\bx} \in \Bcal$ we have 
	%$\tilde{\bw}^\top \tilde{\bx} \leq \norm{\tilde{\bw}} \sqrt{d} \leq -b_\bw$ 
	$\tilde{\bw}^\top \tilde{\bx} \leq \norm{\tilde{\bw}} r \leq -b_\bw$
	and hence  $\frac{d b_\bw}{dt} = 0$. Otherwise, we have  $\frac{d b_\bw}{dt} < 0$.
\end{proof}

\begin{proof}[Proof of \thmref{thm:negative2}]
	By Lemma~\ref{lem:negative2 random init} $\bw_0$ satisfies $\Pr_{\bx \sim \cd} \left[\bw_0^\top \bx \geq 0, \bv^\top \bx \geq 0 \right] = 0$ w.p. at least $\frac{1}{2} - o_d(1)$.
	%If $\bw_0$ satisfies $\Pr_{\bx \sim \cd} \left[\bw_0^\top \bx \geq 0, \bv^\top \bx \geq 0 \right] = 0$, then 
	Then,
	by Lemma~\ref{lem:negative2 dynamics} we have for every $t>0$ that $\tilde{\bw}_t = \gamma_t \tilde{\bw}_0$ for some $\gamma_t > 0$, $b_{\bw_t}<0$, and 
	%$\frac{b_{\bw_t}}{\norm{\tilde{\bw}_t}} \geq - \sqrt{d}$. 
	$-\frac{b_{\bw_t}}{\norm{\tilde{\bw}_t}} \leq r$. 
	Moreover, we have  $\Pr_{\bx \sim \cd} \left[\bw_t^\top \bx \geq 0, \bv^\top \bx \geq 0 \right] = 0$.
	Hence, for every $t$ we have 
	\begin{align*}
    		F(\bw_t)
 	   	&= \frac{1}{2} \cdot \E_\bx \left(\sigma(\bw_t^\top \bx) - \sigma(\bv^\top \bx)\right)^2
    		\\
  	  	&=  \frac{1}{2} \cdot \E_\bx \left(\sigma(\bw_t^\top \bx)\right)^2 +  \frac{1}{2} \cdot \E_\bx \left(\sigma(\bv^\top \bx)\right)^2 - \E_\bx  \left(\sigma(\bw_t^\top \bx ) \sigma(\bv^\top \bx)\right)
    		\\
   	 	&= \frac{1}{2} \cdot \E_\bx \left(\sigma(\bw_t^\top \bx)\right)^2 +  \frac{1}{2} \cdot \E_\bx \left(\sigma(\bv^\top \bx)\right)^2
    		\\
  	  	&\geq \frac{1}{2} \cdot \E_\bx \left(\sigma(\bv^\top \bx)\right)^2
    		= F(\zero)~.
	\end{align*}
  	Thus, gradient flow does not converge to the global minimum $F(\bv)=0<F(\zero)$.
\end{proof}

\section{Proofs from \secref{sec:characterization}}\label{appen:proofs from charac}
\begin{proof}[Proof of \thmref{thm:characterization}]
The gradient of the objective is:
\begin{equation*}
\nabla F(\bw) = 
\E_{\bx\sim\Dcal}\left[\left(\sigma(\bw^\top\bx)-\sigma(\bv^\top\bx)\right)
\cdot\sigma'(\bw^\top\bx)\bx\right].
\end{equation*}
We can rewrite it using that $\sigma$ is the ReLU activation, and separating the bias terms:
\begin{equation*}
    \nabla F(\bw) = 
    \E_{\tilde{\bx}\sim\tilde{\Dcal}}\left[\left(\sigma(\tilde{\bw}^\top\tilde{\bx} + b_\bw)-\sigma(\tilde{\bv}^\top\tilde{\bx} + b_\bv)\right)
    \cdot\mathbbm{1}(\tilde{\bw}^\top\tilde{\bx} + b_\bw > 0)\bx\right].
\end{equation*}
First, notice that if $\tilde{\bw}= 0 $ and $b_\bw <0$ then $\mathbbm{1}(\tilde{\bw}^\top\tilde{\bx} + b_\bw > 0) = 0$ for all $\tilde{\bx}$, hence $\nabla F(\bw) =0$. Second, using Cauchy-Schwartz we have that $|\inner{\tilde{\bw},\tilde{\bx}}| \leq c\cdot \norm{\tilde{\bw}}$. Hence, for $\bw$ with $\tilde{\bw} \neq 0$ and  $-\frac{b_\bw}{\norm{\tilde{\bw}}} \geq c$ we have that $\mathbbm{1}(\tilde{\bw}^\top\tilde{\bx} + b_\bw > 0) = 0$ for all $\tilde{\bx}$ in the support of the distribution, hence $\nabla F(\bw) =0 $. Lastly, it is clear that for $\bw = \bv$ we have that $\nabla F (\bw) = 0$. This shows that the points described in the statement of the proposition are indeed critical points. Next we will show that these are the only critical points.

% We have shown that in both cases described in items (1) and (2), the gradient of the loss is zero. To see that these are also local minima, let $\bw$ such a point. Then, there is a small neighborhood $B_\bw$ around it, where for all $\bu \in B_\bw$ we have that either $\tilde{\bu} = 0$ and $b_\bu <0$, or $\tilde{\bu}\neq 0$ and $-\frac{b_\bu}{\norm{\tilde{\bu}}} > \alpha$. This shows that around every such $\bw$ there is a neighborhood where the gradient is a constant zero. This shows that around each such point, the objective is constant (since it is differentiable in $B_\bw$), hence this is a local minima (in fact, this is a flat region).

Let $\bw\in\reals^{d+1}$ which is not a critical point defined above - i.e. either $\tilde{\bw} = \bm{0}$ and $b_\bw >0$, or $\tilde{\bw}\neq \bm{0}$ and $-\frac{b_\bw}{\norm{\tilde{\bw}}} < c$. Then we have:
\begin{align}\label{eq:only local minima}
    \inner{\nabla F(\bw),\bw-\bv} &= \mathbb{E}_{\bx\sim \cd}\left[\sigma'(\bw^\top \bx)(\sigma(\bw^\top \bx) - \sigma(\bv^\top \bx))(\bw^\top \bx - \bv^\top \bx) \right] \nonumber\\
    & = \mathbb{E}_{\bx\sim \cd} \left[ \mathbbm{1}(\bw^\top \bx >0, \bv^\top \bx >0) (\sigma(\bw^\top \bx) - \sigma(\bv^\top \bx))(\bw^\top \bx - \bv^\top \bx) \right] + \nonumber \\
    & + \mathbb{E}_{\bx\sim \cd} \left[ \mathbbm{1}(\bw^\top \bx >0, \bv^\top \bx \leq 0) \sigma(\bw^\top \bx)(\bw^\top \bx - \bv^\top \bx) \right] \nonumber \\
    & \geq \mathbb{E}_{\bx\sim \cd} \left[ \mathbbm{1}(\bw^\top \bx >0, \bv^\top \bx >0) (\bw^\top \bx - \bv^\top \bx)^2 \right] +  \nonumber\\
    & +  \mathbb{E}_{\bx\sim \cd} \left[ \mathbbm{1}(\bw^\top \bx >0, \bv^\top \bx \leq 0) (\bw^\top \bx)^2  \right]~.\nonumber \\
    & = \mathbb{E}_{\bx\sim \cd} \left[ \mathbbm{1}(\tilde{\bw}^\top \tilde{\bx} >-b_\bw, \tilde{\bv}^\top \tilde{\bx} >-b_\bv) (\bw^\top \bx - \bv^\top \bx)^2 \right] +  \nonumber\\
    & +  \mathbb{E}_{\bx\sim \cd} \left[ \mathbbm{1}(\tilde{\bw}^\top \tilde{\bx} >-b_\bw, \tilde{\bv}^\top \tilde{\bx} \leq -b_\bv) (\bw^\top \bx)^2  \right]~.%\nonumber \\
    % & =  \norm{\bw-\bv}^2 \cdot \mathbb{E}_{\bx\sim \cd} \left[ \mathbbm{1}(\tilde{\bw}^\top \tilde{\bx} >-b_\bw, \tilde{\bv}^\top \tilde{\bx} >-b_\bv) ((\overline{\bw-\bv})^\top\bx)^2 \right] \nonumber\\
    % & =  \norm{\bw-\bv}^2 \cdot \mathbb{E}_{\bx\sim \cd} \left[ \mathbbm{1}(\tilde{\bw}^\top \tilde{\bx} >-b_\bw, \tilde{\bv}^\top \tilde{\bx} >-b_\bv) (\bu^\top\bx)^2 \right]~,
\end{align}
Denote:
\begin{align*}
    A_1 &:= \{\tilde{\bx}\in\reals^{d}:\tilde{\bw}^\top\tilde{\bx} >-b_\bw, \tilde{\bv}^\top \tilde{\bx} >-b_\bv, \norm{\tilde{\bx}} < c  \} \\
    A_2 &:= \{\tilde{\bx}\in\reals^{d}:\tilde{\bw}^\top\tilde{\bx} >-b_\bw, \tilde{\bv}^\top \tilde{\bx} \leq -b_\bv, \norm{\tilde{\bx}} < c \}
\end{align*}
Since $\bw$ is not a critical point as defined above, we know that the set $\{\tilde{\bx}\in\reals^{d}:\tilde{\bw}^\top\tilde{\bx} >-b_\bw, \norm{\tilde{\bx}} < c\}$ has a positive measure, hence either $A_1$ or $A_2$ have a positive measure. Assume w.l.o.g that $A_1$ have a positive measure, the other case is similar. Since both terms inside the expectations of \eqref{eq:only local minima} are positive, we can lower bound it with:

\begin{align}\label{eq:only local minima before int}
    &\mathbb{E}_{\bx\sim \cd} \left[ \mathbbm{1}(\tilde{\bw}^\top \tilde{\bx} >-b_\bw, \tilde{\bv}^\top \tilde{\bx} >-b_\bv) (\bw^\top \bx - \bv^\top \bx)^2 \right] \nonumber \\
    &=  \norm{\bw-\bv}^2  \mathbb{E}_{\bx\sim \cd} \left[ \mathbbm{1}(\tilde{\bx}\in A_1) ((\overline{\bw-\bv})^\top\bx)^2 \right]
\end{align}
Denote $\bu:= \overline{\bw-\bv}$, and note that  $\bw\neq \bv$, hence $\norm{\bu} =  1$. Denote by $p(\tilde{\bx})$ the pdf of $\tilde{\Dcal}$, then we can rewrite \eqref{eq:only local minima before int} as:
\begin{align}\label{eq:only local minima int}
    &\norm{\bw-\bv}^2 \cdot\int_{\tilde{\bx}\in\reals^d}\mathbbm{1}(\tilde{\bx}\in {A}_1)\cdot(\tilde{\bu}^\top\tilde{\bx} + b_\bu)^2 p(\tilde{\bx})d\tilde{\bx} \nonumber\\
    &=  \norm{\bw-\bv}^2 \cdot\int_{\tilde{\bx}\in A_1}(\tilde{\bu}^\top\tilde{\bx} + b_\bu)^2 p(\tilde{\bx})d\tilde{\bx}
\end{align}
Since the set $A_1$ has a positive measure, and the set $\{\tilde{\bx}:\tilde{\bu}^\top\tilde{\bx} + b_\bu =0\}$ is of zero measure, there is a point $\tilde{\bx}_0$ such that $\tilde{\bu}^\top\tilde{\bx} + b_\bu \neq 0$. By continuity, there is a small enough neighborhood $A$ of $\tilde{\bx}_0$, such that $\tilde{\bu}^\top\tilde{\bx} + b_\bu \neq 0$ for every $\tilde{\bx}\in A$. Using Assumption \ref{assum:characterization} we can lower bound \eqref{eq:only local minima int} by:
\[
\norm{\bw-\bv}^2 \cdot \beta \int_{\tilde{\bx}\in A}(\tilde{\bu}^\top\tilde{\bx} + b_\bu)^2 d\tilde{\bx}
\]
where this integral is positive. This shows that $\inner{\nabla F(\bw),\bw-\bv} >0$, which shows that $\nabla F(\bw)\neq \bm{0}$, hence $\bw$ is not a critical point.

\end{proof}

\section{Proofs from \secref{sec:init better than trivial}}
\label{appen:proofs from init better than trivial}

The following lemmas are required in order to prove Theorem~\ref{thm:smooth linear rate}. First, we show that if $F(\bw) \leq F(\zero) - \delta$ then we can lower bound $\norm{\bw}$ and $\Pr_\bx \left[ \bw^\top \bx \geq 0, \bv^\top \bx \geq 0 \right]$.

\begin{lemma}
\label{lem:prob and norm lower bound}
	Let $\delta>0$ and let $\bw \in \reals^{d+1}$ such that $F(\bw) \leq F(\zero) - \delta$. Then  
	\[
		\norm{\bw} \geq  \frac{\delta}{c^2}~,
	\]
	and 
	\[
		\Pr_\bx \left[ \bw^\top \bx \geq 0, \bv^\top \bx \geq 0 \right] \geq \frac{\delta}{c^2 \norm{\bw}}~.
	\] 
\end{lemma}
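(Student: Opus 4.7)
The plan is to derive both bounds simultaneously from a single upper estimate on $F(\zero)-F(\bw)$. The key observation is that expanding the square in $F$ gives
\begin{align*}
F(\zero)-F(\bw)
&= \tfrac{1}{2}\E_\bx\!\left[\sigma(\bv^\top\bx)^2\right] - \tfrac{1}{2}\E_\bx\!\left[(\sigma(\bw^\top\bx)-\sigma(\bv^\top\bx))^2\right] \\
&= \E_\bx\!\left[\sigma(\bw^\top\bx)\sigma(\bv^\top\bx)\right] - \tfrac{1}{2}\E_\bx\!\left[\sigma(\bw^\top\bx)^2\right] \\
&\leq \E_\bx\!\left[\sigma(\bw^\top\bx)\sigma(\bv^\top\bx)\right],
\end{align*}
where the last inequality discards the nonnegative quadratic term. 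Thus $\delta \leq \E_\bx[\sigma(\bw^\top\bx)\sigma(\bv^\top\bx)]$.

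Next I would bound the integrand pointwise. Since $\sigma(\bw^\top\bx)\sigma(\bv^\top\bx)$ is supported on $\{\bw^\top\bx\geq 0,\,\bv^\top\bx\geq 0\}$, and on that event $\sigma(\bw^\top\bx)=\bw^\top\bx\leq \|\bw\|\cdot\|\bx\|\leq c\|\bw\|$ and $\sigma(\bv^\top\bx)=\bv^\top\bx\leq \|\bv\|\cdot c = c$ (using $\|\bv\|=1$ and $\|\bx\|\leq c$ from Assumption~\ref{ass:init better than trivial}), we get
\[
\sigma(\bw^\top\bx)\sigma(\bv^\top\bx) \;\leq\; c^2\|\bw\|\cdot\onefunc(\bw^\top\bx\geq 0,\bv^\top\bx\geq 0).
\]
Taking expectations yields $\delta \leq c^2\|\bw\|\cdot\Pr_\bx[\bw^\top\bx\geq 0,\bv^\top\bx\geq 0]$, which rearranges directly to the second claimed inequality.

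For the first bound, I would simply use that a probability is at most $1$: the same inequality gives $\delta \leq c^2\|\bw\|\cdot 1$, hence $\|\bw\|\geq \delta/c^2$. No obstacle is expected here; the only subtlety is to make sure to invoke Assumption~\ref{ass:init better than trivial}(1) for $\|\bx\|\leq c$ and Assumption~\ref{ass:init better than trivial}(3) for $\|\bv\|=1$, and to note that discarding the $\tfrac12\E[\sigma(\bw^\top\bx)^2]$ term is free of charge because it is nonnegative.
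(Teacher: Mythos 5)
Your proposal is correct and follows essentially the same route as the paper's proof: expand $F(\bw)$, drop the nonnegative term $\tfrac{1}{2}\E[\sigma(\bw^\top\bx)^2]$ to get $\delta \leq \E_\bx[\sigma(\bw^\top\bx)\sigma(\bv^\top\bx)]$, bound the integrand pointwise by $c^2\norm{\bw}$ on the event $\{\bw^\top\bx\geq 0,\,\bv^\top\bx\geq 0\}$, and then read off both inequalities (the first by bounding the probability by $1$).
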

\begin{proof}
	We have
	\begin{align*}
		F(\zero) - \delta 
		&\geq F(\bw)
		= \frac{1}{2} \E_\bx (\sigma(\bw^\top \bx) - \sigma(\bv^\top \bx))^2
		\\
		&= \frac{1}{2} \E_\bx (\sigma(\bw^\top \bx))^2 +  \frac{1}{2} \E_\bx (\sigma(\bv^\top \bx))^2 - \E_\bx (\sigma(\bw^\top \bx) \sigma(\bv^\top \bx))
		\\
		&\geq  F(\zero) - \E_\bx (\sigma(\bw^\top \bx) \sigma(\bv^\top \bx))~.
	\end{align*}

	Hence
	\begin{align*}
		\delta 
		&\leq \E_\bx \sigma(\bw^\top \bx) \sigma(\bv^\top \bx) 
		=  \E_\bx \mathbbm{1}(\bw^\top \bx \geq 0,  \bv^\top \bx \geq 0) \cdot \bw^\top \bx \cdot \bv^\top \bx
		\\
		&\leq \norm{\bw} c^2 \cdot \Pr_\bx \left[\bw^\top \bx \geq 0,  \bv^\top \bx \geq 0\right]~.
	\end{align*}

	Thus, 
	\[
		\norm{\bw}
		\geq \frac{\delta}{c^2 \cdot \Pr_\bx \left[\bw^\top \bx \geq 0,  \bv^\top \bx \geq 0\right]}
		\geq  \frac{\delta}{c^2}~,
	\]
	and 
	\[
		\Pr_\bx \left[\bw^\top \bx \geq 0,  \bv^\top \bx \geq 0\right] 
		\geq \frac{\delta}{c^2 \norm{\bw}}~.
	\]
\end{proof}

Using the above lemma, we now show that if $F(\bw) \leq F(\zero) - \delta$ then $\norm{\bw-\bv}$ decreases.

\begin{lemma}
\label{lem:norm upper bound}
	Let $\delta > 0$ and let 
	%$B \geq \frac{\delta}{6c^3c'}$. 
	$B > 1$.
	Let $\bw \in \reals^{d+1}$ such that $F(\bw) \leq F(\zero) - \delta$ and $\norm{\bw-\bv} \leq B-1$. Let $\gamma =  \frac{\delta^3}{3 \cdot 12^2B^3c^8c'^2}$ and let $0 < \eta \leq \frac{\gamma}{c^4}$. Let $\bw' = \bw - \eta \nabla F(\bw)$. Then, 
	\[
		\norm{\bw'-\bv}^2 \leq \norm{\bw - \bv}^2 \cdot \left(1 - \gamma \eta \right) \leq (B-1)^2~.
	\]  
\end{lemma}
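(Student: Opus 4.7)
The plan is to prove the one-step contraction by expanding
\[
\norm{\bw'-\bv}^2 \;=\; \norm{\bw-\bv}^2 \;-\; 2\eta\,\inner{\nabla F(\bw),\bw-\bv} \;+\; \eta^2\norm{\nabla F(\bw)}^2,
\]
and controlling the three pieces. For the second-order term, I will use that $\sigma$ is $1$-Lipschitz and $\norm{\bx}\leq c$, which gives the crude bound $\norm{\nabla F(\bw)}\leq c^2\norm{\bw-\bv}$, hence $\norm{\nabla F(\bw)}^2\leq c^4\norm{\bw-\bv}^2$. Combined with $\eta\leq \gamma/c^4$, this makes the second-order term at most $\gamma\eta\norm{\bw-\bv}^2$. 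So the whole lemma reduces to proving the one-point convexity inequality $\inner{\nabla F(\bw),\bw-\bv}\geq \gamma\norm{\bw-\bv}^2$ (with a factor of $2$ of slack, which will come from an averaging trick or simply doubling $\gamma$ inside the computation).

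To establish this one-point convexity, I split the expectation defining $\inner{\nabla F(\bw),\bw-\bv}$ according to the signs of $\bw^\top\bx$ and $\bv^\top\bx$. The case $\bw^\top\bx<0$ contributes $0$; the case $\bw^\top\bx\geq 0$, $\bv^\top\bx<0$ contributes $\bw^\top\bx\,((\bw-\bv)^\top\bx)=(\bw^\top\bx)^2-\bw^\top\bx\cdot\bv^\top\bx\geq 0$; the remaining case $A=\{\bw^\top\bx\geq 0,\bv^\top\bx\geq 0\}$ contributes exactly $\E[\onefunc(A)\,X^2]$ with $X:=(\bw-\bv)^\top\bx$. Dropping the non-negative middle term, it suffices to lower bound $\E[\onefunc(A)X^2]$. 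Lemma~\ref{lem:prob and norm lower bound} gives $\Pr[A]\geq \delta/(c^2\norm{\bw})\geq \delta/(c^2 B)$, and Assumption~\ref{ass:init better than trivial}(2) gives the anti-concentration $\Pr[|X|<\epsilon]\leq 2\epsilon c'/\norm{\tilde{\bw}-\tilde{\bv}}$ whenever $\tilde{\bw}\neq\tilde{\bv}$ (using that $X=(\tilde{\bw}-\tilde{\bv})^\top\tilde{\bx}+(b_\bw-b_\bv)$ and that the last coordinate is constant). From $\E[\onefunc(A)X^2]\geq \epsilon^2(\Pr[A]-\Pr[|X|<\epsilon])$, optimizing $\epsilon$ of order $\Pr[A]\,\norm{\tilde{\bw}-\tilde{\bv}}/c'$ yields a clean bound proportional to $\Pr[A]^3\norm{\tilde{\bw}-\tilde{\bv}}^2/c'^2$.

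The remaining obstacle is that the quantity we need is controlled by $\norm{\bw-\bv}^2$, not $\norm{\tilde{\bw}-\tilde{\bv}}^2$, and the anti-concentration bound degenerates as $\norm{\tilde{\bw}-\tilde{\bv}}\to 0$. I handle this with a two-case split on the relative sizes of $\norm{\tilde{\bw}-\tilde{\bv}}$ and $|b_\bw-b_\bv|$:
\begin{itemize}[leftmargin=*]
\item If $\norm{\tilde{\bw}-\tilde{\bv}}\geq \tfrac{1}{2}\norm{\bw-\bv}$, the anti-concentration computation above directly gives $\E[\onefunc(A)X^2]\gtrsim \Pr[A]^3\norm{\bw-\bv}^2/c'^2$, which is $\gtrsim \delta^3\norm{\bw-\bv}^2/(B^3c^6c'^2)$.
\item If $\norm{\tilde{\bw}-\tilde{\bv}}<\tfrac{1}{2}\norm{\bw-\bv}$, then $|b_\bw-b_\bv|\geq \tfrac{\sqrt 3}{2}\norm{\bw-\bv}$. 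Writing $X=(\tilde{\bw}-\tilde{\bv})^\top\tilde{\bx}+(b_\bw-b_\bv)$ and using $|(\tilde{\bw}-\tilde{\bv})^\top\tilde{\bx}|\leq c\norm{\tilde{\bw}-\tilde{\bv}}$, I further sub-divide on whether $c\norm{\tilde{\bw}-\tilde{\bv}}$ is small or comparable to $|b_\bw-b_\bv|$. In the first sub-case, $|X|\geq \tfrac{\sqrt 3}{4}\norm{\bw-\bv}$ deterministically on $A$, giving $\E[\onefunc(A)X^2]\geq \tfrac{3}{16}\norm{\bw-\bv}^2\Pr[A]$. In the second sub-case, $\norm{\tilde{\bw}-\tilde{\bv}}\gtrsim \norm{\bw-\bv}/c$, and the anti-concentration argument still applies, paying only an extra factor of $c^2$.
\end{itemize}

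All three branches yield $\inner{\nabla F(\bw),\bw-\bv}\geq \gamma\,\norm{\bw-\bv}^2$ with $\gamma$ of the claimed form $\delta^3/(3\cdot 12^2 B^3 c^8 c'^2)$, once one tracks constants and checks that the worst branch is the one involving both $c$ factors from the sub-division. Substituting back yields $\norm{\bw'-\bv}^2\leq \norm{\bw-\bv}^2(1-\gamma\eta)$, and the final inequality $\leq (B-1)^2$ is immediate from $\norm{\bw-\bv}\leq B-1$. The technical heart of the argument, as signaled, is the case analysis in the paragraph above: reconciling the anti-concentration bound (which depends on $\norm{\tilde{\bw}-\tilde{\bv}}$) with the target bound (which depends on $\norm{\bw-\bv}$) is where the bias term makes the problem genuinely harder than the bias-less case.
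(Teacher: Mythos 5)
Your argument is correct and follows essentially the same route as the paper's: expand the squared distance, bound $\norm{\nabla F(\bw)}^2\leq c^4\norm{\bw-\bv}^2$, drop the non-negative cross-sign term to reduce to $\E[\onefunc(A)X^2]$, combine the lower bound on $\Pr[A]$ from Lemma~\ref{lem:prob and norm lower bound} with the anti-concentration from Assumption~\ref{ass:init better than trivial}(2), and handle the degenerate regime where $\norm{\tilde{\bw}-\tilde{\bv}}$ is small by observing that the bias gap $|b_\bw-b_\bv|$ then dominates $X$ deterministically. The paper organizes this as a single two-way case split on $\norm{\tilde\bu}$ versus $1/(4c)$ with a fixed threshold $\xi=\delta/(12Bc^3c')$, whereas you optimize the threshold and use a three-way split, but this is a cosmetic refinement of the same dichotomy and yields the same $\delta^3/(B^3c^8c'^2)$ scaling.
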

\begin{proof}
	We have
	\begin{align}
	\label{eq:bound dist norm}
		\norm{\bw'-\bv}^2 
		&= \norm{\bw - \eta \nabla F(\bw) -\bv}^2 \nonumber
		\\
		&=\norm{\bw - \bv}^2 - 2 \eta \inner{\nabla F(\bw), \bw - \bv} + \eta^2 \norm{\nabla F(\bw)}^2~.
	\end{align}

	We first bound $ \norm{\nabla F(\bw)}^2$. By Jensen's inequality and since $\sigma$ is $1$-Lipschitz, we have:
	\begin{align}
	\label{eq:bound grad norm}
		 \norm{\nabla F(\bw)}^2
		 &\leq \E_\bx \left[ \left( \sigma(\bw^\top \bx) - \sigma(\bv^\top \bx) \right)^2 \sigma'(\bw^\top \bx) \norm{\bx}^2 \right] \nonumber
		 \\
		 &\leq c^2 \E_\bx  \left[ \left(\sigma(\bw^\top \bx) - \sigma(\bv^\top \bx) \right)^2  \right] \nonumber
		 \\
		 &\leq c^2 \E_\bx  \left[ \left(\bw^\top \bx - \bv^\top \bx \right)^2 \right] \nonumber
		 \\
		 &= c^2 \E_\bx  \left[ \left((\bw - \bv)^\top \bx \right)^2 \right] \nonumber
		 \\
		 &\leq c^4 \norm{\bw - \bv}^2~.
	\end{align} 

	Next, we bound $ \inner{\nabla F(\bw), \bw - \bv}$. Let $\bu = \overline{\bw-\bv}$. We have
	\begin{align*}
		 \inner{\nabla F(\bw), \bw - \bv} 
		 &= \E_\bx \left(\sigma(\bw^\top \bx) - \sigma(\bv^\top \bx)\right) \sigma'(\bw^\top \bx) (\bw^\top \bx - \bv^\top \bx)
		 \\
		 &= \E_\bx \left(\bw^\top \bx -\bv^\top \bx\right)^2  \mathbbm{1}(\bw^\top \bx \geq 0, \bv^\top \bx \geq 0) + 
		 \\
		 &\;\;\;\; \E_\bx \bw^\top \bx \cdot  (\bw^\top \bx - \bv^\top \bx) \mathbbm{1}(\bw^\top \bx \geq 0, \bv^\top \bx < 0) 
		 \\
		 &\geq \norm{\bw-\bv}^2 \cdot \E_\bx \mathbbm{1}(\bw^\top \bx \geq 0, \bv^\top \bx \geq 0) (\bu^\top \bx)^2~.
	\end{align*}
	
	Let $\xi =\frac{\delta}{12Bc^3c'}$. The above is at least
	\begin{align*}
		&\norm{\bw-\bv}^2 \cdot \xi^2 \cdot \Pr_\bx\left[ \bw^\top \bx \geq 0, \bv^\top \bx \geq 0, (\bu^\top \bx)^2 \geq \xi^2 \right]
		 \\
		 &= \norm{\bw-\bv}^2 \cdot \xi^2 \cdot \left(  \Pr_\bx\left[ \bw^\top \bx \geq 0, \bv^\top \bx \geq 0 \right] -  \Pr_\bx\left[ \bw^\top \bx \geq 0, \bv^\top \bx \geq 0, (\bu^\top \bx)^2 < \xi^2 \right]\right)~.
	\end{align*}

	By Lemma~\ref{lem:prob and norm lower bound}, and since $\norm{\bw} \leq \norm{\bw-\bv} + \norm{\bv} \leq B-1 + 1 = B$, the above is at least
	\begin{align}
	\label{eq:bound inner prod1}
		 \norm{\bw-\bv}^2 &\cdot \xi^2 \cdot \left( \frac{\delta}{c^2\norm{\bw}} -  \Pr_\bx\left[ \bw^\top \bx \geq 0, \bv^\top \bx \geq 0, |\bu^\top \bx| < \xi \right]\right) \nonumber
		 \\
		 &\geq  \norm{\bw-\bv}^2 \cdot \xi^2 \cdot \left( \frac{\delta}{c^2B} - \Pr_\bx\left[ |\bu^\top \bx| \leq \xi \right]\right) \nonumber
		 \\
		 &= \norm{\bw-\bv}^2 \cdot \xi^2 \cdot \left( \frac{\delta}{c^2B} - \Pr_\bx\left[ |\tilde{\bu}^\top \tilde{\bx} + b_\bu| \leq \xi \right]\right)~.
	\end{align}

	We now bound $\Pr_\bx\left[ |\tilde{\bu}^\top \tilde{\bx} + b_\bu| \leq \xi \right]$.
	We denote $a=\norm{\tilde{\bu}}$. If $a \leq \frac{1}{4c}$, then since $\norm{\bu}=1$ we have $b_\bu \geq \sqrt{1 - \frac{1}{16c^2}} \geq  \sqrt{1 - \frac{1}{16}} = \frac{\sqrt{15}}{4}$.
	Hence, for every $\bx$ with $\norm{\bx} \leq c$ we have 
	\[
		 |\tilde{\bu}^\top \tilde{\bx} + b_\bu| 
		 \geq |b_\bu| - | \tilde{\bu}^\top \tilde{\bx} |
		 \geq \frac{\sqrt{15}}{4} - ac
		 \geq \frac{\sqrt{15}}{4} - \frac{1}{4}
		 > \frac{1}{2}~.
	\]
	Note that 
	 \[
	 	\xi = \frac{\delta}{12Bc^3c'} 
		\leq \frac{F(\zero)}{12Bc^3c'}
		=  \frac{1}{12Bc^3c'} \cdot \frac{1}{2} \E_\bx(\sigma(\bv^\top \bx))^2
		\leq  \frac{1}{12Bc^3c'} \cdot \frac{1}{2} c^2
		=  \frac{1}{24Bcc'} 
		\leq \frac{1}{24}~,
	 \]
	where the last inequality is since $B,c,c' \geq 1$.
	Therefore, $ |\tilde{\bu}^\top \tilde{\bx} + b_\bu| > \xi$.
	Thus,
	\[
		 \Pr_\bx\left[ |\tilde{\bu}^\top \tilde{\bx} + b_\bu| \leq \xi \right] = 0~.
	\]

	Assume now that $a \geq \frac{1}{4c}$. We have 
	\begin{align*}
		 \Pr_\bx\left[ |\tilde{\bu}^\top \tilde{\bx} + b_\bu| \leq \xi \right] 
		 &=  \Pr_\bx\left[ \tilde{\bu}^\top \tilde{\bx} \in [-\xi - b_\bu, \xi - b_\bu] \right] 
		 \\
		 &= \Pr_\bx\left[ \bar{\tilde{\bu}}^\top \tilde{\bx} \in [-\frac{\xi}{a} - \frac{b_\bu}{a}, \frac{\xi}{a} - \frac{b_\bu}{a}] \right]
		 \\
		 &\leq c' \cdot 2 \cdot \frac{\xi}{a} 
		 \\
		 &\leq 8cc'\xi~.
	\end{align*}

	Combining the above with \eqref{eq:bound inner prod1}, we obtain 
	\begin{align}
	\label{eq:bound inner prod2}
		\inner{\nabla F(\bw), \bw - \bv}
		&\geq  \norm{\bw-\bv}^2 \cdot \xi^2 \cdot \left( \frac{\delta}{c^2B} - 8cc'\xi \right) \nonumber
		\\
		&=  \norm{\bw-\bv}^2 \frac{\delta^2}{12^2B^2c^6c'^2}  \cdot \left( \frac{\delta}{c^2B} - 8cc' \cdot \frac{\delta}{12Bc^3c'} \right) \nonumber
		\\
		&=  \norm{\bw-\bv}^2 \frac{\delta^2}{12^2B^2c^6c'^2}  \cdot \left( \frac{\delta}{3c^2B}  \right) \nonumber
		\\
		&=  \norm{\bw-\bv}^2 \frac{\delta^3}{3 \cdot 12^2B^3c^8c'^2}~.
	\end{align}
	
	Combining \eqref{eq:bound dist norm},~(\ref{eq:bound grad norm}) and~(\ref{eq:bound inner prod2}), and using $\gamma =  \frac{\delta^3}{3 \cdot 12^2B^3c^8c'^2}$, we have
	\begin{align*}
		\norm{\bw'-\bv}^2 
		&\leq \norm{\bw - \bv}^2 - 2 \eta \norm{\bw-\bv}^2 \cdot \gamma + \eta^2 c^4 \norm{\bw - \bv}^2
		\\
		&= \norm{\bw - \bv}^2 \cdot \left(1 - 2 \eta \gamma + \eta^2 c^4  \right)~.
	\end{align*}
	Since $\eta \leq \frac{\gamma}{c^4}$, we obtain
	\begin{align*}
		\norm{\bw'-\bv}^2 
		&\leq  \norm{\bw - \bv}^2 \cdot \left(1 - 2 \eta \gamma + \eta c^4 \cdot  \frac{\gamma}{c^4} \right)
		\\
		&=  \norm{\bw - \bv}^2 \cdot \left(1 - \gamma \eta \right) \leq  \norm{\bw - \bv}^2 \leq (B-1)^2~.
	\end{align*}
\end{proof}

Next, we show that $F(\bw)$ remains smaller than $F(\zero)-\delta$ during the training. In the following two lemmas we obtain a bound for the smoothness of $F$ in the relevant region, and in the two lemmas that follow we use this bound to show that $F(\bw)$ indeed remains small.

\begin{lemma}
\label{lem:grad upper bound}
	Let $\bw \in \reals^{d+1}$ such that $F(\bw) \leq F(\zero)$. Then, $\norm{\nabla F(\bw)} \leq c \sqrt{2F(\zero)}$.
\end{lemma}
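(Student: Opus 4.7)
The plan is to push the norm inside the expectation via Jensen's inequality and then recognize the resulting expectation as (twice) $F(\bw)$, which is bounded by $F(\zero)$ by hypothesis.

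First, I would start from the gradient formula
\[
\nabla F(\bw) = \E_\bx\left[\left(\sigma(\bw^\top\bx) - \sigma(\bv^\top\bx)\right)\sigma'(\bw^\top\bx)\,\bx\right]
\]
and apply Jensen's inequality for the convex function $\bu \mapsto \norm{\bu}^2$ to obtain
\[
\norm{\nabla F(\bw)}^2 \leq \E_\bx\left[\left(\sigma(\bw^\top\bx) - \sigma(\bv^\top\bx)\right)^2 \left(\sigma'(\bw^\top\bx)\right)^2 \norm{\bx}^2\right].
\]
Next, I would use two simple pointwise bounds: $\sigma'(\bw^\top\bx) \in [0,1]$ (by the ReLU convention stated in the preliminaries, so that $(\sigma')^2 \leq \sigma' \leq 1$), and $\norm{\bx}\leq c$ (by Assumption~\ref{ass:init better than trivial}(1), since $\Dcal$ is supported in a ball of radius $c$). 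This gives
\[
\norm{\nabla F(\bw)}^2 \leq c^2\cdot \E_\bx\left[\left(\sigma(\bw^\top\bx) - \sigma(\bv^\top\bx)\right)^2\right] = 2c^2 F(\bw).
\]
Finally, invoking the hypothesis $F(\bw) \leq F(\zero)$, we get $\norm{\nabla F(\bw)}^2 \leq 2c^2 F(\zero)$, and taking square roots yields the desired inequality $\norm{\nabla F(\bw)} \leq c\sqrt{2 F(\zero)}$.

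There is no real obstacle here; the only points to double-check are that the Jensen step is valid (it is, since $\norm{\cdot}^2$ is convex on $\reals^{d+1}$) and that the ReLU derivative convention from the preliminaries guarantees $\sigma'(\bw^\top\bx)^2 \leq 1$ even at the non-differentiable point $0$. This lemma is essentially a simplified version of the gradient-norm bound (\ref{eq:bound grad norm}) in Lemma~\ref{lem:norm upper bound}, with the factor $\norm{\bw-\bv}$ replaced by the weaker but more convenient bound $\sqrt{2F(\zero)}$ obtained directly from the loss value.
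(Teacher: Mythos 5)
Your proof is correct and matches the paper's argument essentially verbatim: Jensen's inequality to bring the norm inside the expectation, then the pointwise bounds $\sigma'(\bw^\top\bx)\in[0,1]$ and $\norm{\bx}\leq c$, then recognizing the remaining expectation as $2F(\bw)\leq 2F(\zero)$. The only cosmetic difference is that the paper writes the factor as $\sigma'(\bw^\top\bx)$ rather than $(\sigma'(\bw^\top\bx))^2$ after Jensen (equivalent since $\sigma'\in\{0,1\}$, as you observe).
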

\begin{proof}
	By Jensen's inequality, we have
	\begin{align*}
		\norm{\nabla F(\bw)}^2
		&\leq \E_\bx \left(\sigma(\bw^\top \bx) - \sigma(\bv^\top \bx) \right)^2 \sigma'(\bw^\top \bx) \norm{\bx}^2
		\\
		&\leq c^2 \E_\bx \left(\sigma(\bw^\top \bx) - \sigma(\bv^\top \bx) \right)^2 
		\\
		&\leq c^2 2 F(\bw)
		\leq 2c^2 F(\zero)~.
	\end{align*}
\end{proof}

\begin{lemma}
\label{lem:smooth}
	Let $M,B>0$ and let $\bw,\bw' \in \reals^{d+1}$ be such that for every $s \in [0,1]$ we have $M \leq \norm{\bw + s (\bw' - \bw)} \leq B$. Then, 
	\[
		 \norm{\nabla F(\bw) - \nabla F(\bw')} \leq  \norm{\bw-\bw'} \cdot c^2 \left( 1 +  \frac{8(B+1)c'c^2}{M} \right)~.
	\]
\end{lemma}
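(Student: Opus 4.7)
The plan is to decompose the gradient difference into a \emph{Lipschitz piece} coming from the $1$-Lipschitz continuity of $\sigma$, and a \emph{jump piece} coming from the fact that $\sigma'$ is a step function and hence discontinuous. The lower bound $\norm{\bw_s} \geq M$ for all $s\in[0,1]$ (in particular for $s=0,1$) is what lets us control the jump piece: the set of inputs on which the indicators $\sigma'(\bw^\top\bx)$ and $\sigma'(\bw'^\top\bx)$ disagree lies in a slab of angular width $O(\norm{\bw-\bw'}/M)$, whose probability is then bounded via the marginal-density assumption in Assumption~\ref{ass:init better than trivial}(2).

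Concretely, starting from the identity $\nabla F(\bw) = \E_\bx[(\sigma(\bw^\top\bx) - \sigma(\bv^\top\bx))\sigma'(\bw^\top\bx)\bx]$, I would add and subtract $\E_\bx[(\sigma(\bw'^\top\bx) - \sigma(\bv^\top\bx))\sigma'(\bw^\top\bx)\bx]$ to write
\begin{align*}
\nabla F(\bw) - \nabla F(\bw')
&= \E_\bx\!\left[\bigl(\sigma(\bw^\top\bx) - \sigma(\bw'^\top\bx)\bigr)\sigma'(\bw^\top\bx)\bx\right] \\
&\quad + \E_\bx\!\left[\bigl(\sigma(\bw'^\top\bx) - \sigma(\bv^\top\bx)\bigr)\bigl(\sigma'(\bw^\top\bx) - \sigma'(\bw'^\top\bx)\bigr)\bx\right].
\end{align*}
The norm of the first expectation is at most $c^2\norm{\bw-\bw'}$ by $1$-Lipschitzness of $\sigma$, $|\sigma'|\leq 1$, and $\norm{\bx}\leq c$. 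For the second, $|\sigma(\bw'^\top\bx)-\sigma(\bv^\top\bx)| \leq \norm{\bw'-\bv}\cdot c \leq (B+1)c$ (using $\norm{\bw'}\leq B$ and $\norm{\bv}=1$), and $\norm{\bx}\leq c$, so the second expectation has norm at most $(B+1)c^2 \cdot \Pr_\bx[\sigma'(\bw^\top\bx) \neq \sigma'(\bw'^\top\bx)]$.

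It remains to bound this sign-flip probability by $O(c'c^2\norm{\bw-\bw'}/M)$. When the two signs differ, $|\bw^\top\bx| \leq |(\bw-\bw')^\top\bx| \leq c\norm{\bw-\bw'}$, hence $|\bar{\bw}^\top\bx| \leq c\norm{\bw-\bw'}/M$. Writing $\bar{\bw}=(\tilde{\bu},b_\bu)$ and $\bx=(\tilde{\bx},1)$, this is $|\tilde{\bu}^\top\tilde{\bx} + b_\bu| \leq c\norm{\bw-\bw'}/M$. I would split cases on $a:=\norm{\tilde{\bu}}$, in the same spirit as in the proof of Lemma~\ref{lem:norm upper bound}: when $a \geq 1/(2c)$, Assumption~\ref{ass:init better than trivial}(2) applied to the marginal of $\tilde{\cd}$ along $\bar{\tilde{\bu}}$ gives probability at most $2c'\cdot c\norm{\bw-\bw'}/(aM) \leq 4c'c^2\norm{\bw-\bw'}/M$; when $a < 1/(2c)$, the bias $|b_\bu|=\sqrt{1-a^2} > \sqrt{3}/2$ dominates $|\tilde{\bu}^\top\tilde{\bx}|\leq ac < 1/2$, so the event is empty for $c\norm{\bw-\bw'}/M$ small and is otherwise absorbed into the final constant using $c,c'\geq 1$. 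Combining the two pieces gives the claimed bound $c^2\norm{\bw-\bw'}\bigl(1 + 8(B+1)c'c^2/M\bigr)$.

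The main obstacle is the slab estimate: one must reduce from a unit vector in $\reals^{d+1}$ to a unit direction in $\reals^d$ on which the $c'$-marginal bound is usable, and then handle separately the degenerate regime where $\bar{\bw}$ points almost entirely along the bias coordinate. The rest of the argument is a routine combination of $1$-Lipschitzness of $\sigma$ and Jensen's inequality, already used elsewhere in this section.
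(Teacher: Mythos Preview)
Your proposal is correct and follows the same core strategy as the paper: split the gradient difference into a Lipschitz piece and a sign-flip piece, then bound the sign-flip probability by a slab estimate via Assumption~\ref{ass:init better than trivial}(2), with a case split on the size of the non-bias component of the (normalized) direction. Your add-and-subtract decomposition is a minor algebraic variant of the paper's three-way case split on the signs of $\bw^\top\bx$ and $\bw'^\top\bx$, and leads to the same two contributions.

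The one substantive difference is how the degenerate case (direction pointing almost entirely along the bias coordinate) is disposed of. The paper first reduces, by telescoping along the segment $\{\bw+s(\bw'-\bw):s\in[0,1]\}$, to the regime $\norm{\bw-\bw'}\leq M/(2c)$; this is exactly where the full segment hypothesis $M\leq\norm{\bw+s(\bw'-\bw)}\leq B$ for all $s$ is invoked, and after the reduction the degenerate case gives probability exactly $0$. You instead argue directly that when $c\norm{\bw-\bw'}/M$ exceeds $\sqrt{3}/2-1/2$ the target bound $8c'c^2\norm{\bw-\bw'}/M$ already exceeds $1$ (using $c,c'\geq 1$), so the trivial probability bound suffices. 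Your route is slightly more economical---it avoids the telescoping and in fact only uses the endpoint conditions $\norm{\bw}\geq M$ and $\norm{\bw'}\leq B$---while the paper's reduction makes the degenerate case cleaner at the cost of the preliminary step.
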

\begin{proof}
	We assume w.l.o.g. that $\norm{\bw - \bw'} \leq \frac{M}{2c}$. Indeed, let $0 = s_0 < \ldots < s_k = 1$ for some integer $k$, let $\bw_i =  \bw+s_i(\bw' - \bw)$, and assume that $\norm{\bw_i - \bw_{i+1}} \leq  \frac{M}{2c}$ for every $i$. If the claim holds for every pair $\bw_i,\bw_{i+1}$, then we have 
	\begin{align*}
		 \norm{\nabla F(\bw) - \nabla F(\bw')} 
		 &= \norm{ \sum_{i=0}^{k-1}  \nabla F(\bw_i) - \nabla F(\bw_{i+1})  }
		 \\
		 &\leq  \sum_{i=0}^{k-1} \norm{\nabla F(\bw_i) - \nabla F(\bw_{i+1}) }
		 \\
		 &\leq \sum_{i=0}^{k-1}  \norm{\bw_i - \bw_{i+1}} \cdot c^2 \left( 1 +  \frac{8(B+1)c'c^2}{M} \right)
		 \\
		 &= c^2 \left( 1 +  \frac{8(B+1)c'c^2}{M} \right) \norm{\bw-\bw'}~.
	\end{align*}

	We have
	\begin{align*}
		\norm{\nabla F(\bw) &- \nabla F(\bw')}
		\\
		&= \norm{\E_\bx (\sigma(\bw^\top \bx) - \sigma(\bv^\top \bx))\sigma'(\bw^\top \bx) \bx - (\sigma(\bw'^\top \bx) - \sigma(\bv^\top \bx))\sigma'(\bw'^\top \bx) \bx}
		\\
		&\leq \norm{\E_\bx \mathbbm{1}(\bw^\top \bx \geq 0, \bw'^\top \bx \geq 0) \left(\bw^\top \bx - \sigma(\bv^\top \bx) - \bw'^\top \bx + \sigma(\bv^\top \bx)\right)\bx} +
		\\
		&\;\;\;\;\, \norm{\E_\bx \mathbbm{1}(\bw^\top \bx \geq 0, \bw'^\top \bx < 0) \left(\bw^\top \bx - \sigma(\bv^\top \bx) \right)\bx} +
		\\
		&\;\;\;\;\, \norm{\E_\bx\mathbbm{1}(\bw^\top \bx < 0, \bw'^\top \bx \geq 0) \left(\bw'^\top \bx - \sigma(\bv^\top \bx) \right)\bx}~.
	\end{align*}
	By Jensen's inequality and Cauchy-Shwartz, the above is at most
	\begin{align*}	
		&\E_\bx \mathbbm{1}(\bw^\top \bx \geq 0, \bw'^\top \bx \geq 0)  \norm{\bw-\bw'}  \cdot \norm{\bx} \cdot \norm{\bx} +
		\\
		%&\;\;\;\;\, 
		&\E_\bx \mathbbm{1}(\bw^\top \bx \geq 0, \bw'^\top \bx < 0) \left(\norm{\bw} \cdot \norm{\bx} + \norm{\bv} \cdot \norm{\bx} \right) \cdot \norm{\bx} + 
		\\
		%&\;\;\;\;\, 
		&\E_\bx \mathbbm{1}(\bw^\top \bx < 0, \bw'^\top \bx \geq 0) \left(\norm{\bw'} \cdot  \norm{\bx} + \norm{\bv} \cdot \norm{\bx} \right) \cdot \norm{\bx}~.
	\end{align*}
	By our assumption we have $\norm{\bx} \leq c$ and $ \norm{\bw},\norm{\bw'} \leq B$. Hence, the above is at most
	\begin{align}
	\label{eq:smooth}
		\norm{\bw&-\bw'} c^2 + 
		\Pr_\bx\left[\bw^\top \bx \geq 0, \bw'^\top \bx < 0\right] \cdot c^2 \cdot (B+1)  \nonumber
		\\
		& + \Pr_\bx\left[\bw^\top \bx < 0, \bw'^\top \bx \geq 0\right]  \cdot c^2 \cdot (B+1)~.
	\end{align}
	
	Now, we bound $\Pr_\bx\left[\bw^\top \bx \geq 0, \bw'^\top \bx < 0\right]$. 
	 If $\bw^\top \bx \geq 0$ and $\bw'^\top \bx < 0$ then 
	 \[
	 	\bw^\top \bx = \bw'^\top \bx + (\bw-\bw')^\top \bx < 0 + \norm{\bw-\bw'} \cdot \norm{\bx} \leq c \cdot \norm{\bw-\bw'}~.
	\]
	Hence, we only need to bound 
	\begin{align*}
		\Pr_\bx\left[\bw^\top \bx \in [0, c \cdot \norm{\bw-\bw'}]\right] 
		= \Pr_\bx\left[\tilde{\bw}^\top \tilde{\bx} + b_\bw \in [0, c \cdot \norm{\bw-\bw'}]\right]~. 
	\end{align*}
	We denote $a=\norm{\tilde{\bw}}$.
	If $a \leq \frac{M}{4c}$, then since $\norm{\bw} \geq M$ we have $|b_\bw| \geq \sqrt{M^2 - \left(\frac{M}{4c}\right)^2} = M \sqrt{1-1/(16c^2)}$. Hence for every $\bx$ we have 
	\begin{align*}
		|\tilde{\bw}^\top \tilde{\bx} + b_\bw| 
		&\geq | b_\bw| - |\tilde{\bw}^\top \tilde{\bx}|  
		\geq | b_\bw| - ac 
		\geq M \sqrt{1-1/(16c^2)} -  \frac{M}{4} 
		\geq M \sqrt{1-1/16} -  \frac{M}{4} 
		\\
		&= M \cdot \frac{\sqrt{15}-1}{4} 
		> M/2 
		\geq c \norm{\bw-\bw'}~. 
	\end{align*}
	Thus, $\Pr_\bx \left[\tilde{\bw}^\top \tilde{\bx} + b_\bw \in [0,c \cdot \norm{\bw-\bw'}]\right] = 0$.
	
	Assume now that $a \geq \frac{M}{4c}$. Hence, $\frac{c}{a} \leq \frac{4c^2}{M}$. 
	Therefore, we have 
	\begin{align*}
		\Pr_\bx\left[\tilde{\bw}^\top \tilde{\bx} + b_\bw \in [0, c \cdot \norm{\bw-\bw'}]\right] 
		&= \Pr_\bx\left[\bar{\tilde{\bw}}^\top \tilde{\bx} \in [-\frac{b_\bw}{a}, -\frac{b_\bw}{a} + \frac{c}{a} \cdot \norm{\bw-\bw'}]\right] 
		\\
		&\leq \Pr_\bx\left[\bar{\tilde{\bw}}^\top \tilde{\bx}  \in [-\frac{b_\bw}{a}, -\frac{b_\bw}{a} +  \frac{4c^2}{M} \cdot  \norm{\bw-\bw'}]\right] 
		\\
		&\leq c' \cdot \frac{4c^2}{M} \cdot  \norm{\bw-\bw'}~.
	\end{align*}
	
	Hence, $\Pr_\bx\left[\bw^\top \bx \geq 0, \bw'^\top \bx < 0\right] \leq c' \cdot \frac{4c^2}{M} \cdot  \norm{\bw-\bw'}$.
	By similar arguments, this inequality holds also for $\Pr_\bx\left[\bw^\top \bx < 0, \bw'^\top \bx \geq 0\right]$. Plugging it into \eqref{eq:smooth}, we have
	\begin{align*}
		\norm{\nabla F(\bw) - \nabla F(\bw')} 
		&\leq \norm{\bw-\bw'} \left( c^2 + 2 \cdot c^2 \cdot (B+1) \cdot c' \cdot \frac{4c^2}{M} \right)
		\\
		&= \norm{\bw-\bw'} \cdot c^2 \left( 1 +  \frac{8(B+1)c'c^2}{M} \right)~.
	\end{align*}
\end{proof}

\begin{lemma}
\label{lem:beta smooth prop}
	Let $f: \reals^d \rightarrow \reals$ and let $L>0$. Let $\bx,\by \in \reals^d$ be such that for every $s \in [0,1]$ we have $\norm {\nabla f(\bx + s(\by-\bx)) - \nabla f(\bx)} \leq L s \norm{\by - \bx}$. Then, 
	\[
		f(\by) - f(\bx) 
		\leq  \nabla f(\bx)^\top (\by - \bx) +  \frac{L}{2} \norm{\by - \bx}^2~.
	\]
\end{lemma}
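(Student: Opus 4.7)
The plan is to prove this as a one-variable calculus exercise via the fundamental theorem of calculus applied along the line segment from $\bx$ to $\by$. This is essentially the classical "descent lemma" for $L$-smooth functions, but with the smoothness hypothesis weakened to hold only on the segment between $\bx$ and $\by$ (which suffices, as we shall see).

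First I would introduce the one-variable function $g:[0,1]\to\reals$ defined by $g(s) = f(\bx + s(\by-\bx))$. By the chain rule, $g'(s) = \nabla f(\bx + s(\by-\bx))^\top (\by-\bx)$. Then I apply the fundamental theorem of calculus to write
\[
    f(\by) - f(\bx) = g(1)-g(0) = \int_0^1 \nabla f(\bx + s(\by-\bx))^\top (\by-\bx)\, ds.
\]
Splitting off the $s=0$ term, this equals
\[
    \nabla f(\bx)^\top (\by-\bx) + \int_0^1 \bigl[\nabla f(\bx + s(\by-\bx)) - \nabla f(\bx)\bigr]^\top (\by-\bx)\, ds.
\]

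Next I would bound the remaining integrand using Cauchy--Schwarz together with the hypothesis of the lemma: for each $s \in [0,1]$,
\[
    \bigl|[\nabla f(\bx + s(\by-\bx)) - \nabla f(\bx)]^\top (\by-\bx)\bigr| \;\leq\; \norm{\nabla f(\bx + s(\by-\bx)) - \nabla f(\bx)}\cdot \norm{\by-\bx} \;\leq\; L s \norm{\by-\bx}^2.
\]
Integrating this pointwise bound over $s\in[0,1]$ yields $\int_0^1 L s \norm{\by-\bx}^2\, ds = \frac{L}{2}\norm{\by-\bx}^2$. Combining the two displays gives exactly the claimed inequality.

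There is essentially no obstacle here; the only subtlety worth noting is that the hypothesis is stated only along the segment, so one must make sure the FTC representation above never evaluates $\nabla f$ off of that segment, which is indeed the case. Implicitly one needs $f$ to be differentiable (so that the chain rule applies) on the segment, which follows from the assumed bound on the increments of $\nabla f$ there.
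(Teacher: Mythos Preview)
Your proof is correct and essentially identical to the paper's: both write $f(\by)-f(\bx)$ via the fundamental theorem of calculus along the segment, subtract $\nabla f(\bx)^\top(\by-\bx)$, apply Cauchy--Schwarz, invoke the hypothesis, and integrate $Ls\norm{\by-\bx}^2$ to get $\frac{L}{2}\norm{\by-\bx}^2$.
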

\begin{proof}
	The proof follows a standard technique (cf. \cite{bubeck2014convex}).
	We represent $f(\by) - f(\bx)$ as an integral, apply Cauchy-Schwarz and then use the $L$-smoothness.
	\begin{align*}
		f(\by) - f(\bx) - \nabla f(\bx)^\top (\by - \bx)
		&= \int_0^1 \nabla f(\bx + s(\by-\bx))^\top (\by - \bx) ds - \nabla f(\bx)^\top (\by - \bx)
		\\
		&\leq  \int_0^1 \norm {\nabla f(\bx + s(\by-\bx)) - \nabla f(\bx)} \cdot \norm{\by - \bx} ds 
		\\
		&\leq  \int_0^1 L s \norm{\by - \bx}^2 ds
		\\
		&= \frac{L}{2} \norm{\by - \bx}^2~.
	\end{align*}

	Hence, we have
	\begin{align*}
		f(\by) - f(\bx) 
		\leq  \nabla f(\bx)^\top (\by - \bx) +  \frac{L}{2} \norm{\by - \bx}^2~.
	\end{align*}
\end{proof}

\begin{lemma}
\label{lem:loss decrease}
	Let $B,\delta>0$ and let $L = c^2 \left( 1 +  \frac{16(B+1)c'c^4}{\delta} \right)$.
	Let $\bw \in \reals^{d+1}$ such that $F(\bw) \leq F(\zero) - \delta$ and let $\bw' = \bw - \eta \cdot \nabla F(\bw)$, where $\eta \leq \min \left\{\frac{\delta}{2c^3  \sqrt{2F(\zero)}}, \frac{1}{L} \right\} = \min \left\{\frac{\delta}{2c^3  \sqrt{2F(\zero)}}, \frac{\delta}{\delta c^2 + 16(B+1)c'c^6} \right\}$. 
	Assume that $\norm{\bw},\norm{\bw'} \leq B$.
	Then, we have $F(\bw') - F(\bw)  \leq  -\eta \left(1 - \frac{L}{2} \eta  \right) \norm{ \nabla F(\bw)}^2$, and $F(\bw') \leq F(\bw) \leq F(\zero) - \delta$.
\end{lemma}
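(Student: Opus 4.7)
The plan is to derive the claim as a standard descent inequality, with the twist that the smoothness constant is not uniform on all of $\reals^{d+1}$ (as explained in the paragraph preceding the lemma) but only on a region bounded away from $\zero$. So the main task is to verify that the entire segment $[\bw,\bw']$ lies inside a region on which Lemma~\ref{lem:smooth} can be applied with the right parameter, and then to invoke Lemma~\ref{lem:beta smooth prop} in the usual way.

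First, I would use Lemma~\ref{lem:prob and norm lower bound} with the hypothesis $F(\bw)\leq F(\zero)-\delta$ to obtain the lower bound $\norm{\bw}\geq \delta/c^2$. Next, since $F(\bw)\leq F(\zero)$, Lemma~\ref{lem:grad upper bound} yields $\norm{\nabla F(\bw)}\leq c\sqrt{2F(\zero)}$. For any $s\in[0,1]$ I would estimate
\[
\norm{\bw+s(\bw'-\bw)} \;\geq\; \norm{\bw} - \eta\norm{\nabla F(\bw)} \;\geq\; \frac{\delta}{c^2} - \eta c\sqrt{2F(\zero)}.
\]
The assumption $\eta\leq \delta/(2c^3\sqrt{2F(\zero)})$ is tailored precisely so that the subtracted term is at most $\delta/(2c^2)$, giving the uniform lower bound $\norm{\bw+s(\bw'-\bw)}\geq M:=\delta/(2c^2)$ along the entire segment. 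The upper bound $\norm{\bw+s(\bw'-\bw)}\leq B$ follows from $\norm{\bw},\norm{\bw'}\leq B$ and convexity of the norm.

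Now Lemma~\ref{lem:smooth} applies on the segment $[\bw,\bw']$ with this $M$ and $B$. Its constant becomes
\[
c^2\Bigl(1+\tfrac{8(B+1)c'c^2}{M}\Bigr)\;=\;c^2\Bigl(1+\tfrac{16(B+1)c'c^4}{\delta}\Bigr)\;=\;L,
\]
which matches the $L$ in the statement. Thus for every $s\in[0,1]$, $\norm{\nabla F(\bw+s(\bw'-\bw))-\nabla F(\bw)}\leq L s\norm{\bw'-\bw}$, which is exactly the hypothesis needed for Lemma~\ref{lem:beta smooth prop}. Applying it with $\bx=\bw$ and $\by=\bw'=\bw-\eta\nabla F(\bw)$ gives
\[
F(\bw')-F(\bw)\;\leq\; -\eta\norm{\nabla F(\bw)}^2+\frac{L}{2}\eta^2\norm{\nabla F(\bw)}^2\;=\;-\eta\Bigl(1-\frac{L\eta}{2}\Bigr)\norm{\nabla F(\bw)}^2,
\]
which is the first claim. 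Finally, since $\eta\leq 1/L$ we have $1-L\eta/2\geq 1/2\geq 0$, so the right-hand side is $\leq 0$ and hence $F(\bw')\leq F(\bw)\leq F(\zero)-\delta$.

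I expect the only delicate point to be the segment lower bound $\norm{\bw+s(\bw'-\bw)}\geq \delta/(2c^2)$: it is what lets us convert the non-smoothness of $F$ near the origin into a usable local Lipschitz bound on $\nabla F$, and it is precisely the place where the first branch of the step-size restriction ($\eta\leq \delta/(2c^3\sqrt{2F(\zero)})$) is consumed. Everything else is routine: the smoothness computation matches $L$ by construction, and the descent inequality is the textbook one.
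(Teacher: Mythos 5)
Your proof is correct and follows the paper's own argument essentially verbatim: establish $\norm{\bw}\geq \delta/c^2$ via Lemma~\ref{lem:prob and norm lower bound} and $\norm{\nabla F(\bw)}\leq c\sqrt{2F(\zero)}$ via Lemma~\ref{lem:grad upper bound}, use the first branch of the step-size restriction to bound the segment away from $\zero$ by $M=\delta/(2c^2)$, invoke Lemma~\ref{lem:smooth} with this $M$ (which produces exactly $L$) to get the hypothesis of Lemma~\ref{lem:beta smooth prop}, and then close with $\eta\leq 1/L$. The only cosmetic difference is parameterization ($s$ versus the paper's $\lambda$); there is no mathematical gap.
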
 
\begin{proof}
	Let $M =  \frac{\delta}{2c^2}$.
	By Lemmas~\ref{lem:prob and norm lower bound} and~\ref{lem:grad upper bound}, we have $\norm{\bw}  \geq  \frac{\delta}{c^2}$ and $\norm{\nabla F(\bw)} \leq c \sqrt{2F(\zero)}$. Hence for every $\lambda \in [0,1]$ we have 
	\[
	\norm{ \bw - \lambda \eta \nabla F(\bw)} 
	\geq  \frac{\delta}{c^2} - \eta \cdot c \sqrt{2F(\zero)}
	\geq  \frac{\delta}{c^2} -\frac{\delta}{2c^3  \sqrt{2F(\zero)}} \cdot c \sqrt{2F(\zero)}
	= \frac{\delta}{2c^2}  
	=M~.
	\]
	Since $\norm{\bw},\norm{\bw'} \leq B$, we also have $\norm{ \bw - \lambda \eta \nabla F(\bw)} \leq B$. 	
	By Lemma~\ref{lem:smooth}, we have for every $\lambda \in [0,1]$ that 
	\[
		 \norm{\nabla F(\bw) - \nabla F( \bw - \lambda \eta \nabla F(\bw))} \leq  \lambda \eta \norm{\nabla F(\bw)} \cdot c^2 \left( 1 +  \frac{8(B+1)c'c^2}{M} \right)~.
	\]
	We have $L = c^2 \left( 1 +  \frac{16(B+1)c'c^4}{\delta} \right) = c^2 \left( 1 +  \frac{8(B+1)c'c^2}{M} \right)$.
	 By Lemma~\ref{lem:beta smooth prop} we have 
	 \[
	 	F(\bw - \eta \nabla F(\bw))- F(\bw)  
	 	\leq - \eta \norm{\nabla F(\bw)}^2 + \frac{L}{2} \eta^2  \norm{\nabla F(\bw)}^2~.
	 \]
	 Since $\eta \leq \frac{1}{L}$, we also have $	F(\bw - \eta \nabla F(\bw)) \leq F(\bw)  \leq F(\zero) - \delta$.
\end{proof}

We are now ready to prove the theorem:

\begin{proof}[Proof of Theorem~\ref{thm:smooth linear rate}]
	Let $B =\norm{\bw_0}+2$.
	Assume that $\eta \leq \min \left\{\frac{\delta}{2c^3  \sqrt{2F(\zero)}}, \frac{\delta}{\delta c^2 + 16(B+1)c'c^6}, \frac{\gamma}{c^4} \right\}$.
	We have $\norm{\bw_0 - \bv} \leq \norm{\bw_0} + \norm{\bv} = \norm{\bw_0} + 1 \leq B-1$. By Lemmas~\ref{lem:norm upper bound} and~\ref{lem:loss decrease}, for every $t$ we have $\norm{\bw_t - \bv} \leq B-1$ (thus, $\norm{\bw_t} \leq B$) and  $F(\bw_t) \leq F(\zero) - \delta$.
	Moreover, by Lemma~\ref{lem:norm upper bound}, we have for every $t$ that $\norm{\bw_{t+1}-\bv}^2 \leq \norm{\bw_t - \bv}^2 \cdot \left(1 - \gamma \eta \right)$.  Therefore, $\norm{\bw_t - \bv}^2 \leq \norm{\bw_0 - \bv}^2 \left(1-\gamma \eta \right)^t$.
	
	It remains to show that 
	\[
		 \min \left\{\frac{\delta}{2c^3  \sqrt{2F(\zero)}}, \frac{\delta}{\delta c^2 + 16(B+1)c'c^6}, \frac{\gamma}{c^4} \right\} =  \frac{\gamma}{c^4}~.
	\]
	Note that we have $\delta \leq F(\zero) = \frac{1}{2}\E_\bx(\sigma(\bv^\top \bx))^2 \leq \frac{1}{2} \cdot c^2$. Thus
	\[
		\frac{\gamma}{c^4} 
		= \frac{\delta^3}{3 \cdot 12^2B^3c^{12}c'^2}
		\leq \frac{\delta}{3 \cdot 12^2B^3c^{12}c'^2} \cdot \frac{c^4}{4}
		= \frac{\delta}{12^3B^3c^8c'^2}~.		
	\]
	We have
	\[
		\frac{\delta}{2c^3  \sqrt{2F(\zero)}}
		\geq \frac{\delta}{2c^4}
		\geq \frac{\gamma}{c^4}~,
	\]
	where the last inequality is since $B,c,c' \geq 1$.
	Finally,
	\[
		\frac{\delta}{\delta c^2 + 16(B+1)c'c^6}
		\geq \frac{\delta}{\frac{c^4}{2} + 16(B+1)c'c^6}
		\geq \frac{\delta}{17(B+1)c'c^6}
		\geq  \frac{\delta}{34Bc'c^6}
		\geq \frac{\gamma}{c^4}~.
	\]
\end{proof}

\subsection{Proofs from \subsecref{subsec:random init}}
\label{appen:proofs from random init}

%\begin{proof}[Proof of Proposition~\ref{prop:random init}]
\subsubsection*{Proof of \thmref{thm:random init}}

	We have
	\begin{align}
	\label{eq:random init main F}
	F(\bw)
	&= \frac{1}{2}\E_\bx \left(\sigma(\bw^\top \bx) - \sigma(\bv^\top \bx)\right)^2 \nonumber
	\\
	&= F(\zero) +  \frac{1}{2}\E_\bx \left(\sigma(\bw^\top \bx)\right)^2 - \E_\bx \sigma(\bw^\top \bx) \sigma(\bv^\top \bx) \nonumber
	\\
	%&=  F(\zero) +  \frac{\norm{\bw}^2}{2}\E_\bx \left(\sigma(\bar{\bw}^\top \bx)\right)^2 - \norm{\bw} \E_\bx \sigma(\bar{\bw}^\top \bx) \sigma(\bv^\top \bx)
	%\\
	&\leq  F(\zero) +  \frac{\norm{\bw}^2 c^2}{2} - \norm{\bw} \E_\bx \sigma(\bar{\bw}^\top \bx) \sigma(\bv^\top \bx)~.
	\end{align}
	
	Let $\xi = \frac{\alpha}{4\sqrt{c}} \sin \left( \frac{\pi}{8}\right)$. We have
	\begin{align}
	\label{eq:random init main xi2}
		\E_\bx \sigma(\bar{\bw}^\top \bx) \sigma(\bv^\top \bx)
		&\geq \xi^2 \cdot \Pr_\bx\left[  \sigma(\bar{\bw}^\top \bx) \sigma(\bv^\top \bx) \geq \xi^2  \right]  \nonumber
		\\
		&\geq \xi^2 \cdot \Pr_\bx\left[  \bar{\bw}^\top \bx \geq 2\sqrt{c} \xi, \bv^\top \bx \geq \frac{\xi}{2\sqrt{c}}  \right]~.
		%\\
		%&\geq  \xi^2 \cdot Pr_\bx \left[  \tilde{\bar{\bw}}^\top \bx \geq \xi, \tilde{\bv}^\top \bx \geq \xi  \right]~.
	\end{align}
	
	%We now 
	In the following two lemmas we 
	bound $\Pr_\bx\left[   \bar{\bw}^\top \bx \geq 2\sqrt{c} \xi, \bv^\top \bx \geq \frac{\xi}{2\sqrt{c}}  \right]$. 
	
	\begin{lemma}
	\label{lem:bv negative}
		If $b_\bv \geq 0$ then 
		\[
			\Pr_\bx\left[  \bar{\bw}^\top \bx \geq 2\sqrt{c} \xi, \bv^\top \bx \geq \frac{\xi}{2\sqrt{c}}  \right]
			\geq  \frac{\beta \left(\alpha\sin\left(\frac{\pi}{8}\right) -  2\sqrt{c} \xi \right)^2}{4\sin\left(\frac{\pi}{8}\right)}~.
		\]
	\end{lemma}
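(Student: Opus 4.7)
The plan is to apply \lemref{lem:vol of P} to a suitable region in the two-dimensional subspace $V := \text{span}(\tilde{\bw}, \tilde{\bv})$, combined with the density lower bound for the marginal of $\tilde{\cd}$ on $V$ given by \ref{assum:spread alpha beta}. Since $b_\bw = 0$, the first event reduces to $\bar{\tilde{\bw}}^\top \tilde{\bx} \geq 2\sqrt{c}\xi$. Since $b_\bv \geq 0$ and $\tilde{\bv}^\top \tilde{\bx} = \norm{\tilde{\bv}}\, \bar{\tilde{\bv}}^\top \tilde{\bx}$, a symmetric lower bound on $\bar{\tilde{\bv}}^\top \tilde{\bx}$ together with a bit of bookkeeping on $\norm{\tilde{\bv}}$ vs.\ $b_\bv$ will be enough for the second event.

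Concretely, I would define the 2D region $P := \{\by \in V : \bar{\tilde{\bw}}^\top \by > 2\sqrt{c}\xi,\ \bar{\tilde{\bv}}^\top \by > 2\sqrt{c}\xi,\ \norm{\by} \leq \alpha\}$ and show that whenever the projection of $\tilde{\bx}$ onto $V$ lies in $P$, both original events hold. The first is immediate. For the second, $\tilde{\bv}^\top \tilde{\bx} \geq 2\sqrt{c}\xi \norm{\tilde{\bv}}$, so it suffices to verify $2\sqrt{c}\xi \norm{\tilde{\bv}} + b_\bv \geq \xi/(2\sqrt{c})$. I would split cases on $\norm{\tilde{\bv}}$: if $\norm{\tilde{\bv}} \geq 1/(4c)$, the first term alone gives $\xi/(2\sqrt{c})$. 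Otherwise, from $\norm{\tilde{\bv}}^2 + b_\bv^2 = 1$ and $c \geq 1$ we get $b_\bv \geq \sqrt{15}/4$, which comfortably exceeds $\xi/(2\sqrt{c}) = \alpha \sin(\pi/8)/(8c) \leq 1/8$ (using $\alpha \leq c$, which is forced by \ref{assum:spread alpha beta} since the support of $\tilde{\cd}$ lies in the ball of radius $c$).

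Next, because the indicator of $P$ depends only on the projection of $\tilde{\bx}$ onto $V$, I compute the probability via the 2D marginal density $p_{\tilde{\bw},\tilde{\bv}}$, which is at least $\beta$ on the disk of radius $\alpha$ by \ref{assum:spread alpha beta}. Hence the target probability is at least $\beta \cdot \text{Vol}(P)$. Finally I invoke \lemref{lem:vol of P} with the unit vectors $\bar{\tilde{\bw}}, \bar{\tilde{\bv}}$ inside $V$, with $\delta = \pi/4$ (so that $\theta(\tilde{\bw},\tilde{\bv}) \leq 3\pi/4 = \pi - \delta$) and $b = 2\sqrt{c}\xi = \alpha \sin(\pi/8)/2$, which satisfies $b < \alpha \sin(\delta/2) = \alpha \sin(\pi/8)$. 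This delivers $\text{Vol}(P) \geq (\alpha \sin(\pi/8) - 2\sqrt{c}\xi)^2 / (4\sin(\pi/8))$, and multiplying by $\beta$ yields the claim.

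The main obstacle is the case $\norm{\tilde{\bv}} < 1/(4c)$, where the natural bound from $\bar{\tilde{\bv}}^\top \by > 2\sqrt{c}\xi$ is too weak and one must instead extract the needed slack from $b_\bv$; this in turn relies on the (implicit) inequality $\alpha \leq c$. Once that is handled, everything else is a direct application of \lemref{lem:vol of P} and the density lower bound.
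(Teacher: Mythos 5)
Your proof is correct and follows essentially the same strategy as the paper: a case split on $\norm{\tilde{\bv}}$, followed by reduction to the two-dimensional volume bound of \lemref{lem:vol of P}. The main difference is a small and clean unification. In the case $\norm{\tilde{\bv}}<\frac{1}{4c}$, the paper notes that the event $\bv^\top\bx \ge \frac{\xi}{2\sqrt{c}}$ holds automatically (from the lower bound on $b_\bv$) and then replaces the $\bar{\tilde{\bv}}$-constraint with a constraint on a freshly chosen unit vector $\tilde{\bu}$ at angle $3\pi/4$ from $\tilde{\bw}$, so as to feed a two-dimensional region into \lemref{lem:vol of P}. You instead keep the (now harmless, but still valid) constraint $\bar{\tilde{\bv}}^\top \by > 2\sqrt{c}\xi$ in $P$, observe that membership in $P$ implies both target events in every case, and invoke \lemref{lem:vol of P} once with the same $\delta=\pi/4$. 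This avoids introducing the dummy direction $\tilde{\bu}$. You also make explicit the bound $\alpha\le c$ (implicit in the paper) in order to check the slack inequality $\sqrt{15}/4 \ge \xi/(2\sqrt{c})$, which is a more careful handling of that step than the paper's chain $\frac{1}{2}\ge\xi\ge\frac{\xi}{2\sqrt{c}}$. Both arguments yield exactly the stated bound.
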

	\begin{proof}
	
	If $\norm{\tilde{\bv}} \geq \frac{1}{4c}$, then we have 
	\begin{align*}
	\label{eq:random init M2}
		\Pr_\bx\left[  \bar{\bw}^\top \bx \geq 2\sqrt{c} \xi, \bv^\top \bx \geq \frac{\xi}{2\sqrt{c}}  \right]
		&\geq \Pr_\bx \left[  \bar{\bw}^\top \bx \geq  2\sqrt{c} \xi, \tilde{\bv}^\top \tilde{\bx} \geq \frac{\xi}{2\sqrt{c}}  \right] \nonumber
		\\
		&= \Pr_\bx \left[  \tilde{\bar{\bw}}^\top \tilde{\bx} \geq  2\sqrt{c} \xi, \bar{\tilde{\bv}}^\top \tilde{\bx} \geq \frac{\xi}{2\sqrt{c}\norm{\tilde{\bv}}}  \right] \nonumber
		\\
		&\geq \Pr_\bx \left[  \tilde{\bar{\bw}}^\top \tilde{\bx} \geq 2\sqrt{c} \xi, \bar{\tilde{\bv}}^\top \tilde{\bx} \geq 2\sqrt{c} \xi  \right] \nonumber
		\\
		%&\geq \Pr_\bx \left[  \tilde{\bar{\bw}}^\top \tilde{\bx} \geq 4c \xi, \bar{\tilde{\bv}}^\top \tilde{\bx} \geq 4c \xi  \right] 
		%\geq M_2
		&\geq \frac{\beta \left(\alpha\sin\left(\frac{\pi}{8}\right) -  2\sqrt{c} \xi \right)^2}{4\sin\left(\frac{\pi}{8}\right)}~,
	\end{align*}
	where the last inequality is due to Lemma~\ref{lem:vol of P}, since  $\theta(\tilde{\bw},\tilde{\bv}) \leq \frac{3\pi}{4}$.
	
	If $\norm{\tilde{\bv}} \leq \frac{1}{4c}$, then 
	\[
		b_\bv \geq \sqrt{1-\frac{1}{16c^2}} \geq  \sqrt{1-\frac{1}{16}} = \frac{\sqrt{15}}{4} > \frac{3}{4}~,
	\] 
	and hence 
	\[
		\bv^\top \bx = \tilde{\bv}^\top \tilde{\bx} + b_\bv > -\frac{1}{4c} \cdot c +  \frac{3}{4} = \frac{1}{2} \geq \xi \geq \frac{\xi}{2\sqrt{c}}~.
	\] 
	Therefore,  
	\begin{equation*}
		\Pr_\bx\left[  \bar{\bw}^\top \bx \geq 2\sqrt{c} \xi, \bv^\top \bx \geq \frac{\xi}{2\sqrt{c}}  \right]
		= \Pr_\bx\left[  \bar{\bw}^\top \bx \geq 2\sqrt{c} \xi \right]
		= \Pr_\bx\left[  \tilde{\bar{\bw}}^\top \tilde{\bx} \geq 2\sqrt{c} \xi \right]~.
		%\geq M_1
		%\geq M_2
		%\geq \frac{ \beta \left(\alpha - 2\sqrt{c} \xi\right)^2}{8}~,
	\end{equation*}
	For $\tilde{\bu} \in \reals^d$ such that $\norm{\tilde{\bu}}=1$ and $\theta(\tilde{\bw},\tilde{\bu})= \frac{3\pi}{4}$,  Lemma~\ref{lem:vol of P} implies that the above is at least
	\begin{equation*}
	\label{eq:random init M1}
		\Pr_\bx\left[  \tilde{\bar{\bw}}^\top \tilde{\bx} \geq 2\sqrt{c} \xi, \tilde{\bu}^\top \tilde{\bx} \geq 2\sqrt{c} \xi\right]
		\geq  \frac{\beta \left(\alpha\sin\left(\frac{\pi}{8}\right) -  2\sqrt{c} \xi \right)^2}{4\sin\left(\frac{\pi}{8}\right)}~.		
	\end{equation*}
	\end{proof}

	\begin{lemma}
	\label{lem:bv positive}
		If $b_\bv < 0$ and $-\frac{b_\bv}{\norm{\tilde{\bv}}} \leq \alpha \cdot \frac{\sin\left(\frac{\pi}{8}\right)}{4}$, then 
		\[
			\Pr_\bx\left[  \bar{\bw}^\top \bx \geq 2\sqrt{c} \xi, \bv^\top \bx \geq \frac{\xi}{2\sqrt{c}}  \right]
			\geq  \frac{\beta \left(\alpha\sin\left(\frac{\pi}{8}\right) -  2\sqrt{c} \xi \right)^2}{4\sin\left(\frac{\pi}{8}\right)}~.	
		\]
	\end{lemma}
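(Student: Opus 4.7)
My plan is to mirror the proof of Lemma~\ref{lem:bv negative}: reduce the probability to a two-dimensional volume computation via Assumption~\ref{assum:spread alpha beta}, and then invoke Lemma~\ref{lem:vol of P}. The new ingredient is accounting for the upward shift of the effective threshold along $\bar{\tilde{\bv}}$ caused by $b_\bv<0$, while using the hypothesis $-b_\bv/\norm{\tilde{\bv}} \leq \alpha\sin(\pi/8)/4$ to keep this shift small. First, because $b_\bw=0$, the constraint $\bar{\bw}^\top \bx \geq 2\sqrt{c}\xi$ simplifies to $\bar{\tilde{\bw}}^\top \tilde{\bx} \geq 2\sqrt{c}\xi$. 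Because $b_\bv<0$, the constraint $\bv^\top \bx \geq \xi/(2\sqrt{c})$ rewrites as $\tilde{\bv}^\top \tilde{\bx} \geq \xi/(2\sqrt{c}) + |b_\bv|$, i.e.\ $\bar{\tilde{\bv}}^\top \tilde{\bx} \geq \tau$, where $\tau := (\xi/(2\sqrt{c}) + |b_\bv|)/\norm{\tilde{\bv}}$. So it suffices to lower bound $\Pr_{\tilde{\bx}}\!\left[\bar{\tilde{\bw}}^\top \tilde{\bx} \geq 2\sqrt{c}\xi,\; \bar{\tilde{\bv}}^\top \tilde{\bx} \geq \tau\right]$.

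Second, I would split on the size of $\norm{\tilde{\bv}}$ exactly as in Lemma~\ref{lem:bv negative}. In the main case $\norm{\tilde{\bv}} \geq 1/(4c)$, the first summand of $\tau$ is at most $2\sqrt{c}\xi = (\alpha/2)\sin(\pi/8)$, while the second is at most $(\alpha/4)\sin(\pi/8)$ by hypothesis; hence both thresholds lie in $[0,\alpha\sin(\pi/8))$. Passing to the marginal on $\mathrm{span}\{\tilde{\bw},\tilde{\bv}\}$, Assumption~\ref{assum:spread alpha beta} provides a pointwise density lower bound of $\beta$ on the disc of radius $\alpha$, so the probability is at least $\beta\cdot\mathrm{Vol}(P)$, where $P$ is the planar region cut out by the two half-spaces and the disc. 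Since $\theta(\tilde{\bw},\tilde{\bv}) \leq 3\pi/4$, I invoke Lemma~\ref{lem:vol of P} with $\delta = \pi/4$ to bound $\mathrm{Vol}(P)$ from below. For the complementary case $\norm{\tilde{\bv}} < 1/(4c)$, I would follow the corresponding branch of Lemma~\ref{lem:bv negative}: such a small $\norm{\tilde{\bv}}$ together with $\norm{\bv}=1$ and the hypothesis $-b_\bv/\norm{\tilde{\bv}} \leq \alpha\sin(\pi/8)/4$ either rules out the case (forcing $\alpha$ large enough that the claimed bound is already vacuous relative to the standing constants) or reduces the problem to bounding a single half-space probability, which I again handle via Lemma~\ref{lem:vol of P} applied with common threshold $2\sqrt{c}\xi$ and an auxiliary unit vector at angle $3\pi/4$ from $\bar{\tilde{\bw}}$.

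The principal technical hurdle is matching the exact numerator $(\alpha\sin(\pi/8) - 2\sqrt{c}\xi)^2$ appearing in the claimed bound. A naive application of Lemma~\ref{lem:vol of P} with common threshold $b = \max(2\sqrt{c}\xi, \tau)$ produces $(\alpha\sin(\pi/8) - b)^2$, which is smaller by a constant factor whenever $\tau > 2\sqrt{c}\xi$. To recover the stated constant, I expect to rerun the two-rectangle construction inside the proof of Lemma~\ref{lem:vol of P} allowing the two supporting rectangles to be placed asymmetrically: the quantitative hypothesis $|b_\bv|/\norm{\tilde{\bv}} \leq \alpha\sin(\pi/8)/4$ is precisely the input needed to guarantee that both rectangles still fit inside the disc of radius $\alpha$ under this asymmetric shift, at which point the area estimate closes with the same constants as in the $b_\bv \geq 0$ case.
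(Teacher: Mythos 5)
Your high-level plan is sound and your first reduction is exactly what the paper does: split off the bias term $b_\bv$, normalize by $\norm{\tilde{\bv}}$, and aim to apply Lemma~\ref{lem:vol of P} on the two-dimensional marginal. You also correctly notice that the naive application would lose a constant factor. However, your proposed fix goes in the wrong direction: the case split with threshold $\norm{\tilde{\bv}} \geq \frac{1}{4c}$ is too weak, and the plan to rework Lemma~\ref{lem:vol of P} with asymmetrically placed rectangles is unnecessary and not clearly going to recover the exact numerator.

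The observation you are missing is that the hypothesis $-b_\bv/\norm{\tilde{\bv}} \leq \alpha\sin(\pi/8)/4$ does not merely rule out $\norm{\tilde{\bv}} < \frac{1}{4c}$; it forces a \emph{stronger} lower bound on $\norm{\tilde{\bv}}$. Since $\norm{\bv}=1$, we have $\left(\frac{b_\bv}{\norm{\tilde{\bv}}}\right)^2 = \frac{1-\norm{\tilde{\bv}}^2}{\norm{\tilde{\bv}}^2}$, so the hypothesis gives $\frac{1}{\norm{\tilde{\bv}}^2} \leq 1 + \frac{\alpha^2\sin^2(\pi/8)}{16}$, which (using $1 \leq \alpha \leq c$) yields $\norm{\tilde{\bv}} \geq \frac{4}{5c} \geq \frac{1}{2c}$. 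There is therefore no case split at all. With $\norm{\tilde{\bv}} \geq \frac{1}{2c}$, the first summand of your threshold $\tau$ is at most $\sqrt{c}\,\xi = \frac{\alpha}{4}\sin(\pi/8)$ (not $2\sqrt{c}\,\xi$ as your weaker case split gives), and the second summand is at most $\frac{\alpha}{4}\sin(\pi/8)$ by hypothesis, so $\tau \leq \frac{\alpha}{2}\sin(\pi/8) = 2\sqrt{c}\,\xi$. Both constraints then share the common threshold $2\sqrt{c}\,\xi$, and Lemma~\ref{lem:vol of P} (applied with $\delta = \pi/4$, since $\theta(\tilde{\bw},\tilde{\bv}) \leq 3\pi/4$) gives the claimed bound $\frac{\beta(\alpha\sin(\pi/8) - 2\sqrt{c}\,\xi)^2}{4\sin(\pi/8)}$ directly, with no asymmetric variant needed. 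Your proposed asymmetric rectangle construction, besides being more work, would still yield a smaller numerator than claimed because a larger threshold on the $\bar{\tilde{\bv}}$ side strictly shrinks the admissible region.
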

	\begin{proof}
	\begin{align}
	\label{eq:random init M2 b}
		\Pr_\bx\left[  \bar{\bw}^\top \bx \geq 2\sqrt{c} \xi, \bv^\top \bx \geq \frac{\xi}{2\sqrt{c}}  \right]
		&= \Pr_\bx \left[  \bar{\bw}^\top \bx \geq  2\sqrt{c} \xi, \tilde{\bv}^\top \tilde{\bx} \geq \frac{\xi}{2\sqrt{c}} - b_\bv \right] \nonumber
		\\
		&= \Pr_\bx \left[  \tilde{\bar{\bw}}^\top \tilde{\bx} \geq  2\sqrt{c} \xi, \bar{\tilde{\bv}}^\top \tilde{\bx} \geq \frac{\xi}{2\sqrt{c}\norm{\tilde{\bv}}} - \frac{b_\bv}{\norm{\tilde{\bv}}}  \right]~.
	\end{align}
	Moreover, we have
	\begin{align*}
		\left( \alpha \cdot \frac{\sin\left(\frac{\pi}{8}\right)}{4}\right)^2
		\geq \left(\frac{b_\bv}{\norm{\tilde{\bv}}} \right)^2
		= \frac{1-\norm{\tilde{\bv}}^2}{\norm{\tilde{\bv}}^2}
		= \frac{1}{\norm{\tilde{\bv}}^2} - 1~,
	\end{align*}
	and hence
	\begin{align*}
		\norm{\tilde{\bv}}^2
		%\geq \left[\alpha^2  \frac{\sin^2 \left(\frac{\pi}{8}\right)}{16} + 1\right]^{-1}
		\geq \frac{16}{\alpha^2  \sin^2 \left(\frac{\pi}{8}\right) + 16}
		\geq \frac{16}{\left(\alpha  \sin \left(\frac{\pi}{8}\right) + 4 \right)^2}
		\geq \frac{16}{\left(c \cdot 1 + 4c \right)^2}~,
	\end{align*}
	where in the last inequality we used $c \geq \alpha$ and $c \geq 1$.
	Thus,
	\[
		\norm{\tilde{\bv}} \geq \frac{4}{5c} \geq \frac{1}{2c}~. 
	\]
	Combining the above with \eqref{eq:random init M2 b}, and using $-\frac{b_\bv}{\norm{\tilde{\bv}}} \leq \alpha \cdot \frac{\sin\left(\frac{\pi}{8}\right)}{4}$, we have
	\begin{align*}
		\Pr_\bx\left[  \bar{\bw}^\top \bx \geq 2\sqrt{c} \xi, \bv^\top \bx \geq \frac{\xi}{2\sqrt{c}}  \right]
		&\geq \Pr_\bx \left[  \tilde{\bar{\bw}}^\top \tilde{\bx} \geq  2\sqrt{c} \xi, \bar{\tilde{\bv}}^\top \tilde{\bx} \geq \sqrt{c}\xi + \alpha \cdot \frac{\sin\left(\frac{\pi}{8}\right)}{4}  \right]
		\\
		&= \Pr_\bx \left[  \tilde{\bar{\bw}}^\top \tilde{\bx} \geq  2\sqrt{c} \xi, \bar{\tilde{\bv}}^\top \tilde{\bx} \geq 2\sqrt{c}\xi \right]
		\\
		&\geq \frac{\beta \left(\alpha\sin\left(\frac{\pi}{8}\right) -  2\sqrt{c} \xi \right)^2}{4\sin\left(\frac{\pi}{8}\right)}~,
	\end{align*}
	where the last inequality is due to Lemma~\ref{lem:vol of P}, since  $\theta(\tilde{\bw},\tilde{\bv}) \leq \frac{3\pi}{4}$.
	\end{proof}
	
	Combining \eqref{eq:random init main xi2}
	%,~(\ref{eq:random init M2}) and~(\ref{eq:random init M1}), 
	with Lemmas~\ref{lem:bv negative} and~\ref{lem:bv positive},
	we have
	\begin{align*}
		\E_\bx \sigma(\bar{\bw}^\top \bx) \sigma(\bv^\top \bx)
		&\geq \xi^2 \cdot \frac{\beta \left(\alpha\sin\left(\frac{\pi}{8}\right) -  2\sqrt{c} \xi \right)^2}{4\sin\left(\frac{\pi}{8}\right)}
		=  \frac{\alpha^2 \sin^2\left(\frac{\pi}{8}\right)}{16c} \cdot \frac{\beta \left(\frac{\alpha}{2}\sin\left(\frac{\pi}{8}\right) \right)^2}{4\sin\left(\frac{\pi}{8}\right)}
		\\
		&= \frac{\alpha^4 \beta \sin^3 \left(\frac{\pi}{8}\right)}{256c} 
		= M~.
	\end{align*}
	Plugging the above into \eqref{eq:random init main F} we have
	\begin{align*}
		F(\bw)
		\leq  F(\zero) +  \frac{\norm{\bw}^2 c^2}{2} -  \norm{\bw} \cdot M~.
	\end{align*}
	
	The above expression is smaller than $F(\zero)$ if $\norm{\bw} < \frac{2M}{c^2}$.

\section{Discussion on the Assumption on $b_\bv$}\label{appen:disc bv}

In \corollaryref{cor:rand int conv} we had an assumption that  $-\frac{b_\bv}{\norm{\tilde{\bv}}} \leq \alpha \cdot \frac{\sin\left(\frac{\pi}{8}\right)}{4}$. This implies that either the bias term $b_\bv$ is positive, or it is negative but not too large. Here we discuss why this assumption is crucial for the proof of the theorem, and what can we still say when this assumption does not hold. 

In \thmref{thm:negative2} we showed an example with $b_\bv < 0$ where gradient descent with random initialization does not converge w.h.p. to a global minimum even asymptotically\footnote{In \thmref{thm:negative2} we have $\norm{\bv} \neq 1$, but it still holds if we normalize $\bv$, namely, replace $\bv$ with $\frac{\bv}{\norm{\bv}}$.}. In the example from \thmref{thm:negative2} we have $-\frac{b_\bv}{\norm{\tilde{\bv}}}  = r \left(1 - \frac{1}{2d^2} \right)$, and the input distribution is uniform over a ball of radius $r$. In this case, we must choose $\alpha$ from Assumption \ref{assum:spread alpha beta} to be smaller than $r$ (otherwise $\beta=0$) and hence $-\frac{b_\bv}{\norm{\tilde{\bv}}} > \alpha \left(1 - \frac{1}{2d^2} \right)$. Therefore it does not satisfy the assumption $-\frac{b_\bv}{\norm{\tilde{\bv}}} \leq \alpha \cdot \frac{\sin\left(\frac{\pi}{8}\right)}{4}$(already for $d>1$). If we choose, e.g., $\alpha = \frac{r}{2}$, then the example from  \thmref{thm:negative2} satisfies $-\frac{b_\bv}{\norm{\tilde{\bv}}} = \alpha \left(2 - \frac{1}{d^2} \right) \leq 2 \alpha$.
It implies that our assumption on $-\frac{b_\bv}{\norm{\tilde{\bv}}}$ is tight up to a constant factor, and is also crucial for the proof, since already for $-\frac{b_\bv}{\norm{\tilde{\bv}}} =   2 \alpha$ we have an example of convergence to a non-global minimum. 

On the other hand, if $-\frac{b_\bv}{\norm{\tilde{\bv}}} > \alpha \cdot \frac{\sin\left(\frac{\pi}{8}\right)}{4}$ (i.e. the assumption does not hold) we can calculate the loss at zero:
\begin{align*}
	F(\zero) 
	&= \frac{1}{2} \cdot \E_\bx \left[ \left( \sigma(\bv^\top \bx) \right)^2\right]
	= \frac{1}{2} \cdot \E_\bx \left[ \onefunc(\tilde{\bv}^\top \tilde{\bx} + b_\bv \geq 0) \left(\tilde{\bv}^\top \tilde{\bx} + b_\bv \right)^2 \right]
	\\
	&= \frac{1}{2} \cdot \E_\bx \left[ \onefunc \left(\bar{\tilde{\bv}}^\top \tilde{\bx} \geq -\frac{b_\bv}{\norm{\tilde{\bv}}} \right) \norm{\tilde{\bv}}^2 \left( \bar{\tilde{\bv}}^\top \tilde{\bx} + \frac{b_\bv}{\norm{\tilde{\bv}}} \right)^2 \right]
	\\
	&\leq \frac{\norm{\tilde{\bv}}^2}{2} \cdot \E_\bx \left[ \onefunc \left(\bar{\tilde{\bv}}^\top \tilde{\bx} \geq \alpha \cdot \frac{\sin\left(\frac{\pi}{8}\right)}{4} \right)  \left( \bar{\tilde{\bv}}^\top \tilde{\bx} \right)^2 \right]~.
\end{align*}
Let $\epsilon>0$ be a small constant. Suppose that the distribution $\tilde{\cd}$ is spherically symmetric, and that $\alpha$ is large, such that the above expectation is smaller than $\epsilon$.
For such $\alpha$, we either have  $-\frac{b_\bv}{\norm{\tilde{\bv}}} \leq \alpha \cdot \frac{\sin\left(\frac{\pi}{8}\right)}{4}$, in which case gradient descent converges w.h.p. to the global minimum, or $-\frac{b_\bv}{\norm{\tilde{\bv}}} > \alpha \cdot \frac{\sin\left(\frac{\pi}{8}\right)}{4}$, in which case the loss at $\bw=\zero$ is already almost as good as the global minimum. For standard Gaussian distribution, we can choose $\alpha$ to be a large enough constant that depends only on $\epsilon$ (independent of the input dimension), hence $\beta$ will also be independent of $d$. This means that for standard Gaussian distribution, for every constant $\epsilon >0$ we can ensure either convergence to a global minimum, or the loss at $\zero$ is already $\epsilon$-optimal.

Note that in Remark \ref{rem:negative bias} we have shown another distribution which is non-symmetric and depends on the target $\bv$, such that the loss $F(\bm{0})$ is highly sub-optimal, but gradient flow converges to such a point with probability close to $\frac{1}{2}$.

\section{Proofs from \secref{sec: symmetric dist}}\label{appen:proof from symmetric dits}
 
Before proving \thmref{thm:symmetric dist}, we first proof two auxiliary propositions which bounds certain areas for which the vector $\bw$ cannot reach during the optimization process. The first proposition shows that if the norm of $\tilde{\bw}$ is small, and its bias is close to zero, then the bias must get larger. The second proposition shows that if the norm of $\tilde{\bw}$ is small, and the bias is negative, then the norm of $\tilde{\bw}$ must get larger.

\begin{proposition}\label{prop:bound bias grows}
Assume that $\norm{\tilde{\bw} - \tilde{\bv}}^2 \leq 1$, and that Assumption \ref{assum:main assum convergence} holds. If $\norm{\tilde{\bw}} \leq 0.4$ and $b_\bw\in \left[0,\frac{\alpha^3\beta}{640}\right] $ then $\left(\nabla F(\bw)\right)_{d+1} \leq -\frac{\alpha^3\beta}{640}$.

% and denote $b = -\frac{b_\bw}{\norm{\tilde{\bw}}}$. If $b \geq \tau$ then $\left(\nabla F(\bw)\right)_{d+1} \leq 0$.
\end{proposition}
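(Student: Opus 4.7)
Since the last coordinate of $\bx$ is identically $1$, the $(d+1)$-th entry of the gradient in \eqref{eq:gradient of a single neuron} is
\[
(\nabla F(\bw))_{d+1} \;=\; \E_{\bx}\bigl[\sigma(\bw^\top\bx)\bigr] \;-\; \E_{\bx}\bigl[\onefunc(\bw^\top\bx\geq 0)\,\sigma(\bv^\top\bx)\bigr],
\]
using $\sigma(t)\sigma'(t)=\sigma(t)$. My plan is to upper bound the first expectation and lower bound the second, and show their gap is at least $\alpha^3\beta/640$.

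For the upper bound, I would use $(a+b_\bw)^+\leq a^+ + b_\bw$ (valid since $b_\bw\geq 0$) together with the spherical symmetry of $\tilde{\cd}$, which gives $\E[(\tilde{\bw}^\top\tilde{\bx})^+]=\tfrac{1}{2}\E[|\tilde{\bw}^\top\tilde{\bx}|]$. Bounding the absolute moment by Cauchy--Schwarz (and then using the fourth-moment hypothesis $c=\E\|\tilde\bx\|^4<\infty$ from Assumption~\ref{assum:main assum convergence}) yields a bound of the form
\[
\E[\sigma(\bw^\top\bx)] \;\leq\; \tfrac{1}{2}\|\tilde{\bw}\|\cdot C \;+\; b_\bw,
\]
for an explicit constant $C$. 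The assumption $\|\tilde{\bw}\|\leq 0.4$ together with the upper bound $b_\bw\leq \alpha^3\beta/640$ should force this to be smaller than half of the lower bound developed below.

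For the lower bound, I would first note that $\|\tilde{\bw}-\tilde{\bv}\|^2\leq 1$ combined with $\|\tilde{\bv}\|=1$ gives $\tilde{\bw}^\top\tilde{\bv}\geq \tfrac{1}{2}\|\tilde{\bw}\|^2\geq 0$, hence $\theta(\tilde{\bw},\tilde{\bv})\leq \pi/2$. I would then restrict attention to the event $E=\{\tilde{\bw}^\top\tilde{\bx}\geq b',\,\tilde{\bv}^\top\tilde{\bx}\geq b'\}$ for a well-chosen threshold $b'>0$ (a value on the order of $\alpha/4$ looks right). On $E$ we have $\bw^\top\bx\geq b'+b_\bw\geq 0$ and, since $b_\bv\geq 0$, also $\sigma(\bv^\top\bx)\geq \tilde{\bv}^\top\tilde{\bx}\geq b'$. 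Passing to the 2-dimensional marginal on $\mathrm{span}\{\tilde{\bw},\tilde{\bv}\}$, applying the density lower bound $\beta$ from Assumption~\ref{assum:spread alpha beta}, and invoking Lemma~\ref{lem:vol of P} with $\delta=\pi/2$, I obtain
\[
\E[\onefunc(\bw^\top\bx\geq 0)\,\sigma(\bv^\top\bx)] \;\geq\; b'\cdot \beta\cdot \frac{\bigl(\alpha\sin(\pi/4)-b'\bigr)^{2}}{4\sin(\pi/4)}.
\]
Optimizing the choice of $b'$ delivers a lower bound of order $\alpha^{3}\beta$ with an explicit numerical constant.

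Subtracting the two bounds gives $(\nabla F(\bw))_{d+1}\leq -\tfrac{\alpha^3\beta}{640}$ once the upper bound on $\E[\sigma(\bw^\top\bx)]$ is shown to consume less than half of the lower bound, using $\|\tilde{\bw}\|\leq 0.4$ and $b_\bw\leq \alpha^3\beta/640$. The main obstacle is purely quantitative: tuning $b'$ and bookkeeping the moment bound $c$ so that the gap actually reaches $\alpha^3\beta/640$. The specific threshold $0.4$ in the hypothesis looks hand-tailored to this arithmetic, and Lemma~\ref{lem:vol of P} handles the geometric content in one shot, so no new conceptual ingredient seems needed beyond a careful choice of constants.
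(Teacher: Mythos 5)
Your decomposition of $(\nabla F(\bw))_{d+1}$ into $\E[\sigma(\bw^\top\bx)] - \E[\onefunc(\bw^\top\bx\geq 0)\sigma(\bv^\top\bx)]$ is the same one the paper uses, but the way you plan to bound the two halves has two genuine gaps that prevent the argument from closing.

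First, your upper bound on $\E[\sigma(\bw^\top\bx)]$ via Cauchy--Schwarz and the fourth moment $c$ produces a quantity of order $\norm{\tilde\bw}\cdot c^{1/4} + b_\bw$, while your lower bound is of order $\alpha^3\beta$. These live on incommensurable scales: nothing in Assumption~\ref{assum:main assum convergence} bounds $c^{1/4}$ from above in terms of $\alpha^3\beta$, so there is no reason the moment term should be ``smaller than half'' the geometric term. The paper's proof sidesteps this by never introducing a raw moment bound. Instead, using the spherical symmetry to take $\tilde\bw = \norm{\tilde\bw}\be_1$ and $\tilde\bv=\be_2$ w.l.o.g., it bounds the first term above by $0.4\,\E[x_1\onefunc(x_1>0)] + b_\bw$ and the second term below by $\tfrac{1}{2}\E[x_2\onefunc(x_2>0)] = \tfrac{1}{2}\E[x_1\onefunc(x_1>0)]$, so the two bounds involve \emph{the same} expectation; the mysterious-looking threshold $0.4$ is exactly what makes $0.4-0.5 = -0.1$ negative. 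Only the residual $-0.1\,\E[x_1\onefunc(x_1>0)]$ is then bounded via the density lower bound, and that is where $\alpha^3\beta$ enters, cleanly.

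Second, your lower bound restricts to the event $E=\{\tilde\bw^\top\tilde\bx\geq b',\ \tilde\bv^\top\tilde\bx\geq b'\}$ with $b'>0$. To feed this into Lemma~\ref{lem:vol of P} you must normalize $\tilde\bw$, at which point the threshold becomes $b'/\norm{\tilde\bw}$, which must stay below $\alpha\sin(\pi/4)$; this forces $b'\lesssim \norm{\tilde\bw}\alpha$, and the resulting volume (hence your lower bound) tends to zero as $\norm{\tilde\bw}\to 0$. But the hypothesis permits $\norm{\tilde\bw}$ arbitrarily small, including $\norm{\tilde\bw}=0$, so the bound cannot be uniform in the regime the proposition must cover. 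The fix is to put the threshold $0$ (not $b'$) on the $\tilde\bw$ constraint: since $b_\bw\geq 0$, $\{\tilde\bw^\top\tilde\bx>-b_\bw\}\supseteq\{\tilde\bw^\top\tilde\bx>0\}$, and by symmetry of $\tilde\Dcal$ in the $\tilde\bw$-direction this event contributes a constant factor $\tfrac12$ independent of $\norm{\tilde\bw}$, which is exactly what the paper does. You would then not need Lemma~\ref{lem:vol of P} at all; a direct rectangle estimate on the density lower bound, as in the paper's final step $\E[x_1\onefunc(x_1>0)]\geq \alpha^3\beta/32$, suffices.
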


 \begin{proof}
The $d+1$ coordinate of the distribution $\mathcal{D}$ is a constant $1$. We denote by $\tilde{\Dcal}$ the first $d$ coordinates of the distribution $\mathcal{D}$. Hence, we can write:
 \begin{align}
     \left(\nabla F(\bw)\right)_{d+1} &= \mathbb{E}_{\bx \sim \mathcal{D}}\left[(\sigma(\bw^\top\bx) - \sigma(\bv^\top\bx))\mathbbm{1}(\bw^\top\bx >0)\right] \nonumber\\
     & = \mathbb{E}_{\tilde{\bx} \sim \tilde{\Dcal}}\left[(\sigma(\tilde{\bw}^\top\tilde{\bx} + b_\bw) - \sigma(\tilde{\bv}^\top\tilde{\bx} + b_\bv))\mathbbm{1}(\tilde{\bw}^\top\tilde{\bx} >-b_\bw)\right] \nonumber\\
     & = \mathbb{E}_{\tilde{\bx} \sim \tilde{\Dcal}}\left[(\tilde{\bw}^\top\tilde{\bx} + b_\bw)\cdot \mathbbm{1}(\tilde{\bw}^\top\tilde{\bx} >-b_\bw)\right] - \nonumber\\
     & -  \mathbb{E}_{\tilde{\bx} \sim \tilde{\Dcal}}\left[(\tilde{\bv}^\top\tilde{\bx} + b_\bv)\cdot \mathbbm{1}(\tilde{\bw}^\top\tilde{\bx} >-b_\bw, \tilde{\bv}^\top \tilde{\bx} > -b_\bv)\right]\label{eq:bias grad bound}
 \end{align}
 
 We will bound each term in \eqref{eq:bias grad bound} separately. Using the assumption that $\tilde{\Dcal}$ is spherically symmetric, we can assume w.l.o.g that $\tilde{\bw} = \norm{\tilde{\bw}}\be_1$, the first unit vector. Hence we have that :
 \begin{align}
     & \mathbb{E}_{\tilde{\bx} \sim \tilde{\Dcal}}\left[(\tilde{\bw}^\top\tilde{\bx} + b_\bw)\cdot \mathbbm{1}(\tilde{\bw}^\top\tilde{\bx} >-b_\bw)\right] \nonumber  \\
     = &\mathbb{E}_{\tilde{\bx} \sim \tilde{\Dcal}}\left[(\norm{\tilde{\bw}}x_1 + b_\bw)\cdot \mathbbm{1}\left(x_1 >-\frac{b_\bw}{\norm{\tilde{\bw}}}\right)\right] \nonumber\\
     =&\norm{\tilde{\bw}}\mathbb{E}_{\tilde{\bx} \sim \tilde{\Dcal}}\left[x_1 \mathbbm{1}\left(x_1 >-\frac{b_\bw}{\norm{\tilde{\bw}}}\right)\right] + b_\bw\mathbb{E}_{\tilde{\bx} \sim \tilde{\Dcal}}\left[\mathbbm{1}\left(x_1 >-\frac{b_\bw}{\norm{\tilde{\bw}}}\right)\right] \nonumber \\
     \overset{(a)}{\leq} & 0.4\mathbb{E}_{\tilde{\bx} \sim \tilde{\Dcal}}\left[x_1 \mathbbm{1}\left(x_1 >-\frac{b_\bw}{\norm{\tilde{\bw}}}\right)\right] + b_\bw \nonumber \\
      \overset{(b)}{\leq} &  0.4\mathbb{E}_{\tilde{\bx} \sim \tilde{\Dcal}}\left[x_1 \mathbbm{1}(x_1 >0)\right] + b_\bw \label{eq:bias grad bound first eq}~.
 \end{align}
Here, (a) is since $\norm{\tilde{\bw}} \leq 0.4$, and $\mathbb{E}_{\tilde{\bx} \sim \tilde{\Dcal}}\left[\mathbbm{1}\left(x_1 >-\frac{b_\bw}{\norm{\tilde{\bw}}}\right)\right] \leq 1$,  (b) is since $b_\bw \geq 0$, hence  
\[
\mathbb{E}_{\tilde{\bx} \sim \tilde{\Dcal}}\left[x_1 \mathbbm{1}\left(0 > x_1 >-\frac{b_\bw}{\norm{\tilde{\bw}}}\right)\right] \leq 0~.
\]

For the second term of \eqref{eq:bias grad bound}, we assumed that $\norm{\tilde{\bw} - \tilde{\bv}}^2 \leq 1$, which shows that $\theta(\tilde{\bw},\tilde{\bv}) \leq \frac{\pi}{2}$, and the term is largest when this angle is largest. Hence, to lower bound this term we can assume that $\theta(\tilde{\bw},\tilde{\bv}) = \frac{\pi}{2}$, and since the distribution is spherically symmetric we can also assume w.l.o.g that $\tilde{\bv}=\be_2$, the second unit vector. Now we can bound:
 
 \begin{align}
    &\mathbb{E}_{\tilde{\bx} \sim \tilde{\Dcal}}\left[(\tilde{\bv}^\top\tilde{\bx} + b_\bv)\cdot \mathbbm{1}(\tilde{\bw}^\top\tilde{\bx} >-b_\bw, \tilde{\bv}^\top \tilde{\bx} > -b_\bv)\right] \nonumber \\
    \geq & \mathbb{E}_{\tilde{\bx} \sim \tilde{\Dcal}}\left[(x_2 + b_\bv)\cdot \mathbbm{1}\left(x_1 >-\frac{b_\bw}{\norm{\tilde{\bw}}}, x_2 > -b_\bv\right)\right] \nonumber \\
    \geq & \frac{1}{2}\mathbb{E}_{\tilde{\bx} \sim \tilde{\Dcal}}\left[(x_2 + b_\bv)\cdot \mathbbm{1}\left(x_2 > -b_\bv\right)\right] \nonumber \\
    \geq & \frac{1}{2}\mathbb{E}_{\tilde{\bx} \sim \tilde{\Dcal}}\left[x_2\cdot \mathbbm{1}\left(x_2 >0\right)\right] + \frac{1}{2}\mathbb{E}_{\tilde{\bx} \sim \tilde{\Dcal}}\left[(x_2 + b_\bv)\cdot \mathbbm{1}\left( 0 > x_2 > -b_\bv\right)\right] \nonumber \\
    \geq & \frac{1}{2}\mathbb{E}_{\tilde{\bx} \sim \tilde{\Dcal}}\left[x_2\cdot \mathbbm{1}\left(x_2 >0\right)\right] = \frac{1}{2}\mathbb{E}_{\tilde{\bx} \sim \tilde{\Dcal}}\left[x_1\cdot \mathbbm{1}\left(x_1 >0\right)\right] ~,\label{eq:bias gradient bound second eq}
    % \geq & \mathbb{E}_{\tilde{\bx} \sim \tilde{\Dcal}}\left[\tilde{\bv}^\top\tilde{\bx} \cdot \mathbbm{1}(\tilde{\bw}^\top\tilde{\bx} >-b_\bw, \tilde{\bv}^\top \tilde{\bx} > 0)\right] \nonumber \\
    % \geq &  \mathbb{E}_{\tilde{\bx} \sim \tilde{\Dcal}}\left[x_2 \cdot \mathbbm{1}\left(x_1 >-\frac{b_\bw}{\norm{\tilde{\bw}}}, x_2 > 0\right)\right] \nonumber \\
    % \geq & \frac{1}{2}\mathbb{E}_{\tilde{\bx} \sim \tilde{\Dcal}}\left[x_2 \cdot \mathbbm{1}(x_2 > 0)\right] = \frac{1}{2}\mathbb{E}_{\tilde{\bx} \sim \tilde{\Dcal}}\left[x_1 \cdot \mathbbm{1}(x_1 > 0)\right]
 \end{align}
where we used the assumption $b_\bv \geq 0$ and the symmetry of the distribution. Combining \eqref{eq:bias grad bound first eq}, \eqref{eq:bias gradient bound second eq} with \eqref{eq:bias grad bound} we get:

\begin{align*}
    \left(\nabla F(\bw)\right)_{d+1} \leq  b_\bw - 0.1\mathbb{E}_{\tilde{\bx} \sim \tilde{\Dcal}}\left[x_1 \cdot \mathbbm{1}(x_1 > 0)\right]~.
\end{align*}
Let $\hat{\mathcal{D}}$ be the marginal distribution of $\tilde{\Dcal}$ on the plane spanned by $\be_1$ and $\be_2$, and denote by $\hat{\bx}$ the projection of $\tilde{\bx}$ on this plane. By Assumption \ref{assum:main assum convergence}(3) we have that the pdf of this distribution is at least $\beta$ in a ball or radius $\alpha$ around the origin. This way we can bound:
\begin{align*}
    \mathbb{E}_{\tilde{\bx} \sim \tilde{\Dcal}}\left[x_1 \cdot \mathbbm{1}(x_1 > 0)\right] & = \mathbb{E}_{\hat{\bx} \sim \hat{\mathcal{D}}}\left[x_1 \cdot \mathbbm{1}(x_1 > 0)\right] \\ 
    \geq & \frac{\alpha\beta}{2}\text{P}(\alpha/2<\norm{\hat{\bx}}< \alpha,~ x_1 > \alpha / 2) \\
    \geq &\frac{\alpha\beta}{2}\text{P}(x_1\in [\alpha/2, 3\alpha/4],~ x_2\in [-\alpha/4, \alpha/4]) = \frac{\alpha^3\beta}{32}~.
\end{align*}
Combining the above, and using the assumption on $b_\bw$ we get that:
\begin{align*}
    \left(\nabla F(\bw)\right)_{d+1} \leq b_\bw - \frac{\alpha^3\beta}{320} \leq -\frac{\alpha^3\beta}{640}
\end{align*}

% By our assumption $\norm{\tilde{\bv}} \leq 1$ and $\norm{\tilde{\bw} - \tilde{\bv}}^2 \leq 1$, hence we can upper bound $\norm{\tilde{\bw}} \leq 2$.
% Recall that our goal is to determine when $\left(\nabla F(\bw)\right)_{d+1} \leq 0$. Combining \eqref{eq:bias grad bound first eq} and \eqref{eq:bias gradient bound second eq}, dividing by $\mathbb{E}_{\tilde{\bx} \sim \tilde{\Dcal}}\left[ \mathbbm{1}(x_1 >b)\right]$ and using the bound on $\norm{\tilde{\bw}}$ we want that:
%  \begin{align*}
%       2\cdot\left(\frac{\mathbb{E}_{\tilde{\bx} \sim \tilde{\Dcal}}\left[x_1 \mathbbm{1}(x_1 >b)\right]}{\mathbb{E}_{\tilde{\bx} \sim \tilde{\Dcal}}\left[ \mathbbm{1}(x_1 >b)\right]} - b\right) -\mathbb{E}_{\tilde{\bx} \sim \tilde{\Dcal}}\left[x_1 \mathbbm{1}(x_1 > 0)\right] \leq 0~.
%  \end{align*}
% By Assumption \ref{assum:symmetric bounded moment}(4) this is satisfied for $b \geq \tau$.
 \end{proof}

\begin{proposition}\label{prop: bound norm grows}
Assume that $\norm{\tilde{\bw} - \tilde{\bv}} < 1$, and Assumption \ref{assum:main assum convergence} holds. Denote by $\tau = \frac{\mathbb{E}_{\tilde{\bx} \sim \tilde{\Dcal}}\left[|x_1x_2|\right]}{\mathbb{E}_{\tilde{\bx} \sim \tilde{\Dcal}}\left[x_1^2\right]}$
 where $\tilde{\Dcal}$ is the projection of the distribution  $\mathcal{D}$ on its first $d$ coordinates.
If $\norm{\tilde{\bw}} \leq \frac{\tau}{2}$ and $b_\bw \leq 0$ then $\inner{\nabla F(\bw)_{1:d},\tilde{\bw}} \leq 0$.
\end{proposition}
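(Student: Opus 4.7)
The plan is to expand $\inner{\nabla F(\bw)_{1:d}, \tilde{\bw}}$ into a ``self-term'' coming from $\sigma(\bw^\top\bx)$ and a ``cross-term'' coming from $\sigma(\bv^\top\bx)$, then use spherical symmetry and the identity for $\mathbb{E}[\sigma(a+Z)]$ with $Z$ symmetric to reduce everything to a scalar inequality in the angle between $\tilde\bw$ and $\tilde\bv$, which in turn follows from $\norm{\tilde\bw}\le\tau/2$ by a concavity argument. Concretely, I would start by writing
$\inner{\nabla F(\bw)_{1:d},\tilde\bw} = \mathbb{E}[(\tilde\bw^\top\tilde\bx+b_\bw)\mathbbm{1}(\tilde\bw^\top\tilde\bx>-b_\bw)\tilde\bw^\top\tilde\bx] - \mathbb{E}[\sigma(\tilde\bv^\top\tilde\bx+b_\bv)\mathbbm{1}(\tilde\bw^\top\tilde\bx>-b_\bw)\tilde\bw^\top\tilde\bx]$, then use spherical symmetry of $\tilde\Dcal$ to rotate coordinates so that $\tilde\bw = w\be_1$ and $\tilde\bv = v_1\be_1+v_2\be_2$ with $v_1^2+v_2^2=1$. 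The hypothesis $\norm{\tilde\bw-\tilde\bv}<1$ and $\norm{\tilde\bv}=1$ give $v_1>w/2$, and with $t:=-b_\bw/w\ge0$ the indicator becomes $\{x_1>t\}$ and the self-argument factors as $w(x_1-t)\ge0$.

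For the cross-term, I would condition on $x_1$ (so that $v_2x_2$ is a symmetric random variable and $a:=v_1x_1+b_\bv\ge0$) and apply the identity $\mathbb{E}[\sigma(a+Z)\mid x_1] = a/2+\mathbb{E}[|Z|\mid x_1]/2+\mathbb{E}[(a-|Z|)_+\mid x_1]/2$, dropping the non-negative $(a-|Z|)_+$ term to obtain $\mathbb{E}[\sigma(\tilde\bv^\top\tilde\bx+b_\bv)\mid x_1]\ge (v_1x_1+b_\bv)/2+|v_2|\mathbb{E}[|x_2|\mid x_1]/2$. Integrating against $x_1\mathbbm{1}(x_1>t)$ (and using $x_1>0$ under the indicator, so $x_1\mathbb{E}[|x_2|\mid x_1]$ integrates to $\mathbb{E}[|x_1x_2|\mathbbm{1}(x_1>t)]$) gives a lower bound of $\tfrac{v_1}{2}\mathbb{E}[x_1^2\mathbbm{1}(x_1>t)]+\tfrac{b_\bv}{2}\mathbb{E}[x_1\mathbbm{1}(x_1>t)]+\tfrac{|v_2|}{2}\mathbb{E}[|x_1x_2|\mathbbm{1}(x_1>t)]$. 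The self-term equals exactly $w^2\bigl(\mathbb{E}[x_1^2\mathbbm{1}(x_1>t)]-t\mathbb{E}[x_1\mathbbm{1}(x_1>t)]\bigr)$, so after dividing through by $w>0$ the claim $\inner{\nabla F(\bw)_{1:d},\tilde\bw}\le0$ reduces to the scalar inequality
\[
(2w-v_1)\,\mathbb{E}[x_1^2\mathbbm{1}(x_1>t)] \;\le\; (2wt+b_\bv)\,\mathbb{E}[x_1\mathbbm{1}(x_1>t)] + |v_2|\,\mathbb{E}[|x_1x_2|\mathbbm{1}(x_1>t)].
\]

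If $v_1\ge 2w$ the LHS is already non-positive and there is nothing more to do. Otherwise, the crux is the case $t=0$, $b_\bv=0$: by symmetry $\mathbb{E}[x_1^2\mathbbm{1}(x_1>0)]=\mathbb{E}[x_1^2]/2$ and $\mathbb{E}[|x_1x_2|\mathbbm{1}(x_1>0)]=\tau\,\mathbb{E}[x_1^2]/2$, so the inequality becomes $2w\le v_1+|v_2|\tau$. I would prove this by noting that $f(v_1):=v_1+\sqrt{1-v_1^2}\,\tau$ is concave on $[0,1]$, hence its minimum over $v_1\in[w/2,1]$ is attained at an endpoint; then $f(1)=1\ge 2w$ (since $w\le\tau/2\le 1/2$) and $f(w/2)=w/2+\sqrt{1-w^2/4}\,\tau\ge 2w$ reduces to $\sqrt{1-w^2/4}\ge 3/4$, which holds comfortably once $w\le\tau/2$ (and Cauchy--Schwarz gives $\tau\le1$). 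The main obstacle is extending this from $t=0$ to general $t>0$: one needs to show that the additional RHS slack $(2wt+b_\bv)\mathbb{E}[x_1\mathbbm{1}(x_1>t)]$ absorbs the drop of $\mathbb{E}[x_1^2\mathbbm{1}(x_1>t)]$ below $\mathbb{E}[x_1^2]/2$. Using the decomposition $\mathbb{E}[x_1^2\mathbbm{1}(x_1>t)]=t\,\mathbb{E}[x_1\mathbbm{1}(x_1>t)]+\mathbb{E}[x_1(x_1-t)\mathbbm{1}(x_1>t)]$, the $2wt$ term exactly cancels $(2w-v_1)t\,\mathbb{E}[x_1\mathbbm{1}(x_1>t)]$, reducing matters to $(2w-v_1)\mathbb{E}[x_1(x_1-t)\mathbbm{1}(x_1>t)]\le|v_2|\,\mathbb{E}[|x_1x_2|\mathbbm{1}(x_1>t)]$, which I would establish via the spherical-symmetry monotonicity of the ratio $\mathbb{E}[|x_1x_2|\mathbbm{1}(x_1>t)]/\mathbb{E}[x_1(x_1-t)\mathbbm{1}(x_1>t)]$ (this ratio equals $\tau$ at $t=0$ and can be shown to be non-decreasing by a Mills-ratio-type argument on the conditional expectation $\mathbb{E}[|x_2|\mid x_1]$).
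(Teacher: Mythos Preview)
Your route and the paper's diverge at the point where you parametrize $\tilde\bv=v_1\be_1+v_2\be_2$ and carry $(v_1,v_2)$ through. The paper instead argues, right after rotating so that $\tilde\bw=\norm{\tilde\bw}\be_1$, that the cross-term $\mathbb{E}_{\tilde\bx}\bigl[x_1\,\sigma(\tilde\bv^\top\tilde\bx)\,\mathbbm{1}(x_1>t)\bigr]$ is smallest at the orthogonal extreme $\tilde\bv=\be_2$ (the largest angle allowed by $\norm{\tilde\bw-\tilde\bv}<1$). This worst-case reduction collapses your $\sigma$-identity and the concavity argument on $f(v_1)=v_1+\sqrt{1-v_1^2}\,\tau$ to the single case $v_1=0$, $|v_2|=1$. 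The paper then reduces to $b_\bw=0$ by asserting that the resulting one-line expression is largest there, after which the claim is literally $\norm{\tilde\bw}\le\tau/2$. So the paper's proof is two worst-case reductions ($\tilde\bv\to\be_2$, then $b_\bw\to 0$) in place of your more refined angle-dependent analysis.

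The genuine gap in your plan is the last step. You need $\mathbb{E}[|x_1x_2|\mathbbm{1}(x_1>t)]\big/\mathbb{E}[x_1(x_1-t)\mathbbm{1}(x_1>t)]\ge\tau$ for all $t\ge0$, and you only gesture at a ``Mills-ratio-type argument on $\mathbb{E}[|x_2|\mid x_1]$''. For the Gaussian this does reduce to the classical hazard-rate bound $\phi(t)/(1-\Phi(t))\ge t$, but under Assumption~\ref{assum:main assum convergence} alone $g(x_1):=\mathbb{E}[|x_2|\mid x_1]$ genuinely varies with $x_1$ (for the uniform ball it is proportional to $\sqrt{r^2-x_1^2}$), and the derivative of your ratio has the sign of $N(t)\int_t^\infty x\,p(x)\,dx - t\,g(t)\,p(t)\,D(t)$, which is a nontrivial correlation-type inequality not implied by any hypothesis you have available. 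Until that is established (or replaced), your reduced inequality $(2w-v_1)\,\mathbb{E}[x_1(x_1-t)\mathbbm{1}(x_1>t)]\le|v_2|\,\mathbb{E}[|x_1x_2|\mathbbm{1}(x_1>t)]$ remains open for general $t>0$. The paper avoids this entirely by passing to $b_\bw=0$ at the outset rather than analyzing the $t$-dependence of a ratio; if you want to keep your sharper $(v_1,v_2)$ bookkeeping, you still owe a distribution-free argument for this step.
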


\begin{proof}
Denote by $\tilde{\Dcal}$ the projection of the distribution  $\mathcal{D}$ on its first $d$ coordinates, we have that:
\begin{align}
    \inner{\nabla F(\bw)_{1:d},\tilde{\bw}}  = & \mathbb{E}_{{\bx} \sim \mathcal{D}}\left[\left(\sigma(\bw^\top \bx) - \sigma(\bv^\top \bx)\right) \mathbbm{1}(\bw^\top \bx > 0 ) \tilde{\bw}^\top \tilde{\bx}\right] \nonumber\\
     = & \mathbb{E}_{\tilde{\bx} \sim \tilde{\Dcal}}\left[\left(\sigma(\tilde{\bw}^\top \tilde{\bx} + b_\bw) - \sigma(\tilde{\bv}^\top \tilde{\bx} + b_\bv)\right) \mathbbm{1}(\tilde{\bw}^\top \tilde{\bx} > -b_\bw ) \tilde{\bw}^\top \tilde{\bx}\right] \nonumber \\
     \leq & \mathbb{E}_{\tilde{\bx} \sim \tilde{\Dcal}}\left[\left(\tilde{\bw}^\top \tilde{\bx} + b_\bw - \sigma(\tilde{\bv}^\top \tilde{\bx})\right) \cdot \mathbbm{1}(\tilde{\bw}^\top \tilde{\bx} > -b_\bw ) \tilde{\bw}^\top \tilde{\bx}\right]~.\label{eq:grad inner prod bound}
\end{align}
The inequality above is since $b_\bv \geq 0$. Recall that our goal is to prove that the above term is negative, hence we will divide it by $\norm{\tilde{\bw}}$. Also, since the distribution $\tilde{\Dcal}$ is symmetric we can assume w.l.o.g that $\tilde{\bw} = \norm{\tilde{\bw}}\be_1$.
Hence, it is enough to prove that the following term is non-positive:
\begin{align}
    & \norm{\tilde{\bw}}\mathbb{E}_{\tilde{\bx} \sim \tilde{\Dcal}}\left[\left(\norm{\tilde{\bw}}x_1 + b_\bw - \sigma(\tilde{\bv}^\top \tilde{\bx})\right) \cdot \mathbbm{1}\left(x_1 > -\frac{b_\bw}{\norm{\tilde{\bw}}} \right) x_1\right] \nonumber \\
    = & \norm{\tilde{\bw}}\mathbb{E}_{\tilde{\bx} \sim \tilde{\Dcal}}\left[\left(\norm{\tilde{\bw}}x_1^2 + b_\bw x_1\right) \cdot \mathbbm{1}\left(x_1 > -  \frac{b_\bw}{\norm{\tilde{\bw}}} \right)\right]  -  \norm{\tilde{\bw}}\mathbb{E}_{\tilde{\bx} \sim \tilde{\Dcal}}\left[x_1\tilde{\bv}^\top\tilde{\bx} \cdot \mathbbm{1}\left(x_1 > -  \frac{b_\bw}{\norm{\tilde{\bw}}}, \tilde{\bv}^\top \tilde{\bx} > -b_\bv \right)\right] \nonumber\\
    \leq & \norm{\tilde{\bw}}\mathbb{E}_{\tilde{\bx} \sim \tilde{\Dcal}}\left[\left(\norm{\tilde{\bw}}x_1^2 + b_\bw x_1\right) \cdot \mathbbm{1}\left(x_1 > -  \frac{b_\bw}{\norm{\tilde{\bw}}} \right)\right]  -  \norm{\tilde{\bw}}\mathbb{E}_{\tilde{\bx} \sim \tilde{\Dcal}}\left[x_1\tilde{\bv}^\top\tilde{\bx} \cdot \mathbbm{1}\left(x_1 > -  \frac{b_\bw}{\norm{\tilde{\bw}}}, \tilde{\bv}^\top \tilde{\bx} > 0 \right)\right]~.\label{eq:before bound v=e 2}
\end{align}
We will first bound the second term above. Since the term only depend on inner products between $\tilde{\bw}, \tilde{\bv}$ with $\tilde{\bx}$, we can consider the marginal distribution $\hat{\mathcal{D}}$, of $\tilde{\Dcal}$ on the plane spanned by $\tilde{\bw}$ and $\tilde{\bv}$. Since $\tilde{\Dcal}$ is symmetric we can assume w.l.o.g that $\hat{\mathcal{D}}$ is spanned by the first two coordinates $x_1$ and $x_2$. Let $\hat{\tilde{\bv}}$ be the projection of $\tilde{\bv}$ on this plane, then we can write $\hat{\tilde{\bv}} = (v_1,v_2)$ where $v_1^2 + v_2^2 = 1$. Note that since the distribution $\hat{\mathcal{D}}$ is symmetric, we have that  $\mathbb{E}[x_1^2] = \mathbb{E}[x_2^2]$. By Cauchy-Schwarz we have:
\[
|\text{cov}_{\hat{\mathcal{D}}}(x_1,x_2)| \leq \sqrt{\text{var}_{\hat{\mathcal{D}}}(x_1)\cdot \text{var}_{\hat{\mathcal{D}}}(x_2)} = \text{var}_{\hat{\mathcal{D}}}(x_1)
\]
Again, by symmetry of $\hat{\mathcal{D}}$ we have that $\mathbb{E}[x_1] = \mathbb{E}[x_2]$. Opening up the above terms we get that $\mathbb{E}[x_1\cdot x_2] \leq \mathbb{E}[x_1^2]$. Also, we assumed that $\norm{\tilde{\bw}-\tilde{\bv}} < 1$, then $\theta(\tilde{\bv},\tilde{\bw}) \leq \frac{\pi}{2}$ which means that $v_1 \geq 0$.  Hence, the second term of \eqref{eq:before bound v=e 2} is smallest when $\tilde{\bv} = \be_2$. In total, we can bound \eqref{eq:before bound v=e 2} by:
\begin{align}
     &\norm{\tilde{\bw}}\mathbb{E}_{\tilde{\bx} \sim \tilde{\Dcal}}\left[\left(\norm{\tilde{\bw}}x_1^2 + b_\bw x_1\right) \cdot \mathbbm{1}\left(x_1 > -  \frac{b_\bw}{\norm{\tilde{\bw}}} \right)\right]  -  \norm{\tilde{\bw}}\mathbb{E}_{\tilde{\bx} \sim \tilde{\Dcal}}\left[x_1x_2 \cdot \mathbbm{1}\left(x_1 > -  \frac{b_\bw}{\norm{\tilde{\bw}}}, x_2 > 0 \right)\right] \nonumber\\
     \leq  & \norm{\tilde{\bw}}\mathbb{E}_{\tilde{\bx} \sim \tilde{\Dcal}}\left[\left(\norm{\tilde{\bw}}x_1^2 + b_\bw x_1 -  \frac{1}{2} x_1|x_2| \right)\cdot \mathbbm{1}\left(x_1 > -  \frac{b_\bw}{\norm{\tilde{\bw}}} \right)\right] \nonumber\\
     = & \norm{\tilde{\bw}}\mathbb{E}_{\tilde{\bx} \sim \tilde{\Dcal}}\left[\left(\norm{\tilde{\bw}}x_1 + b_\bw -  \frac{1}{2} |x_2| \right)\cdot x_1\mathbbm{1}\left(x_1 > -  \frac{b_\bw}{\norm{\tilde{\bw}}} \right)\right]\label{eq:bound before b w =0}
\end{align}

By our assumption, $b_\bw \leq 0$. Both terms inside the expectation in \eqref{eq:bound before b w =0} are largest when $b_\bw =0$. Hence, we can bound \eqref{eq:bound before b w =0} by:
\begin{align}
    &\norm{\tilde{\bw}}\mathbb{E}_{\tilde{\bx} \sim \tilde{\Dcal}}\left[\left(\norm{\tilde{\bw}}x_1 -  \frac{1}{2} |x_2| \right)\cdot x_1\mathbbm{1}\left(x_1 > 0\right)\right] \nonumber\\
    = & \frac{\norm{\tilde{\bw}}^2}{2}\mathbb{E}_{\tilde{\bx} \sim \tilde{\Dcal}}\left[x_1^2\right] - \frac{\norm{\tilde{\bw}}}{4}\mathbb{E}_{\tilde{\bx} \sim \tilde{\Dcal}}\left[|x_1x_2|\right] \nonumber\\
    \leq & \frac{\norm{\tilde{\bw}}^2}{2}\mathbb{E}_{\tilde{\bx} \sim \tilde{\Dcal}}\left[x_1^2\right] - \frac{\norm{\tilde{\bw}}\tau}{4}\mathbb{E}_{\tilde{\bx} \sim \tilde{\Dcal}}\left[x_1^2\right] = c_1\left(\frac{\norm{\tilde{\bw}}^2}{2} - \frac{\norm{\tilde{\bw}}\tau}{4} \right)~.
    \label{eq:bound on norm w before tau}
\end{align}
In particular, for $\norm{\tilde{\bw}} \leq \frac{\tau}{2}$,  \eqref{eq:bound on norm w before tau} non-positive.

\end{proof}

We are now ready to prove the main theorem:
\begin{proof}[Proof of \thmref{thm:symmetric dist}]
Denote $b_t = \max\{0, -\frac{b_{\bw_t}}{\norm{\tilde{\bw}_t}}\}$. We will show by induction on the iterations of gradient descent that throughout the optimization process  $b_t < 2.4\cdot \max\left\{1,\frac{1}{\sqrt{\tau}}\right\}$ and $\theta(\tilde{\bw}_t,\tilde{\bv}) \leq \frac{\pi}{2}$ for every $t\geq 0$.

By the assumption on the initialization we have that $\norm{\tilde{\bw}_0 - \tilde{\bv}}^2 \leq \norm{\bw_0 - \bv}^2 <1$, and also $\norm{\tilde{\bv}} =1$, hence  $\theta(\tilde{\bw}_0,\tilde{\bv}) \leq \frac{\pi}{2}$. We also have that $b_{\bw_0} \geq 0$, hence $b_0 = 0$ this proves the case of $t=0$. Assume this is true for $t$. We will bound the norm of the gradient of the objective using Jensen's inequality:
\begin{align}
    \norm{\nabla F(\bw)}^2 & \leq \mathbb{E}_{\bx \sim \mathcal{D}} \left[(\sigma(\bw^\top \bx) - \sigma(\bv^\top \bx))^2 \mathbbm{1}(\bw^\top \bx >0)\bx^\top \bx\right]    \nonumber \\
    & \leq \mathbb{E}_{\bx \sim \mathcal{D}} \left[(\bw^\top \bx - \bv^\top \bx)^2 \bx^\top \bx\right] \nonumber \\
    & \leq \norm{\bw - \bv}^2\mathbb{E}_{\bx \sim \mathcal{D}} \left[\norm{\bx}^4\right] = \norm{\bw - \bv}^2c~. \label{eq:bound on norm of grad}
\end{align}
For the $(t+1)$-th iteration of gradient descent we have that:
\begin{align}\label{eq:bound on theta}
\norm{\bw_{t+1} - \bv}^2  =& \norm{\bw_t - \eta\nabla F(\bw_t) - \bv}^2  \nonumber\\
 = & \norm{\bw_t - \bv}^2 -2\eta\inner{\nabla F(\bw_t),\bw_t-\bv} + \eta^2 \norm{\nabla F(\bw_t)}^2 \nonumber\\
\leq & \norm{\bw_t - \bv}^2 - 2\eta\inner{\nabla F(\bw_t),\bw_t-\bv} + \eta^2 c\norm{\bw_t - \bv}^2~.
\end{align}
By \thmref{thm:innerprod}, and the induction assumption on $\theta(\tilde{\bw}_t,\tilde{\bv})$ we get that there is a universal constant $c_0$, such that $\inner{\nabla F(\bw_t),\bw_t-\bv} \geq \frac{c_0\beta(\alpha-\sqrt{2}b_t)}{\alpha^2}\norm{\bw_t - \bv}^2$. Using the induction assumption that $b_t < 2.4\cdot \max\left\{1,\frac{1}{\sqrt{\tau}}\right\}$ and Assumption \ref{assum:main assum convergence}(3) we can bound $(\alpha-\sqrt{2}b_t)\geq 0.1$. In total we get that $\inner{\nabla F(\bw_t),\bw_t-\bv} \geq \frac{c_0\beta}{10\alpha^2}\norm{\bw_t - \bv}^2$.
% $c_1\beta\min\left\{1,\frac{1}{\alpha^2}\right\}\norm{\bw_t - \bv}^2$, where $c_1$ is some other universal constant.
By taking $\eta \leq \frac{c_0\beta}{10c\alpha^2}$ and combining with \eqref{eq:bound on theta} we have that:
\[
\norm{\bw_{t+1} - \bv}^2 < \norm{\bw_{t} - \bv}^2~.
\]
In particular, $\norm{\tilde{\bw}_{t+1} - \tilde{\bv}}^2 \leq \norm{\bw_{t+1} - \bv}^2 <\norm{\bw_t - \bv}^2 <1$, which shows that $\theta(\tilde{\bw}_{t+1},\tilde{\bv}) \leq \frac{\pi}{2}$, and concludes the first part of the induction.

The bound for $b_t$ is more intricate, for an illustration see \figref{fig:bound_b_t}. Let $t'$ be the first iteration for which $\norm{\tilde{\bw}_{t'}} \geq 0.4$. First assume that $t\leq t'$, we will show that in this case $b_t = 0$. Assume otherwise, and let $t_0$ be the first iteration for which $b_{t_0} >0$, this means that $b_{\bw_{t_0}} <0$ and $b_{\bw_{t_0-1}} \geq 0$. We have that:
\begin{align*}
    b_{\bw_{t_0}} = b_{\bw_{t_0-1}} - \eta\nabla F(\bw_{t_0})_{d+1}~.
\end{align*}
If $b_{\bw_{t_0-1}} \leq \frac{\alpha^3\beta}{640}$, then by \propref{prop:bound bias grows} the last coordinate of the gradient is negative, hence
$b_{\bw_{t_0}} > b_{\bw_{t_0-1}} \geq 0$. Otherwise, assume that $b_{\bw_{t_0-1}} > \frac{\alpha^3\beta}{640}$. By \eqref{eq:bound on norm of grad}: $\left|\nabla F(\bw_{t_0})_{d+1}\right| \leq \norm{\nabla F(\bw)} \leq \sqrt{c}$. Hence, by taking $\eta < \frac{\beta}{640\sqrt{c}} \leq \frac{\alpha^3\beta}{640\sqrt{c}}$, we get that $ b_{\bw_{t_0}} \geq 0$, which is a contradiction (note that by Assumption \ref{assum:main assum convergence}(3), we have $\alpha \geq 1$). We proved that if $t\leq t'$ then $b_{\bw_t} \geq 0$, which means that $b_t = 0$.

Assume now that $t > t'$. We will need the following calculation: Assume that $\norm{\tilde{\bw}_t} = \delta$, Then $\norm{\tilde{\bw}_t - \tilde{\bv}}^2 \geq (1-\delta)^2$, and the minimum is achieved at $\tilde{\bw} = \delta \tilde{\bv}$. Since we have:
\[
\norm{\tilde{\bw}_t - \tilde{\bv}}^2 + (b_{\bw_t} - b_\bv)^2 = \norm{\bw_t-\bv}^2 \leq 1~,
\]
we get that $(b_{\bw_t} - b_\bv)^2 \leq 1 - (1- \delta)^2 \leq 2\delta$. If we further assume that $b_{\bw_t} \leq 0 $, then $b_{\bw_t}^2 \leq (b_{\bw_t} - b_\bv)^2 \leq 2\delta$. Combining all the above, we get that if $\norm{\tilde{\bw}_t} = \delta$ then:
\begin{equation}\label{eq:general bound on b}
    b_t = \max\left\{0,-\frac{b_{\bw_t}}{\norm{\tilde{\bw}_t}}\right\} \leq \sqrt{\frac{2}{\delta}}~.
\end{equation}
To show the bound on $b_t$ we split into cases, depending on the norm of $\tilde{\bw}_t$:

\textbf{Case I:} $\frac{2\tau}{5} < \norm{\tilde{\bw}_t} \leq \frac{\tau}{2}$ and $b_{\bw_t} \leq 0$. In this case we have:

\begin{align*}
\norm{\tilde{\bw}_{t+1}}^2 &= \norm{\tilde{\bw}_t - \eta \nabla F(\bw_t)_{1:d} }^2 \\
& = \norm{\tilde{\bw}_t}^2 - 2\eta\inner{\tilde{\bw}_t, \nabla F(\bw_t)_{1:d}} + \eta^2\norm{ \nabla F(\bw_t)_{1:d}}^2 \\
& \geq \norm{\tilde{\bw}_t}^2 - 2\eta\inner{\tilde{\bw}_t, \nabla F(\bw_t)_{1:d}}~.
\end{align*}

We can use \propref{prop: bound norm grows} to get that $\inner{\tilde{\bw}_t, \nabla F(\bw_t)_{1:d}} \leq 0$, hence $\norm{\tilde{\bw}_{t+1}}^2  \geq \norm{\tilde{\bw}_t}^2$. By \eqref{eq:general bound on b} we get that $b_{t+1} \leq \sqrt{\frac{5}{\tau}} \leq \frac{2.4}{\sqrt{\tau}}$.

% \textbf{Case II:} $\norm{\tilde{\bw}_t} \leq \min\left\{0.4, \frac{2\tau}{5} \right\}$. In this case we necessarily have that $b_{\bw_t} \geq 0$. Otherwise, (a) if $b_{\bw_{t-1}} \leq 0$, then by the previous case the norm of $\tilde{\bw}$ cannot get below $\frac{2\tau}{5}$; (b) if  $b_{\bw_{t-1}} \geq 0$, then by the same reasoning in the case of $t<t'$, $b_{\bw_t}$ cannot get smaller than zero. We conclude that since $\norm{\tilde{\bw}_t} \leq 0.4 $ and $b_{\bw_t} \geq 0$, by the same reasoning as the case of $t<t'$ we have that $b_{t+1} =0$.

\textbf{Case II:} $\norm{\tilde{\bw}_t} \geq \min\left\{0.4,\frac{\tau}{2}\right\}$. In this case, by choosing a step size $\eta < \frac{1}{40c}\min\{1,\tau\} $we can bound 
\begin{align*}
    \norm{\tilde{\bw}_{t+1}} &\geq \norm{\tilde{\bw}_t}^2 - 2\eta\inner{\tilde{\bw}_t, \nabla F(\bw_t)_{1:d}} \\
    & \geq \norm{\tilde{\bw}_t}^2 -2\eta \norm{\tilde{\bw}_t}\norm{\nabla F(\bw_t)_{1:d}} \\
    & \geq \norm{\tilde{\bw}_t}^2 -2\eta \norm{\tilde{\bw}_t}\norm{\nabla F(\bw_t)} \\
    & \geq \norm{\tilde{\bw}_t}^2 -2\eta \cdot 2c \geq  \min\left\{0.39, \frac{2\tau}{5}\right\}~.
\end{align*}
Again, by \eqref{eq:general bound on b} we get that $b_{t+1} \leq \max\left\{\sqrt{5.2}, \frac{2.4}{\sqrt{\tau}}  \right\} \leq 2.4\cdot  \max\left\{1, \frac{1}{\sqrt{\tau}}  \right\}$. This concludes the induction.

\textbf{Case III:} $\norm{\tilde{\bw}_t} \leq \min\left\{0.4, \frac{2\tau}{5} \right\}$. We split into sub-cases depending on the previous iteration: (a) If $b_{\bw_{t-1}} \leq 0$, then by Case I the norm of $\tilde{\bw}$ cannot get below $\frac{2\tau}{5}$, hence this sub-case is not possible; (b) If  $b_{\bw_{t-1}} \geq 0$ and $\norm{\tilde{\bw}_{t-1}} \leq \min\left\{0.4, \frac{2\tau}{5} \right\}$, then by the same reasoning in the case of $t<t'$, $b_{\bw_t}$ cannot get smaller than zero. Hence, we must have that $b_{\bw_{t+1}} \geq 0$; 
(c) If  $b_{\bw_{t-1}} \geq 0$ and $\norm{\tilde{\bw}_{t-1}} \geq \min\left\{0.4, \frac{2\tau}{5} \right\}$ then the bound depend on whether $\norm{\tilde{\bw}_{t-1}}$ is larger than $0.4$ or not. If $\norm{\tilde{\bw_{t-1}}}\leq 0.4$, then using the same reasoning as the case of $t'<t$ twice (both for the $t-1$ and $t$ iterations) we get that $b_{t+1} \geq 0$. If $\norm{\tilde{\bw}_{t-1}} > 0.4$ and $b_{\bw_t} \geq 0$, then again this is the same case as in the case of $t'<t$ (since $\norm{\tilde{\bw}_t} \leq 0.4$. The last case is when $\norm{\tilde{\bw}_{t-1}} > 0.4$ and $b_{\bw_t} <0$, here using the same calculation as in Case II, we have that $\norm{\tilde{\bw}_t} \geq 0.39$. Since $\norm{\tilde{\bw}_t} \leq \min\left\{0.4, \frac{2\tau}{5} \right\}$, using \propref{prop: bound norm grows}, the norm of $\tilde{\bw}_t$ can only grow, hence by the same reasoning as in Case I we can also bound $b_{t+1}< 2.4\max\left\{1,\frac{1}{\sqrt{\tau}} \right\}$.

% If $0.4 > \frac{2\tau}{5}$ then  $\norm{\tilde{\bw}_t} < 0.4$, hence by the case of $t'<t$ we also have that $b_{\bw_t} \geq 0$,

% then by the calculation of the step size in the previous case, we must have that $\norm{\tilde{\bw}_t} \geq 0.39$. If $b_{\bw_t} \geq 0$, then again by the case of $t'<t$ we must have that $b_{\bw_{t+1}} \geq 0$. Otherwise, $b_{\bw_t} < 0$, which means that 

% We conclude that since $\norm{\tilde{\bw}_t} \leq 0.4 $ and $b_{\bw_t} \geq 0$, by the same reasoning as the case of $t<t'$ we have that $b_{t+1} =0$.

\begin{figure}
    \centering
    \includegraphics[width=3.5in]{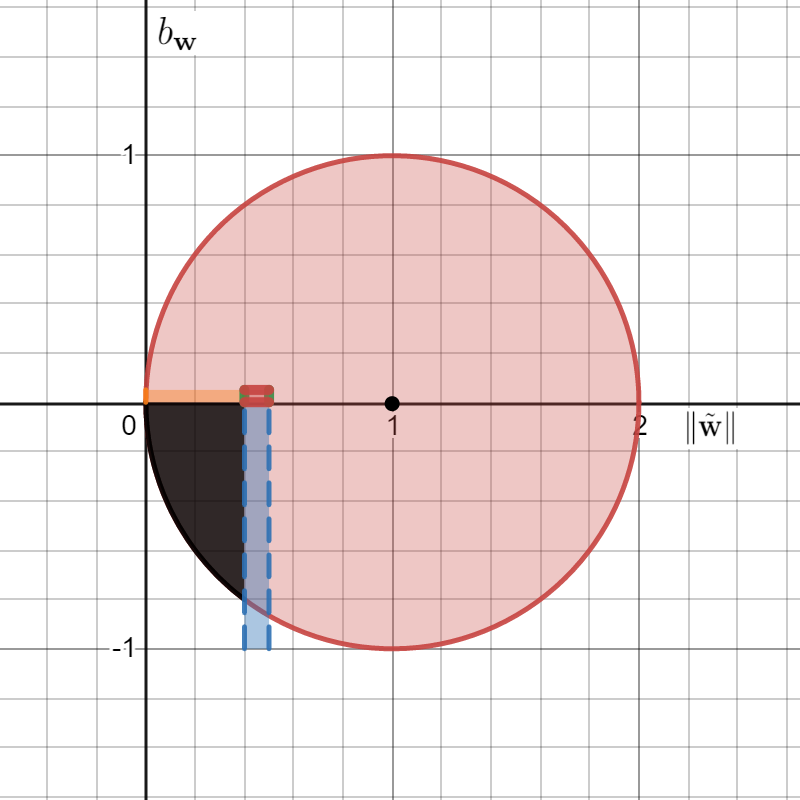}    \caption{A 2-d illustration of the optimization landscape. The $x$ axis represents $\norm{\tilde{\bw}}$, and the $y$-axis represents $b_\bw$. In the figure, for simplicity, we assume that $b_\bv=0$, and $\tau=0.1$ which means that $\frac{2\tau}{5}=0.4$. The red circle represents the area with $\norm{\bw-\bv} \leq 1$, throughout the optimization process $\bw_t$ stays in this circle.
    The black region represents the area where $b_t = -\frac{b_\bw}{\norm{\tilde{\bw}}}$ can be potentially large, our goal is to show that $\bw_t$ stays out of this region. Case I shows that $\bw_t$ cannot cross the blue region. Case II shows that if $\bw_t$ is to the right of the black region, then $b_t$ is upper bounded. Case III shows that $\bw_t$ cannot cross the orange region (sub-cases (a) and (b)), and cannot cross from the green region directly to the black region (sub-case (c)).}
    \label{fig:bound_b_t}
\end{figure}

Until now we have proven that throughout the entire optimization process we have that $\theta(\tilde{\bw}_{t},\tilde{\bv}) \leq \frac{\pi}{2}$ and $b_t \leq 2.4\cdot  \max\left\{1, \frac{1}{\sqrt{\tau}}  \right\}$. Let $\delta = \pi - \theta(\tilde{\bw}_{t},\tilde{\bv})$, we now use \thmref{thm:innerprod} and \eqref{eq:bound on norm of grad} to get that:
\begin{align}
    \norm{\bw_{t+1} - \bv}^2  =& \norm{\bw_t - \eta\nabla F(\bw_t) - \bv}^2  \nonumber\\
    = & \norm{\bw_t - \bv}^2 -2\eta\inner{\nabla F(\bw_t),\bw_t-\bv} + \eta^2 \norm{\nabla F(\bw_t)}^2 \nonumber\\
    \leq & \norm{\bw_t - \bv}^2 - 2\eta\frac{\left(\alpha - \frac{b_t}{\sin\left(\frac{\delta}{2}\right)}\right)^4\beta}{8^4\alpha^2}\sin\left(\frac{\delta}{4}\right)^3\norm{\bw_t - \bv}^2 + \eta^2c\norm{\bw_t - \bv}^2 \nonumber \\
    \leq & \norm{\bw_t - \bv}^2 - \eta\frac{\left(\alpha - \sqrt{2}b_t\right)^4\beta}{8^4\alpha^2}\sin\left(\frac{\delta}{4}\right)^3\norm{\bw_t - \bv}^2 + \eta^2c\norm{\bw_t - \bv}^2 \nonumber \\
    \leq & \norm{\bw_t - \bv}^2 - \frac{\eta\tilde{C}\beta}{\alpha^2}\norm{\bw_t - \bv}^2 + \eta^2c\norm{\bw_t - \bv}^2
\end{align}
where $\tilde{C}$ is some universal constant, and we used the bounds from the induction above that $\delta \in \left[\frac{\pi}{2},\pi\right]$, $b_t \leq 2.4\cdot \max\left\{1, \frac{1}{\sqrt{\tau}}  \right\}$, and by the assumption that $\alpha \geq 2.5\sqrt{2}\max\left\{1, \frac{1}{\sqrt{\tau}}  \right\}$. By choosing $\eta \leq \frac{\tilde{C}\beta}{2c\alpha^2}$, and setting $\lambda = \frac{\tilde{C}\beta}{2c\alpha^2}$ we get that:
\begin{align*}
& \norm{\bw_t - \bv}^2 - \eta\tilde{C}\beta\min\left\{1,\frac{1}{\alpha^2}\right\}\norm{\bw_t - \bv}^2 + \eta^2c\norm{\bw_t - \bv}^2    \\
\leq & (1-\lambda\eta)\norm{\bw_t - \bv}^2 \leq \dots \leq (1-\eta\lambda)^{t}\norm{\bw_0 - \bv}^2 ~,
\end{align*}
which finished the proof.

\end{proof}

\end{document}